\theoremstyle{definition}
\newtheorem{definition}{Definition}
\newtheorem{theorem}{Theorem}
\newtheorem{lemma}{Lemma}
\newcommand{\lidy}[1]{\textcolor{blue}{#1}}
\icmltitlerunning{Community-Invariant Graph Contrastive Learning}
\begin{document}

\twocolumn[
\icmltitle{Community-Invariant Graph Contrastive Learning}




\icmlsetsymbol{equal}{*}
\icmlsetsymbol{corre}{$\dagger$}

\begin{icmlauthorlist}
\icmlauthor{Shiyin Tan}{equal,to}
\icmlauthor{Dongyuan Li}{equal,to}
\icmlauthor{Renhe Jiang}{corre,goo}
\icmlauthor{Ying Zhang}{ed}
\icmlauthor{Manabu Okumura}{to}
\end{icmlauthorlist}

\icmlaffiliation{to}{Tokyo Institute of Technology}
\icmlaffiliation{goo}{The University of Tokyo}
\icmlaffiliation{ed}{RIKEN \& Tohoku University}

\icmlcorrespondingauthor{Renhe Jiang}{jiangrh@csis.u-tokyo.ac.jp}

\icmlkeywords{Machine Learning, ICML}

\vskip 0.3in
]
\printAffiliationsAndNotice{\icmlEqualContribution}

\begin{abstract}

Graph augmentation has received great attention in recent years for graph contrastive learning (GCL) to learn well-generalized node/graph representations.
However, mainstream GCL methods often favor randomly disrupting graphs for augmentation, which shows limited generalization and inevitably leads to the corruption of high-level graph information, i.e., the graph community.
Moreover, current knowledge-based graph augmentation methods can only focus on either topology or node features, causing the model to lack robustness against various types of noise.
To address these limitations, this research investigated the role of the graph community in graph augmentation and figured out its crucial advantage for learnable graph augmentation. Based on our observations, we propose a community-invariant GCL framework to maintain graph community structure during learnable graph augmentation. By maximizing the spectral changes, this framework unifies the constraints of both topology and feature augmentation, enhancing the model's robustness.
Empirical evidence on 21 benchmark datasets demonstrates the exclusive merits of our framework. 
Code is released on Github [{\href{https://github.com/ShiyinTan/CI-GCL.git}{https://github.com/CI-GCL.git}}].
\end{abstract}

\section{Introduction}
\label{sec:introduction}



Graph representation learning on graph-structured data, such as molecules and social networks, has become one of the hottest topics in AI~\cite{DBLP:conf/icml/CaoCLJ23}. Typical GNNs~\cite{DBLP:conf/iclr/KipfW17} require large-scale task-specific labels, which are expensive and labor-intensive to collect. To alleviate this, graph contrastive learning (GCL) has been proposed as one of the most successful graph representation learning methods, drawing a lot of attention~\cite{DBLP:conf/icml/LiWZW0C22}. The main goal of GCL is to maximize the agreement of node representations between two augmented views to capture graph invariance information~\cite{DBLP:conf/nips/Tian0PKSI20}. Among various GCL variations, effective graph augmentation turns out to be the bread and butter for achieving success~\cite{DBLP:conf/icml/WeiWBNBF23}. Early studies almost adopt random graph augmentation, such as randomly dropping edges or masking features~\cite{DBLP:conf/nips/YouCSCWS20}. Researchers also attempt to incorporate expert knowledge to guide graph augmentation. For instance, GCA~\cite{DBLP:conf/www/0001XYLWW21} and MoCL~\cite{DBLP:conf/kdd/SunXWCZ21} use network science or biomedical knowledge to constrain edge dropping probabilities. 
However, such random or knowledge-based graph augmentations are sensitive to different datasets~\cite{DBLP:conf/aaai/0001SPZY23} and may yield suboptimal performance~\cite{DBLP:conf/aaai/YinWHXZ22}.

To achieve better generalization and globally optimal performance, 
learnable graph augmentation is proposed to automatically disrupt redundant information as much as possible to share minimal yet sufficient core information between augmented views~\cite{DBLP:conf/nips/TongLDDLW21,suresh2021adversarial}. Although they have achieved great success, there still remain two open challenges worth exploring. (1) Community structure plays a crucial role in various downstream tasks, such as node classification and link prediction~\cite{DBLP:conf/www/LiJT22,DBLP:conf/ijcai/ChenZLZ0Z23}.
However, current GCL methods often randomly disrupt graphs during graph augmentation, which inevitably leads to the corruption of high-level graph information (i.e., community) and limits the generalization~\cite{DBLP:conf/focs/ChiplunkarKKMP18}.
(2) Current constraints employed in learnable graph augmentation methods primarily focus either on topology or node features~\cite{DBLP:conf/icml/LiWZW0C22}. For instance, GAME~\cite{DBLP:conf/icml/WeiWBNBF23} and GCL-SPAN~\cite{lin2023spectral} use spectrum-based constraints for topology augmentation. Due to the asymmetry of the feature matrix, their methods cannot be extended to feature augmentation.
On the other hand, COSTA~\cite{DBLP:conf/kdd/ZhangZSKK22} designs a covariance-preserving constraint for feature augmentation, which, however, lacks effectiveness in topology augmentation.
By solely focusing on one type of graph augmentation (topology or feature), models may not fully exploit all available information and lack robustness against different types of noise~\cite{DBLP:conf/icml/LiuYDLXRZHW22}.

To solve the aforementioned issues, we propose a general learnable \underline{C}ommunity-\underline{I}nvariant \underline{GCL} framework (\textbf{CI-GCL}), which unifies constraints from both topology and feature augmentation to maintain CI for learnable graph augmentation.
Specifically, when considering topology augmentation with a certain degree of disruption, we observe a nearly negative correlation between community and spectral changes (see Sec~\ref{pre-experiment}).
Therefore, to maximize the topology perturbation while ensuring community invariance, we can simply maximize graph spectral changes during topology augmentation.
%
To extend our CI constraint to feature augmentation, we convert the feature matrix into a symmetric bipartite feature matrix based on the bipartite graph co-clustering technique~\cite{DBLP:journals/pami/ZhangNL23}. This approach converts feature augmentation into bipartite feature augmentation, while elucidating the importance of features in maintaining community structure.
%
For bipartite feature augmentation, we also observed a negative relationship between community and spectral changes, which is consistent with topology augmentation. This motivates us to apply our CI constraint to feature augmentation by maximizing graph spectral changes during bipartite feature augmentation. To summarize, 
the contributions of this research are:
\begin{itemize}
    \item We propose a learnable CI-GCL framework to automatically maintain CI during graph augmentation by maximizing spectral change loss, improving the model's downstream performances.
    \item We theoretically show that the proposed CI constraint can be applied to both topology and feature augmentation, enhancing the model's robustness.
    \item Experiments on 21 widely used benchmarks demonstrate the effectiveness and robustness of CI-GCL.
\end{itemize}

\begin{table}[h]
\centering
\scriptsize
\caption{Graph augmentation (Aug.) method comparison. An ideal method should support both topology and node feature augmentation, be adaptive to different datasets, be end-to-end differentiable and have efficient back-propagation (BP), and be CI and have unified constraints for any augmentation to against various noise.}
\setlength{\tabcolsep}{0.6mm}{
\begin{tabular}{@{}lcccccccc@{}}
\toprule
\multirow{2}{*}{Property} & \multicolumn{3}{c}{Random or Constraint} &  & \multicolumn{4}{c}{Learnable Graph Augmentation}        \\ \cmidrule(lr){2-4} \cmidrule(l){6-9} 
& GraphCL    & JOAO    & GCA    &  & AutoGCL & AD-GCL  & GCL-SPAN & Ours \\ 
                               \midrule
Topology Aug.             & \lidy{\ding{51}}             & \lidy{\ding{51}}          & \lidy{\ding{51}}            &  & \lidy{\ding{51}}        & \lidy{\ding{51}}       & \lidy{\ding{51}}    & \lidy{\ding{51}}             \\
Feature Aug.              & \lidy{\ding{51}}             & \lidy{\ding{51}}          & \lidy{\ding{51}}             &  & \lidy{\ding{51}}        & -      & -   & \lidy{\ding{51}}            \\
Adaptive                  & -             & \lidy{\ding{51}}          & \lidy{\ding{51}}            &  & \lidy{\ding{51}}        & \lidy{\ding{51}}       & \lidy{\ding{51}}    & \lidy{\ding{51}}             \\
Differentiable            & -             & -          & -             &  & \lidy{\ding{51}}        & \lidy{\ding{51}}       & -   & \lidy{\ding{51}}             \\
Efficient BP              & -             & -          & -             &  & \lidy{\ding{51}}        & \lidy{\ding{51}}       & -   & \lidy{\ding{51}}           \\
Community      & -             & -          & -             &  & -       & -      & -   & \lidy{\ding{51}}            \\
Unified Constraint        & -             & -          & -             &  & -       & -      & -   & \lidy{\ding{51}}             \\ \bottomrule
\end{tabular}}\label{tab: summary}
\end{table}

\section{Related Work}
\label{related_work}

As an effective self-supervised learning paradigm, contrastive learning has achieved great success to learn text or image representations~\cite{DBLP:conf/icml/ChenK0H20,DBLP:conf/nips/ZhangZLZH20}.
DGI~\cite{DBLP:conf/iclr/VelickovicFHLBH19} first adopted contrastive learning to 
learn robust graph representations that are invariant to various noise and operations.
However, different from Euclidean or sequential data, graphs are irregular non-Euclidean data and sensitive to minor structural augmentation~\cite{DBLP:conf/aaai/0001SPZY23}, resulting in learned graph representations being ineffective.
Given that among many GCL variations, graph augmentation shows its crucial advantage for graph representation learning, many studies attempted to investigate effective graph augmentation for GCL.
Prior GCL almost adopts random graph augmentation.
For example, 
GRACE~\cite{zhu2020deep} firstly 
uses random edge dropping and feature masking as graph augmentations. 
After that, GraphCL~\cite{DBLP:conf/nips/YouCSCWS20} gives an extensive study on different combinations of graph augmentations including randomly node dropping, edge perturbation, subgraph sampling, and feature masking. 
To make GraphCL more flexible, JOAO~\cite{DBLP:conf/icml/YouCSW21} automatically selects the combination of different random graph augmentations.
Due to the limited generalization of random augmentation, researchers start to incorporate expert knowledge as constraints for graph augmentation.
For instance, 
\citet{DBLP:conf/icml/DuanVPRM22} and GraphAug~\cite{DBLP:conf/iclr/LuoMAKUMJ23} employ label-invariance between original and augmented views as constraints, which achieves great success in the graph-level classification task.

Recent GCL focuses on fully parameterizing graph augmentation for utilizing learnable graph augmentation to automatically determine how to disrupt 
graphs~\cite{DBLP:conf/www/ChenZ0TX23}.
For example,
AutoGCL~\cite{DBLP:conf/aaai/YinWHXZ22} and AD-GCL~\cite{suresh2021adversarial} build a learnable graph generator that learns a probability distribution to help adaptively drop nodes and mask features.
CGI~\cite{wei2022contrastive} introduces the Information Bottleneck theory into GCL 
to remove unimportant nodes and edges between two augmented graphs by minimizing shared mutual information.
GAME~\cite{DBLP:conf/nips/Liu0BSP22} and GCL-SPAN~\cite{lin2023spectral} explore graph augmentation in spectral space, by maximizing spectral changes of high-frequency 
or all components to automatically drop edges.
AdaGCL~\cite{DBLP:conf/kdd/Jiang0H23} and GACN~\cite{DBLP:conf/kdd/WuWXLZWSG23} design graph generators and discriminators to automatically augment graphs in an adversarial style.

Compared with previous studies, we are the first to point out the importance of community invariance for graph augmentation and propose a unified CI constraint for both topology and feature augmentation by simply maximizing spectral changes.
Detailed comparisons are listed in Table~\ref{tab: summary}. 

\section{Preliminary}

Let $G=(\mathbf{X}, \mathbf{A})$ be a graph with $n$ nodes and $m$ edges, 
where $\mathbf{X} \in \mathbb{R}^{n \times d}$ describes node features 
and $\mathbf{A} \in \{0,1\}^{n \times n}$  denotes an adjacency matrix 
with $\mathbf{A}_{i j}=1$ if an edge exists between node $i$ and $j$, otherwise $\mathbf{A}_{i j}=0$. 
The normalized Laplacian matrix is defined as $\mathbf{L}_{\text{norm}} = \text{Lap}(\mathbf{A})=\mathbf{I}_{n}-\mathbf{D}^{-1 / 2} \mathbf{A} \mathbf{D}^{-1 / 2}$, where $\mathbf{I}_{n} \in \mathbb{R}^{n \times n}$ is an identity matrix, $\mathbf{D}=\text{diag}\left(\mathbf{A} \mathbf{1}_{n}\right)$ is the diagonal degree matrix with $\mathbf{1}_{n} \in \mathbb{R}^{n}$ being an all-one vector.

\noindent \textbf{Graph Spectrum.} The spectral decomposition of $\mathbf{L}_{\text{norm}}$ is defined as $\mathbf{L}_{\text {norm }}= \text{Lap}(\mathbf{A}) = \mathbf{U} \boldsymbol{\Lambda} \mathbf{U}^{\top}$, where 
the diagonal matrix $\boldsymbol{\Lambda}=\text{eig}\left(\text{Lap}\left(\mathbf{A}\right)\right)= \text{diag}\left(\lambda_{1}, \ldots, \lambda_{n}\right)$ consists of real eigenvalues known as graph spectrum, and $\mathbf{U}=\left[\mathbf{u}_{1}, \ldots, \mathbf{u}_{n}\right] \in \mathbb{R}^{n \times n}$ are the corresponding orthonormal eigenvectors known as the spectral bases~\cite{golub2013matrix}.

\noindent \textbf{Graph Representation Learning.}
Let $\mathcal{G}$ denote the whole graph space with $G \in \mathcal{G}$.
Graph representation learning aims to train an encoder $f_{\theta}(\cdot): \mathcal{G} \to \mathbb{R}^{n \times d'}$ to obtain node representations. 
%
Then, it trains a readout function $r_{\phi}(\cdot): \mathbb{R}^{n \times d'} \to \mathbb{R}^{d'}$ by pooling all node representations to obtain a low-dimensional vector for graph $G$, which can be used in graph-level tasks.

\noindent \textbf{Graph Contrastive Learning.} 
GCL trains the encoder $f_{\theta}(\cdot)$ to capture the maximum mutual information between the original graph and its perturbed view by graph augmentation.
Formally, letting 
$T_{1}(G)$ and $T_{2}(G)$ denote two graph augmentation distributions of $G$, GCL is defined as follows: 
\begin{equation}
    \mathop{\min}_{\theta,\phi}\,\, \mathcal{L}_{\text{GCL}}(t_{1}(G), \, t_{2}(G), \, \theta,\phi),
\end{equation}
where $t_{m}(G) \sim T_{m}(G)$ with $m \in \{1,2\}$ and
$\mathcal{L}_{\text{GCL}}$ measure the disagreement between two augmented graphs.

\section{Methodology}\label{Sec:methodology}
We first show the importance of community invariance in GCL with preliminary analysis. Then, we introduce the details of our methodology CI-GCL as illustrated by Figure~\ref{fig:overall}. 

\subsection{Preliminary Analysis}\label{pre-experiment}
%

\begin{figure}[h]
	\centering
	\includegraphics[scale=0.248]{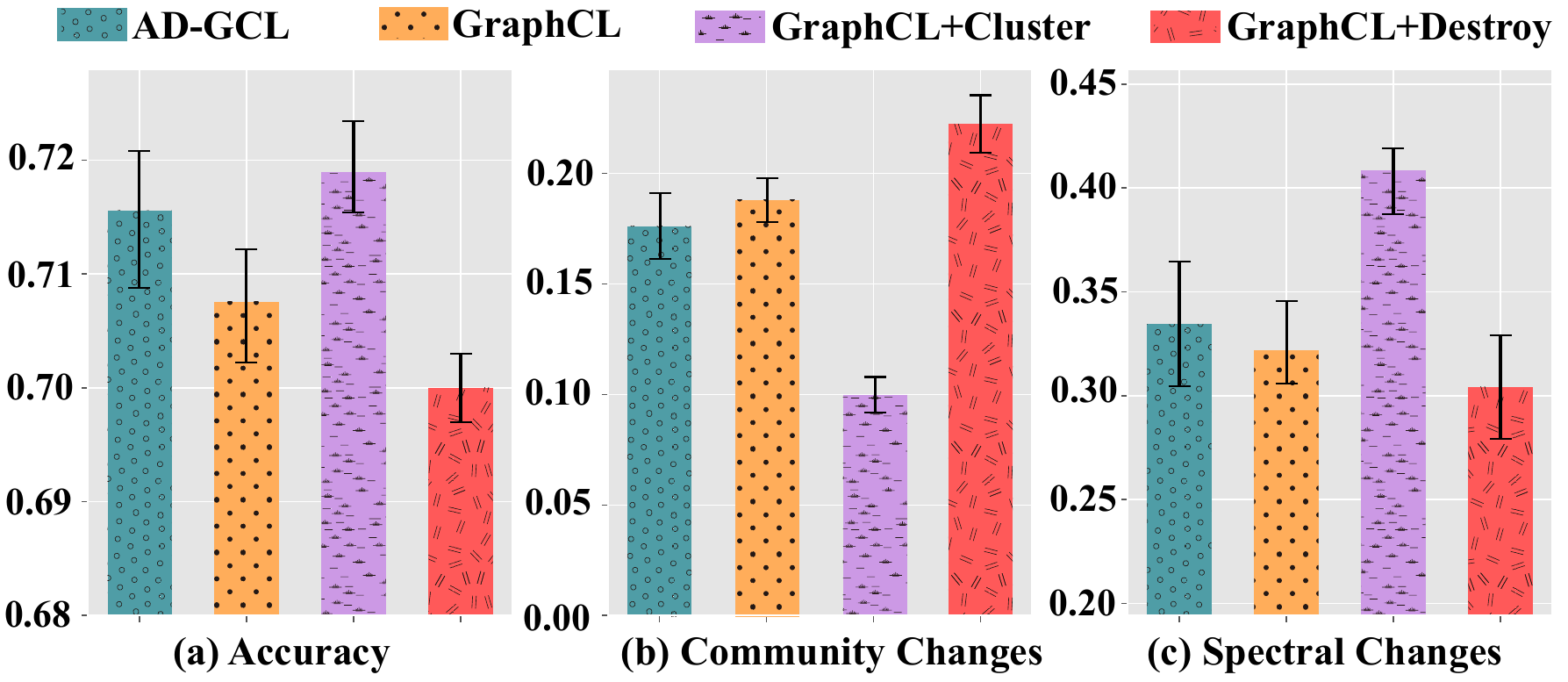}
	\caption{In unsupervised graph classification,
 we define community changes as the average ratio of the changed community labels over the number of nodes before and after graph augmentation by spectral clustering. Spectral changes are the eigenvalue changes between original and augmented graphs, using the $L_{2}$ distance.}
    \label{fig:pre}
\end{figure}


Preserving community structure is crucial for learnable graph augmentation, i.e.,  perturbing a constrained number of edges or features that have least impact to community changes of the input graph. To show the benefits of preserving communities, we conduct a preliminary experiment by applying GraphCL (unlearnable graph augmentation) and AD-GCL (learnable graph augmentation) on the IMDB-B dataset. Specifically, we design the following four methods: (1) AD-GCL with uniformly edge dropping; (2) GraphCL with uniformly edge dropping; (3) GraphCL+Cluster augmentation that removes edges between different clusters with a higher probability; (4) GraphCL+Destroy augmentation that removes edges within the same cluster with a higher probability. Note that (3) preserves community structure, while (4) tends to disrupt community structure, as indicated by recent studies~\cite{DBLP:conf/focs/ChiplunkarKKMP18,10.1145/3534678.3539435}.
We plot the accuracy for unsupervised graph classification as Figure~\ref{fig:pre}(a) and the community changes as (b). From Figure~\ref{fig:pre}(a-b), we can observe: (1) Methods' performance generally exhibits a nearly negative correlation with their community changes (i.e., less cluster changes yield higher accuracy); (2) AD-GCL outperforms GraphCL but underperforms GraphCL+Cluster. All these indicate preserving the community structure yields better results. Moreover, we also draw the spectral changes as Figure~\ref{fig:pre}(c), as graph spectrum can reflect high-level graph structural information~\cite{spielman2012spectral,DBLP:journals/jacm/LeeGT14}. Through Figure~\ref{fig:pre}(b-c), we can see that spectral changes are almost negatively correlated with community changes. That is to say, we can preserve community invariance during graph augmentation by maximizing spectral changes, based on which we expand on our methodology as follows.



\begin{figure*}[t]
\centering
\includegraphics[scale=0.52]{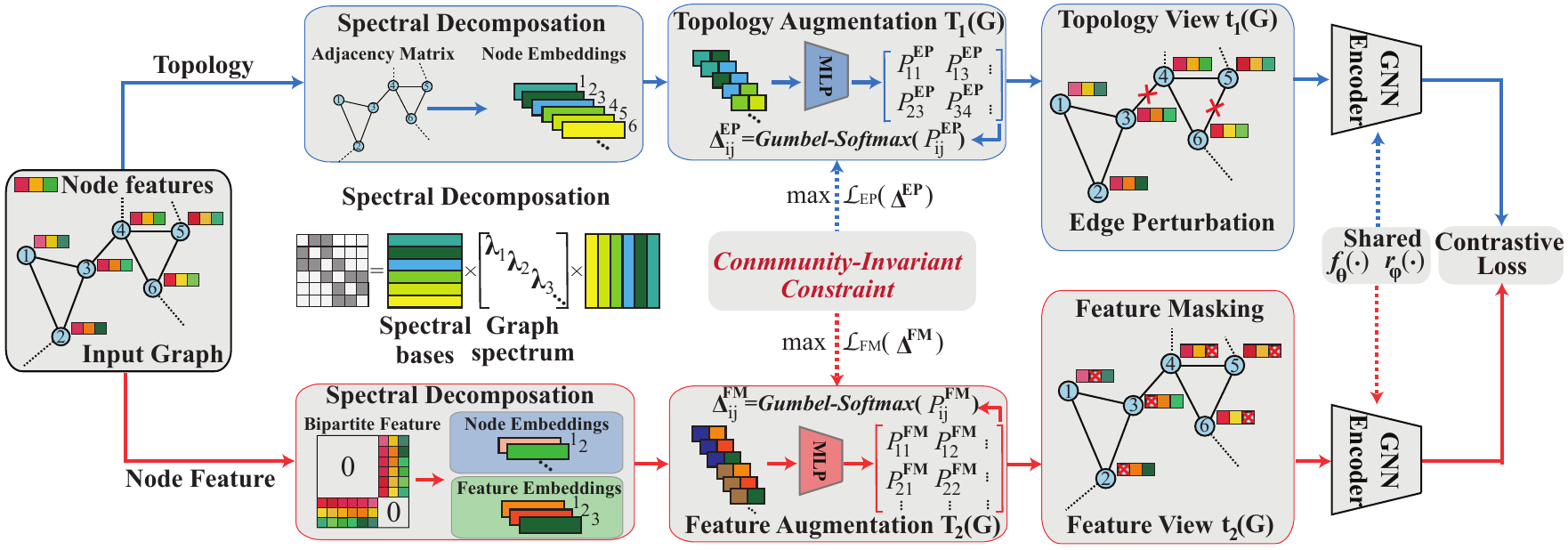}
\caption{The proposed CI-GCL consists of two core components:
(1) Learnable graph augmenter optimizes $T_{m}(G)$ to disrupt redundant information while ensuring community invariance from the original graph.
(2) The GNN encoder $f_{\theta}(\cdot)$ and Readout $r_{\phi}(\cdot)$ maximize the mutual information between two augmented graphs by contrastive loss. 
We use edge dropping and feature masking as an instantiation.}
\label{fig:overall}
\end{figure*}




\subsection{Community-Invariant Graph Augmentation}
\label{sec:spectral}

\textbf{Topology Augmentation.} 
We conduct edge perturbation and node dropping operations as our topology augmentation.
For edge perturbation, we define $T_{1}(G)$ as a Bernoulli distribution $Bern(P^{\text{EP}}_{ij})$ for each edge $A_{ij}$.
Then, we can sample the edge perturbation matrices $\mathbf{\Delta}^{\text{EP}} \in \{0,1\}^{n \times n}$, where $\Delta_{ij} \sim Bern(P^{\text{EP}}_{ij})$ indicates whether to flip the edge between nodes $i$ and $j$. 
The edge is flipped if $\Delta^{\text{EP}}_{ij}=1$; otherwise, it remains unchanged. A sampled augmented graph by edge augmentation can be formulated as:
\begin{equation}\label{T1}
    t_{1}^{\text{EP}}(G) = \mathbf{A} + \mathbf{C} \circ \mathbf{\Delta^{\text{EP}}},\,\,\mathbf{C} = \mathbf{A^{c}} - \mathbf{A},
\end{equation}
where $\circ$ denotes an element-wise product and $\mathbf{A^{c}}$ represents the complement matrix of $\mathbf{A}$, calculated as $\mathbf{A^{c}}=\mathbf{J}-\mathbf{I}_{n}-\mathbf{A}$, where $\mathbf{J}$ denotes an all-one matrix. 
Thus, $\mathbf{C} \in \{-1,1\}^{n \times n}$ denotes all edge flipping operations, i.e., an edge is added between nodes $i$ and $j$ if $C_{ij}=1$, and removed if $C_{ij}=-1$. 

However, Eq.(\ref{T1}) cannot be directly applied to learnable graph augmentation, since Bernoulli sampling is non-differentiable. 
Inspired by \citet{jang2017categorical}, we soften it from the discrete Bernoulli distribution space to the continuous space with a range $\Delta^{\text{EP}}_{ij}\in (0,1)^{n \times n}$ using \textit{Gumbel-Softmax}, which can be formulated as:
%
\begin{align}\label{eq:node_dropping_probability}
\Delta^{\text{EP}}_{ij}(\epsilon) &=\textrm{Softmax}((\log (P^{\text{EP}}_{ij})+\epsilon)/\tau),\\
P^{\text{EP}}_{ij}&= \text{Sigmoid} (\text{MLPs} (\text{Concat}(\mathbf{e}_{i},\mathbf{e}_{j}))),
\end{align}
where $P^{\text{EP}}_{ij}$ controls whether to flip edge $A_{ij}$, MLPs are multilayer perceptions,  
$\mathbf{e}_{i}$ is the $i$-th node representation, $\text{Concat}(\cdot , \cdot)$ denotes the concatenation operation, $\epsilon \sim {Gumbel}(0,1)$,\footnote{The $Gumbel(0,1)$ distribution can be sampled by calculating $\epsilon=-\text{log}(-\text{log}(u))$ with $u \sim \text{Uniform}(0,1)$~\cite{jang2017categorical}.}  
and $\tau$ is the temperature factor that controls the approximation degree for the discrete categorical distributions. 
$\tau > 0$ results in a well-defined gradient $\partial \Delta^{\text{EP}}_{ij}/\partial P^{\text{EP}}_{ij}$, facilitating efficient optimization.

Node dropping can be considered as a type of edge dropping, i.e., removing all edges connected to this node. Thus, node dropping can be formulated similarly to Eq.(\ref{T1}) as: 
\begin{equation}\label{eq:T2}
 t_{1}^{\text{ND}}(G) = \mathbf{A} + \mathbf{(-A)} \circ \mathbf{\Delta}^{\text{ND}}.   
\end{equation}
where $\mathbf{\Delta}^{\text{ND}}$ can be calculated by:
\begin{align}
\mathbf{\Delta}^{\text{ND}} &= (\mathbf{\Psi}^{\text{ND}} \cdot \textbf{1}_{n}^{\top} + (\mathbf{\Psi}^{\text{ND}} \cdot \textbf{1}_{n}^{\top})^{\top})/2, \\
    \Psi^{\text{ND}}_{i}(\epsilon) &= \textrm{Softmax}((\log(P^{\text{ND}}_{i}) + \epsilon)/\tau),  \\  
   P^{\text{ND}}_{i} &= \textrm{Sigmoid}(\text{MLPs}(\mathbf{e}_{i})).
\end{align}
We combine $t_{1}^{(\text{EP},\text{ND})}(G)$ as topology augmentation $t_{1}(G)$.

\textbf{CI-based Topology Augmentation.}
Inspired by the findings in Sec~\ref{pre-experiment}, 
we aim to optimize $\mathbf{\Delta}$ by simultaneously maximizing graph disruption while minimizing community changes for learnable topology augmentation in Eqs.(\ref{T1},\ref{eq:T2}).
Based on the matrix perturbation theory~\cite{DBLP:conf/icml/BojchevskiG19}, we have the following definition.

\begin{definition}\label{eigenvalue}
Let $\lambda_{k}$ denote the $k$-th eigenvalue of the spectral decomposition of $\textrm{Lap}(\mathbf{A}) =  \mathbf{U} \Lambda \mathbf{U}^{\top}$. 
For a single edge perturbation $A_{ij}$, it induces absolute changes in eigenvalues given by $\sum_{k=1}^{n}  |\Delta \lambda_{k}|   = \sum_{k=1}^{n} | (U_{i k}-U_{j k})^{2} +(\lambda_{k}-1)(U_{i k}^{2}+U_{j k}^{2})|$, and $\Delta \lambda_{k}$ denotes the $k$-th spectral change.  
\end{definition}

When optimizing $\sum_{k=1}^{n}  |\Delta \lambda_{k}| $ in Definition~\ref{eigenvalue}, we argue that 
maintaining community invariance requires a categorized discussion on edge adding and edge dropping.

\begin{theorem}\label{the:abslute}
The absolute spectral changes $\sum_{k=1}^{n}  |\Delta \lambda_{k}|$ are upper bounded by $\left\|\mathbf{U}_{i \cdot}-\mathbf{U}_{ j \cdot}\right\|^{2}_{2} + \sum_{k=1}^{n} |\lambda_{k}-1| $ and lower bounded by $\left\|\mathbf{U}_{ i \cdot}-\mathbf{U}_{j \cdot}\right\|^{2}_{2}  - \sum_{k=1}^{n} |\lambda_{k}-1|$, respectively. 
Here, $\mathbf{U}_{i \cdot}$ represents the $i$-th row vector of $\mathbf{U}$, denoting the $i$-th node embedding in the spectral space.
\end{theorem}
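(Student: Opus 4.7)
The plan is to work directly from the per-edge spectral change formula in Definition~\ref{eigenvalue}, apply the triangle inequality (forward for the upper bound, reverse for the lower bound) term-by-term, and then exploit the orthonormality of the eigenbasis $\mathbf{U}$ to collapse the resulting sums into the exact form claimed. The key structural observation is that each $\Delta\lambda_k$ splits as a non-negative piece $(U_{ik}-U_{jk})^2$ plus a signed piece $(\lambda_k-1)(U_{ik}^2+U_{jk}^2)$; summing the first piece over $k$ reproduces $\left\|\mathbf{U}_{i\cdot}-\mathbf{U}_{j\cdot}\right\|_2^2$ exactly, while the second must be controlled against $\sum_k|\lambda_k-1|$.

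For the upper bound, I would apply $|a+b|\leq|a|+|b|$ with $a=(U_{ik}-U_{jk})^2\geq 0$ to obtain
\begin{equation*}
|\Delta\lambda_k|\leq (U_{ik}-U_{jk})^2 + |\lambda_k-1|(U_{ik}^2+U_{jk}^2),
\end{equation*}
sum over $k$, and identify the first summand with $\left\|\mathbf{U}_{i\cdot}-\mathbf{U}_{j\cdot}\right\|_2^2$. To handle the second summand, I would invoke the fact that $\mathbf{U}$ is orthogonal, so each column has unit Euclidean norm, i.e., $\sum_{l=1}^{n} U_{lk}^{2}=1$ for every $k$. Consequently $U_{ik}^2+U_{jk}^2\leq 1$ for every $k$, which gives $\sum_k |\lambda_k-1|(U_{ik}^2+U_{jk}^2)\leq \sum_k |\lambda_k-1|$. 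Adding the two contributions yields the claimed upper bound.

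For the lower bound, I would use the reverse triangle inequality $|a+b|\geq |a|-|b|$ term-by-term, giving $|\Delta\lambda_k|\geq (U_{ik}-U_{jk})^2-|\lambda_k-1|(U_{ik}^2+U_{jk}^2)$; this inequality stays valid even when its RHS turns negative, since the LHS is non-negative. Summing and reapplying the column-orthonormality bound $U_{ik}^2+U_{jk}^2\leq 1$ produces $\sum_k|\Delta\lambda_k|\geq \left\|\mathbf{U}_{i\cdot}-\mathbf{U}_{j\cdot}\right\|_2^2-\sum_k|\lambda_k-1|$. The main and essentially only non-routine step is this exploitation of column orthonormality: without the tight pointwise constraint $U_{ik}^2+U_{jk}^2\leq 1$, a naive use of $|U_{ik}|\leq 1$ would leave an extraneous multiplicative factor (up to $2$) on $\sum_k|\lambda_k-1|$ and fail to match the theorem's exact statement. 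Recognizing and deploying this constraint is what carries the argument; everything else is bookkeeping with the triangle inequality.
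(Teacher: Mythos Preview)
Your proposal is correct and follows essentially the same approach as the paper: apply the (reverse) triangle inequality to split each $|\Delta\lambda_k|$ into the non-negative piece $(U_{ik}-U_{jk})^2$ and the signed piece, then use the column-orthonormality bound $U_{ik}^2+U_{jk}^2\leq\sum_{l=1}^n U_{lk}^2=1$ to control the second sum by $\sum_k|\lambda_k-1|$. The paper's write-up arrives at the same bound via the intermediate step $u_{yi}^2+u_{yj}^2\leq u_{y1}^2+\cdots+u_{yn}^2=1$, which is exactly the observation you highlight as the non-routine step.
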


According to Theorem~\ref{the:abslute}, maximizing spectral changes equates to maximizing their upper bound, i.e., flipping several edges between nodes with largest distances in spectral space. 
\citet{DBLP:conf/cvpr/ShiM97} states that node representations with larger distances always belong to different communities.
Thus, we can maximize spectral changes during the edge dropping to preserve community invariance.
However, we cannot conduct edge adding since adding edges between clusters always disrupts communities~\cite{DBLP:journals/tfs/ZhuZYN23}.
Conversely, 
minimizing spectral changes equates to minimizing their lower bound, i.e., flipping several edges between nodes with lowest distances, where nodes with lower distances always belong to the same cluster.
Thus, we can minimize spectral changes during the edge adding, instead of edge dropping, since dropping edges within one cluster will disrupt communities~\cite{DBLP:journals/tfs/ZhuZYN23}.
We formulate the CI constraint for edge perturbation by jointly optimizing edge dropping and adding as follows:
\begin{align}
&\max\limits_{\mathbf{\Delta}^{\text{ED}},\, 
\mathbf{\Delta}^{\text{EA}} \,\in \mathcal{S}}  \mathcal{L}_{\text{EP}}(\mathbf{\Delta}^{\text{EP}}) = \mathcal{L}_{\text{ED}}(\mathbf{\Delta}^{\text{ED}})\, - \,
\mathcal{L}_{\text{EA}}(\mathbf{\Delta}^{\text{EA}}), \label{over-2} \\
&\mathcal{L}_{\text{ED}}(\mathbf{\Delta}^{\text{ED}}) = \|\text{eig}(\text{Lap}(\mathbf{A}-\mathbf{A} \circ \mathbf{\Delta}^{\text{ED}}))-\text{eig}(\text{Lap}(\mathbf{A}))\|_{2}^{2}, \nonumber \\
&\mathcal{L}_{\text{EA}}(\mathbf{\Delta}^{\text{EA}}) = \|\text{eig}(\text{Lap}(\mathbf{A}+\mathbf{A^{c}} \circ \mathbf{\Delta}^{\text{EA}}))-\text{eig}(\text{Lap}(\mathbf{A}))\|_{2}^{2},\nonumber
\end{align}
where 
$\mathcal{S} = \{\mathbf{S}|\mathbf{S} \in [0,1]^{n \times n}, \|\mathbf{S}\|_{1} \leq \psi \}$, $\psi$ controls the perturbation strength, and $\mathcal{L}_(\mathbf{\Delta})$ represents graph spectral changes under different augmented operations.

Node dropping can be considered as one type of ED, which can be constrained by community invariance by:
\begin{equation}\label{over-3}
\max\limits_{\mathbf{\Delta}^{\text{ND}} \in \mathcal{S}} \|\text{eig}(\text{Lap}(\mathbf{A}-\mathbf{A} \circ \mathbf{\Delta}^{\text{ND}}))-\text{eig}(\text{Lap}(\mathbf{A}))\|_{2}^{2}.
\end{equation}
By jointly optimizing Eqs.(\ref{over-2},\ref{over-3}), we can maximize topology perturbation while maintaining community invariance.

\textbf{Feature Augmentation.} 
Similar to topology augmentation, we define $T_{2}(G)$ as a Bernoulli distribution $Bern(P^{\text{FM}}_{ij})$ for each feature $X_{ij}$.
Then, we can sample feature masking matrix $\Delta_{ij}^{\text{FM}} \sim Bern(P^{\text{FM}}_{ij})$, indicating whether to mask the corresponding feature. A sampled augmented graph by feature masking can be formulated as:
\begin{equation}\label{T3}
    t_{2}^{\text{FM}}(G) = \mathbf{X} + (-\mathbf{X}) \circ \mathbf{\Delta}^{\text{FM}}.
\end{equation}

\textbf{CI-based Feature Augmentation.}
Different from topology augmentation, $\mathbf{X} \in \mathbb{R}^{n \times d}$ is an asymmetric matrix lacking spectral decomposition. Theorem~\ref{the:abslute} is not applicable to feature augmentation. 
Moreover, discerning which feature has the least impact on community changes is challenging.
Inspired by the co-clustering of bipartite graph~\cite{DBLP:conf/nips/NieWDH17}, which can determine the importance of features for node clustering, we construct the feature bipartite graph as:
\begin{equation}
\widetilde{\mathbf{X}} = 
\begin{bmatrix}
    \mathbf{0} & \mathbf{X} \\
    \mathbf{X}^{\top} & \mathbf{0} \\
\end{bmatrix}_{(n+d)\times (n+d),}
\end{equation}
where the first $n$ rows of $\widetilde{\mathbf{X}}$ denote the original nodes, while the subsequent $d$ rows serve to represent features as feature nodes.
Then, $\widetilde{X}_{ij}$, where $i \in \{1,\cdots n\}$ and $j \in \{(n+1),\cdots, (n+d)\}$, can be interpreted as the linking weight 
between $i$-th node with $j$-th feature.

\begin{theorem}\label{the:2}
Let the singular value decomposition of the feature matrix $\mathbf{X}$ be denoted as $\text{svd}(\mathbf{D}_{u}^{-1/2}\mathbf{X}\mathbf{D}_{v}^{-1/2}) = \mathbf{U} \mathbf{\Lambda}_{1} \mathbf{V}^{\top}$ where $\mathbf{D}_{u}$ and $\mathbf{D}_{v}$ are the degree matrices of $\mathbf{X}$ and $\mathbf{X}^{\top}$, and $\mathbf{U}$ and $\mathbf{V}$ represent the left and right singular vectors, respectively. 
Then, $\text{eig}(\text{Lap}(\widetilde{\mathbf{X}})) = \mathbf{F} \mathbf{\Lambda}_{2} \mathbf{F}^{\top}$ where the $k$-th smallest eigenvector $\mathbf{F}_{\cdot k}$ is equal to the concatenation of $k$-th largest singular vectors: $\mathbf{F}_{\cdot k}=[\mathbf{U}_{\cdot k}; \mathbf{V}_{\cdot k}]$. 
\end{theorem}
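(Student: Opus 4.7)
The plan is to reduce the spectral decomposition of $\text{Lap}(\widetilde{\mathbf{X}})$ to the SVD of $\mathbf{B} := \mathbf{D}_{u}^{-1/2}\mathbf{X}\mathbf{D}_{v}^{-1/2}$ by exploiting the anti-diagonal block structure of $\widetilde{\mathbf{X}}$, and then to read off the eigenvectors column by column from the left and right singular vectors of $\mathbf{B}$.

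First I would compute the degree matrix of the bipartite graph. Since the first $n$ rows of $\widetilde{\mathbf{X}}$ are $[\mathbf{0}, \mathbf{X}]$ and the last $d$ are $[\mathbf{X}^{\top}, \mathbf{0}]$, the row sums give $\widetilde{\mathbf{D}} = \text{diag}(\mathbf{D}_{u}, \mathbf{D}_{v})$, which is block-diagonal. Substituting into the definition of the normalized Laplacian and multiplying out blocks yields
\begin{equation*}
\text{Lap}(\widetilde{\mathbf{X}}) = \mathbf{I}_{n+d} - \begin{bmatrix} \mathbf{0} & \mathbf{B} \\ \mathbf{B}^{\top} & \mathbf{0} \end{bmatrix} =: \mathbf{I}_{n+d} - \mathbf{M}.
\end{equation*}
Using $\mathbf{B} = \mathbf{U}\mathbf{\Lambda}_{1}\mathbf{V}^{\top}$, a direct block multiplication shows that for each singular triple $(\sigma_{k}, \mathbf{U}_{\cdot k}, \mathbf{V}_{\cdot k})$ one has $\mathbf{M}[\mathbf{U}_{\cdot k}; \pm \mathbf{V}_{\cdot k}] = \pm \sigma_{k}[\mathbf{U}_{\cdot k}; \pm \mathbf{V}_{\cdot k}]$, since $\mathbf{B}\mathbf{V}_{\cdot k} = \sigma_{k}\mathbf{U}_{\cdot k}$ and $\mathbf{B}^{\top}\mathbf{U}_{\cdot k} = \sigma_{k}\mathbf{V}_{\cdot k}$. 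After normalization by $1/\sqrt{2}$ these vectors (together with kernel-padding eigenvectors when $n \neq d$) form an orthonormal eigenbasis of $\mathbf{M}$.

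Subtracting from the identity converts eigenvalues $\pm\sigma_{k}$ of $\mathbf{M}$ into eigenvalues $1 \mp \sigma_{k}$ of $\text{Lap}(\widetilde{\mathbf{X}})$ without changing the eigenvectors. Since $\sigma_{1} \geq \sigma_{2} \geq \cdots$, the smallest Laplacian eigenvalues are $1 - \sigma_{1} \leq 1 - \sigma_{2} \leq \cdots$, so the $k$-th smallest eigenvector is $\tfrac{1}{\sqrt{2}}[\mathbf{U}_{\cdot k}; \mathbf{V}_{\cdot k}]$ paired with the $k$-th largest singular value, matching the stated claim up to the overall normalization constant implicit in the concatenation notation.

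The main obstacle is purely bookkeeping rather than a deep difficulty: I have to reconcile the SVD's decreasing singular-value ordering with the Laplacian's increasing eigenvalue ordering (this is exactly why the theorem pairs the ``$k$-th smallest'' eigenvector with the ``$k$-th largest'' singular vector), and, when $n \neq d$, extend $\mathbf{U}$ and $\mathbf{V}$ to full orthonormal bases of $\mathbb{R}^{n}$ and $\mathbb{R}^{d}$ so that the kernel of $\mathbf{M}$ is also accounted for. Once these accounting details are handled, the theorem follows immediately from the block SVD--eigendecomposition correspondence.
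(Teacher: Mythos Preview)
Your proposal is correct and follows essentially the same approach as the paper: both exploit the anti-diagonal block structure of the bipartite adjacency to reduce the (normalized) Laplacian eigenproblem to the SVD of $\mathbf{D}_u^{-1/2}\mathbf{X}\mathbf{D}_v^{-1/2}$, obtaining eigenvalues $1-\sigma_k$ with eigenvectors $[\mathbf{U}_{\cdot k};\mathbf{V}_{\cdot k}]$. The only cosmetic difference is that the paper starts from the generalized eigenvalue problem $\mathbf{L}\mathbf{f}=\lambda\mathbf{D}\mathbf{f}$ and substitutes $\mathbf{u}=\mathbf{D}_1^{1/2}\mathbf{x}$, $\mathbf{v}=\mathbf{D}_2^{1/2}\mathbf{y}$ to reach the SVD equations, whereas you write the normalized Laplacian $\mathbf{I}-\mathbf{M}$ directly and verify the eigenvector identity blockwise; your handling of the $\pm\sigma_k$ branches, the $1/\sqrt{2}$ normalization, and the $n\neq d$ kernel padding is in fact more explicit than the paper's.
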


According to Theorem~\ref{the:2} and findings from \citet{DBLP:conf/nips/NieWDH17}, if we can maintain community invariance of $\widetilde{\mathbf{X}}$, community structure will also be preserved in $\mathbf{X}$. Hence, we investigate the community-invariant constraint in $\widetilde{\mathbf{X}}$.

\begin{theorem}\label{the:3}
Let $\lambda_{k}$ denote the $k$-th smallest eigenvalue of $\mathbf{\Lambda_{2}}$.
When masking one feature $\widetilde{X}_{ij}$, the induced spectral changes are given by $\widetilde{\mathbf{X}}$ as $\sum_{k=1}^{n+d}  |\Delta \lambda_{k}|$   = $\sum_{k=1}^{n+d}$  $| (F_{ik}  -F_{jk} )^{2} +(\lambda_{k}-1)(F_{ik}^{2}+F_{jk}^{2})|$, which are upper bounded by $\|{\mathbf{F}}_{i \cdot} - {\mathbf{F}}_{j \cdot}\|_{2}^{2} + \sum_{k=1}^{n+d} |1-\lambda_{k}|$ where $i \in \{1,..,n\}$ and $j \in \{(n+1),..,(n+d)\}$, ${\mathbf{F}}_{i\cdot}$ is the $i$-th row vector of ${\mathbf{F}}$.
\end{theorem}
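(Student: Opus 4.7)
The plan is to reduce Theorem~\ref{the:3} to a direct application of Definition~\ref{eigenvalue} and Theorem~\ref{the:abslute} by treating the symmetric bipartite lift $\widetilde{\mathbf{X}}$ as a weighted adjacency matrix and invoking Theorem~\ref{the:2} to supply its spectral basis $\mathbf{F}$. Because $\widetilde{\mathbf{X}}$ is real symmetric of size $(n+d)\times(n+d)$, everything that was true for $\mathbf{A}$ and $\mathbf{U}$ in the topology setting should transfer under the substitutions $\mathbf{A}\mapsto\widetilde{\mathbf{X}}$, $\mathbf{U}\mapsto\mathbf{F}$, and $n\mapsto n+d$. The conceptual backbone of the argument is Theorem~\ref{the:2} itself, which guarantees the eigendecomposition $\text{Lap}(\widetilde{\mathbf{X}})=\mathbf{F}\mathbf{\Lambda}_{2}\mathbf{F}^{\top}$ exists and ties it to the SVD of the normalized $\mathbf{X}$, so no additional spectral machinery is needed.

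First I would identify the perturbation: masking a single feature entry $\widetilde{X}_{ij}$ with $i\in\{1,\dots,n\}$ and $j\in\{n+1,\dots,n+d\}$ corresponds exactly to removing the edge between original-node vertex $i$ and feature-node vertex $j$ in the bipartite graph underlying $\widetilde{\mathbf{X}}$. Since Definition~\ref{eigenvalue} derives a single-edge perturbation identity for the normalized Laplacian of any real symmetric adjacency matrix, invoking it with spectral basis $\mathbf{F}$ immediately yields $\sum_{k=1}^{n+d}|\Delta\lambda_{k}|=\sum_{k=1}^{n+d}|(F_{ik}-F_{jk})^{2}+(\lambda_{k}-1)(F_{ik}^{2}+F_{jk}^{2})|$, which is the first assertion of the theorem. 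No rederivation of first-order eigenvalue perturbation is required; everything is inherited from the topology case.

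Second, for the upper bound I would replay the argument of Theorem~\ref{the:abslute}. Applying the triangle inequality termwise gives $\sum_{k}|\Delta\lambda_{k}|\leq \sum_{k}(F_{ik}-F_{jk})^{2}+\sum_{k}|\lambda_{k}-1|(F_{ik}^{2}+F_{jk}^{2})$. The first sum is literally $\|\mathbf{F}_{i\cdot}-\mathbf{F}_{j\cdot}\|_{2}^{2}$ by definition of the Euclidean norm. The second sum is then collapsed to $\sum_{k=1}^{n+d}|1-\lambda_{k}|$ by combining orthonormality of $\mathbf{F}$ with the block structure $\mathbf{F}_{\cdot k}=[\mathbf{U}_{\cdot k};\mathbf{V}_{\cdot k}]$ supplied by Theorem~\ref{the:2}, which constrains how mass is partitioned between the original-node block (containing row $i$) and the feature-node block (containing row $j$). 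The hard step, and the one I would scrutinize most carefully, is exactly this last bookkeeping: a naive use of $\sum_{k}F_{ik}^{2}=\sum_{k}F_{jk}^{2}=1$ would produce an extraneous factor of two, and absorbing it requires genuinely using the bipartite partitioning of spectral mass from Theorem~\ref{the:2} rather than treating that theorem as a mere renaming of variables.
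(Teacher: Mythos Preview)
Your overall plan---treat $\widetilde{\mathbf{X}}$ as a symmetric adjacency, reduce feature masking to single-edge deletion in the bipartite graph, and replay Definition~\ref{eigenvalue} and Theorem~\ref{the:abslute} with $\mathbf{U}\mapsto\mathbf{F}$, $n\mapsto n+d$---is precisely the paper's approach. One correction, though: there is no factor-of-two obstruction in collapsing $\sum_{k}|\lambda_{k}-1|(F_{ik}^{2}+F_{jk}^{2})$, and the bipartite block structure from Theorem~\ref{the:2} is not what resolves it. The bound follows immediately from \emph{column}-wise orthonormality: for each fixed $k$, $F_{ik}^{2}+F_{jk}^{2}\leq \sum_{\ell=1}^{n+d}F_{\ell k}^{2}=\|\mathbf{F}_{\cdot k}\|_{2}^{2}=1$, hence $\sum_{k}|\lambda_{k}-1|(F_{ik}^{2}+F_{jk}^{2})\leq \sum_{k}|\lambda_{k}-1|$. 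This is exactly the step the paper uses in the edge-dropping half of the proof of Theorem~\ref{the:abslute}. Your worry about row sums $\sum_{k}F_{ik}^{2}=1$ is a red herring; that identity is never invoked, so no ``partitioning of spectral mass'' argument is needed.

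One substantive divergence worth flagging: the paper's appendix proof for this theorem actually writes out the \emph{lower}-bound chain (reverse triangle inequality, yielding $\|\mathbf{F}_{i\cdot}-\mathbf{F}_{j\cdot}\|_{2}^{2}-\sum_{k}|1-\lambda_{k}|$) rather than the upper bound appearing in the theorem statement. Since feature masking corresponds to edge \emph{dropping} in the bipartite graph, your upper-bound derivation via the ordinary triangle inequality is the one that matches both the theorem as stated and the edge-dropping branch of Theorem~\ref{the:abslute}; the paper's appendix calculation instead mirrors the edge-adding branch.
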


Based on Theorem~\ref{the:3},
maximizing spectral changes in $\widetilde{\mathbf{X}}$ under a constrained number of perturbation equals finding several largest embedding distances between nodes and feature nodes, i.e., these features have the least impact on community changes for these nodes~\cite{DBLP:journals/pami/ZhangNL23}. 
Thus, CI constraint for feature augmentation
$\mathcal{L}_{\text{FM}}(\mathbf{\Delta}^{\text{FM}})$ can be formulated as follows: 
\begin{equation}\label{over-4}
\max\limits_{{\mathbf{\Delta}^{\text{FM}}} \in \mathcal{S}} 
\| \text{eig}(\text{Lap}(\widetilde{\mathbf{X}}-\widetilde{\mathbf{X}} \circ {\mathbf{\Delta}^{\text{FM}}})-\text{eig}(\text{Lap}(\widetilde{\mathbf{X}}))) \|_{2}^{2}. 
\end{equation}

Finally, we parameterize $\mathbf{\Delta}^{\text{FM}}$ and ensure its differentiability in the feature augmentation, formulated as follows:
\begin{align}\label{eq:feature_masking_probability}
\Delta^{\text{FM}}_{ij}(\epsilon) &=\textrm{Softmax}((\log(P^{\text{FM}}_{ij})+\epsilon)/\tau),\\
P^{\text{FM}}_{ij}&= \text{Sigmoid}(\text{MLPs}(\text{Concat}(\widetilde{\mathbf{U}}_{\cdot i},\widetilde{\mathbf{V}}_{\cdot j })))_{.} \nonumber
\end{align}



\noindent\textbf{CI-GCL.} As shown in Figure~\ref{fig:overall}, we instantiate a graph contrastive learning framework with the proposed community-invariant constraint, namely CI-GCL. 
Specifically, we first conduct spectral decomposition on the adjacent matrix and feature bipartite matrix to obtain node and feature representations.
Then, we consider these node and feature representations as input of MLPs for
both topology and feature augmentation, where we randomly initialize the parameters of MLPs.
For each iteration of contrastive learning, we sample two augmented graphs by topology augmentation 
and feature augmentation. 
The augmented graphs are then fed into a GCN encoder $f_{\theta}(\cdot)$, which outputs two sets of node representations.
A readout pooling function $r_{\phi}(\cdot)$ is applied to aggregate and transform the node representations and obtain graph representations $\mathbf{z}^{(1)}, \mathbf{z}^{(2)}$.
Following GraphCL~\cite{DBLP:conf/nips/YouCSCWS20}, given training graphs $\mathcal{G}$, we use contrastive objective $\mathcal{L}_{\text{GCL}}$, which can be defined as: 
\begin{align}\label{GCL-function}
&\min\limits_{\theta,\phi} \,\, \mathcal{L}_{\text{GCL}}\left(t_{1}\left(G\right),t_{2}\left(G\right),\theta,\phi\right) \\
= -\frac{1}{|\mathcal{G}|}&\sum_{n=1}^{|\mathcal{G}|} \left( 
\log \frac{\exp (\text{sim}(\mathbf{z}_{n}^{(1)},\mathbf{z}_{n}^{(2)}/\tau_{2} ))}{\sum_{n'=1,n'\neq n}^{|\mathcal{G}|}\exp (\text{sim}(\mathbf{z}_{n}^{(1)},\mathbf{z}_{n'}^{(2)})/\tau_{2})}
\right) \nonumber,
\end{align}
where $\tau_{2}$ is the temperature parameter, and we conduct minibatch optimization for Eq.(\ref{GCL-function}) in our study.
\begin{table*}[t]
\centering
\footnotesize
\caption{Unsupervised representation learning classification accuracy (\%) on TU Datasets. \textbf{Bold} denotes the best performance, and \underline{underline} represents the second best performance. \textcolor{blue}{\ding{101}} marks the reproduced results of the corresponding baselines by us. }
\setlength{\tabcolsep}{1.2mm}{ 
\begin{tabular}{l|cccccccc|c}
\toprule
\rowcolor{gray!30}
\textbf{Method} &
{\textbf{NCI1} $\uparrow$} & {\textbf{PROTEINS} $\uparrow$} & {\textbf{DD} $\uparrow$} & {\textbf{MUTAG} $\uparrow$} & 
{\textbf{COLLAB} $\uparrow$} & {\textbf{RDT-B} $\uparrow$} & {\textbf{RDT-M5K} $\uparrow$} & {\textbf{IMDB-B} $\uparrow$}  & {\textbf{Avg.} $\uparrow$}\\
\midrule
\midrule
InfoGraph & 76.20$\pm$1.0 & 74.44$\pm$0.3 & 72.85$\pm$1.7 & 89.01$\pm$1.1 & 70.65$\pm$1.1 & 82.50$\pm$1.4 & 53.46$\pm$1.0 & 73.03$\pm$0.8 & 74.02 \\
GraphCL   & 77.87$\pm$0.4 & 74.39$\pm$0.4 & 78.62$\pm$0.4 & 86.80$\pm$1.3 & 71.36$\pm$1.1 & 89.53$\pm$0.8 & 55.99$\pm$0.3 & 71.14$\pm$0.4 & 75.71 \\
MVGRL     & 76.64$\pm$0.3 & 74.02$\pm$0.3        & 75.20$\pm$0.4 & 75.40$\pm$7.8 & 73.10$\pm$0.6 & 82.00$\pm$1.1 & 51.87$\pm$0.6 & 63.60$\pm$4.2 & 71.48 \\
JOAO      & 78.07$\pm$0.4 & 74.55$\pm$0.4 & 77.32$\pm$0.5 & 87.35$\pm$1.0 & 69.50$\pm$0.3 & 85.29$\pm$1.4 & 55.74$\pm$0.6 &  70.21$\pm$3.0  & 74.75\\
SEGA      &  \underline{79.00$\pm$0.7} & \underline{76.01$\pm$0.4} & 78.76$\pm$0.6 & \textbf{90.21$\pm$0.7} & \underline{74.12$\pm$0.5} &  \underline{90.21$\pm$0.7} & 56.13$\pm$0.3 & \underline{73.58$\pm$0.4} & \underline{77.25} \\
GCS\textcolor{blue}{\ding{101}}      & 77.18$\pm$0.3 & 74.04$\pm$0.4 & 76.28$\pm$0.3 & 88.19$\pm$0.9 & 74.00$\pm$0.4 & 86.50$\pm$0.3 & \underline{56.30$\pm$0.3} & 72.90$\pm$0.5  & 75.64\\
GCL-SPAN\textcolor{blue}{\ding{101}}  & 75.43$\pm$0.4 & 75.78$\pm$0.4 & \underline{78.78$\pm$0.5} & 85.00$\pm$0.8 & 71.40$\pm$0.5 & 86.50$\pm$0.1 & 54.10$\pm$0.5 & 66.00$\pm$0.7  & 74.12 \\
AD-GCL\textcolor{blue}{\ding{101}}   & 73.38$\pm$0.5 & 73.59$\pm$0.7 & 75.10$\pm$0.4 & \underline{89.70$\pm$1.0} & 72.50$\pm$0.6 & 85.52$\pm$0.8 & 54.91$\pm$0.4 & 71.50$\pm$0.6 & 74.53 \\
AutoGCL\textcolor{blue}{\ding{101}}   & 78.32$\pm$0.5 & 69.73$\pm$0.4 & 75.75$\pm$0.6 & 85.15$\pm$1.1 & 71.40$\pm$0.7 & 86.60$\pm$1.5 & 55.71$\pm$0.2 & 72.00$\pm$0.4 &74.33  \\
\midrule
CI+AD-GCL     & 74.35$\pm$0.5 & 74.66$\pm$0.6 & 76.20$\pm$0.4 & 89.88$\pm$0.7 & 73.94$\pm$0.3 & 87.80$\pm$1.2 & 54.75$\pm$0.6 & 72.10$\pm$0.3 &75.46\\
CI+AutoGCL   & 78.47$\pm$0.7 & 70.81$\pm$0.5 & 76.53$\pm$0.6 & 86.73$\pm$1.0 & 72.24$\pm$0.9 & 87.50$\pm$1.4 & 55.97$\pm$0.2 & 72.50$\pm$0.3  &75.09  \\
\midrule
\textbf{CI-GCL} & \textbf{80.50$\pm$0.5} & \textbf{76.50$\pm$0.1} & \textbf{79.63$\pm$0.3} & 89.67$\pm$0.9 & \textbf{74.40$\pm$0.6} & \textbf{90.80$\pm$0.5} & \textbf{56.57$\pm$0.3} & \textbf{73.85$\pm$0.8} &\textbf{77.74} \\
\bottomrule
\end{tabular}}
\label{tab:unsup_acc-1}
\end{table*}
\subsection{Optimization and Scalability}\label{sec:optimization}
\textbf{Optimization.}
Eqs.(\ref{over-2},\ref{over-3},\ref{over-4}) are jointly optimized via projected gradient descent.
Taking $\mathbf{\Delta}^{\text{FM}}$ in Eq.(\ref{over-4}) as an example, we can update the parameters $\mathbf{\Delta}^{\text{FM}}$ as:
\begin{equation}\label{update}
    \mathbf{\Delta}^{\text{FM}}_{t} = \mathcal{P}_{\mathcal{S}}\left(\mathbf{\Delta}^{\text{FM}}_{(t-1)}-\eta_{t} \nabla \mathcal{L}_{\text{FM}}\left(\mathbf{\Delta}^{\text{FM}}_{(t-1)}\right)\right),
\end{equation}
where $\mathcal{P}_{\mathcal{S}}(\mathbf{\Delta}) = \arg\min_{\mathbf{S}\in \mathcal{S}}\|\mathbf{S}-\mathbf{\Delta}\|_{F}^{2}$ is defined as one projection operation at $\mathbf{\Delta}$ over the constraint set $\mathcal{S}$ and $\eta_{t}>0$ is the learning rate for the $t$-th updating step.
The gradient $\nabla \mathcal{L}_{\text{FM}}(\mathbf{\Delta}^{\text{FM}}_{(t-1)})$ can be calculated via a chain rule, with a closed-form gradient over eigenvalues, i.e., for $\mathbf{L}_{\text{norm}}=\text{Lap}(\mathbf{A}+\mathbf{C} \circ \mathbf{\Delta}^{\text{FM}})$, the derivatives of its $k$-th eigenvalue $\lambda_{k}$ is $\partial \lambda_{k}/\partial \mathbf{L}_{\text{norm}} = \mathbf{U}_{\cdot k}\mathbf{U}_{\cdot k}^{\top}$~\cite{rogers1970derivatives}.

\textbf{Scalability.}\label{scalability}
Due to the eigendecomposition, the time complexity of optimizing Eqs.(\ref{over-2},\ref{over-3}) is $\mathcal{O}(\text{Mn}^{3})$ with $\text{M}$ representing the number of iterations, due to the eigendecomposition, which is prohibitively expensive for large graphs.
To reduce the computational cost, instead of conducting eigendecomposition on the full graph spectrum, we employ selective eigendecomposition on the $K$ lowest eigenvalues via the Lanczos Algorithm~\cite{parlett1979lanczos}, which can reduce time complexity to $\mathcal{O}(\text{Mn}^{2}\text{K})$.
Similarly, we can use Truncated SVD~\cite{halko2011finding} to
obtain the $K$ highest eigenvalues of $\mathbf{X}$ and then concatenate them as approximate for eigendecomposition of $\widetilde{\mathbf{X}}$,  thereby reducing time complexity from $\mathcal{O}(\text{M(n+d)}^{3})$ to $\mathcal{O}(\text{M}\text{n} \log\text{K})$.
%

\section{Experiments}

%
In our general experimental settings, we use GIN~\cite{xu2018powerful} as the base encoder for all baselines to ensure a fair comparison. Each experiment is repeated 10 times with different random seeds, and we report the mean and standard derivation of the corresponding evaluation metrics. We select several best-performing baselines for comparison, including classic GCL methods, such as MVGRL~\cite{DBLP:conf/icml/HassaniA20}, InfoGraph~\cite{DBLP:conf/iclr/SunHV020}, GraphCL, and JOAO, as well as GCL methods with learnable graph augmentation, such as SEGA~\cite{DBLP:conf/icml/WuCSL023}, GCS~\cite{DBLP:conf/icml/WeiWBNBF23}, GCL-SPAN, AD-GCL, and AutoGCL. 

\subsection{Quantitative Evaluation}

\subsubsection{Comparison with State-of-the-Arts}

To comprehensively demonstrate the effectiveness and generalizability of CI-GCL, following previous studies~\cite{DBLP:conf/aaai/YinWHXZ22}, 
we perform evaluations for graph classification and regression under three different experimental settings: unsupervised, semi-supervised, and transfer learning.
\begin{table}[h]
\footnotesize
\centering
\caption{RMSE for unsupervised graph regression.}
\setlength{\tabcolsep}{1mm}{ 
\begin{tabular}{l|ccc|c}
\toprule
\rowcolor{gray!30}
{\textbf{Method}} &
{\textbf{molesol} $\downarrow$} & {\textbf{mollipo} $\downarrow$} & {\textbf{molfreesolv} $\downarrow$} &{\textbf{Avg.} $\downarrow$} \\
\midrule
\midrule
InfoGraph & 1.344$\pm$0.18 & 1.005$\pm$0.02 & 10.00$\pm$4.82 &4.118 \\
GraphCL   & 1.272$\pm$0.09 & 0.910$\pm$0.02 & 7.679$\pm$2.75 &3.287\\
MVGRL     & 1.433$\pm$0.15 & 0.962$\pm$0.04 & 9.024$\pm$1.98 &3.806 \\
JOAO      & 1.285$\pm$0.12 & 0.865$\pm$0.03 & 5.131$\pm$0.72 &2.427\\
GCL-SPAN  & 1.218$\pm$0.05 & \textbf{0.802$\pm$0.02} & \underline{4.531$\pm$0.46} 
&\underline{2.184} \\
AD-GCL    & \underline{1.217$\pm$0.09} & 0.842$\pm$0.03 & 5.150$\pm$0.62 &2.403\\
\midrule
\textbf{CI-GCL} & \textbf{1.130$\pm$0.13} & \underline{0.816$\pm$0.03} & \textbf{2.873$\pm$0.32} &\textbf{1.606} \\
\bottomrule
\end{tabular}}
\label{tab:unsup_rmse_auc-1}
\end{table}

\textbf{Unsupervised Learning.} 
We first train graph encoders (i.e., GIN) separately for each of the GCL baselines using unlabeled data. Then, we fix parameters of these models and train an SVM classifier using labeled data. We use TU datasets~\cite{Morris+2020} and OGB datasets~\cite{hu2020open} to evaluate graph classification and regression, respectively. We adopt the provided data split for the OGB datasets and use 10-fold cross-validation for the TU datasets as it lacks such a split. Table~\ref{tab:unsup_acc-1} shows the performance on graph classification and Table~\ref{tab:unsup_rmse_auc-1} draws the performance on graph regression.
In these tables,
CI-GCL achieves the best results on 9 datasets and competitive results on the MUTAG and mollipo datasets. 
Specifically, CI-GCL achieves the highest averaged accuracy in graph classification (77.74\%) and the lowest RMSE in graph regression (1.606), 
surpassing SOTA classification methods, such as SEGA (77.25\%), GraphCL (75.71\%), as well as SOTA regression methods, such as GCL-SPAN (2.184) and AD-GCL (2.403).

\begin{table*}[t]
\footnotesize
\centering
\caption{Accuracy (\%) for 10\% labeled semi-supervised graph classification.}
\setlength{\tabcolsep}{2.2mm}{ 
\begin{tabular}{l|ccccccc|c} 
\toprule
\rowcolor{gray!30}
\textbf{Method} &
{\textbf{NCI1} $\uparrow$} & {\textbf{PROTEINS} $\uparrow$} & {\textbf{DD} $\uparrow$}  & 
{\textbf{COLLAB} $\uparrow$} & {\textbf{RDT-B} $\uparrow$} & {\textbf{RDT-M5K} $\uparrow$} & {\textbf{GITHUB} $\uparrow$} & {\textbf{Avg.} $\uparrow$}\\
\midrule
\midrule
No Pre-train  & 73.72$\pm$0.2 & 70.40$\pm$1.5 & 73.56$\pm$0.4 & 73.71$\pm$0.3 & 86.63$\pm$0.3 & 51.33$\pm$0.4 & 60.87$\pm$0.2 &70.0\\
GraphCL   & 74.63$\pm$0.3 & 74.17$\pm$0.3 & 76.17$\pm$1.4 & 74.23$\pm$0.2 & 89.11$\pm$0.2 & 52.55$\pm$0.5 & 65.81$\pm$0.8 &72.3\\ 
JOAO      & 74.48$\pm$0.3 & 72.13$\pm$0.9 & 75.69$\pm$0.7 & 75.30$\pm$0.3 & 88.14$\pm$0.3 & 52.83$\pm$0.5 & 65.00$\pm$0.3 &71.9\\ 
SEGA      & 75.09$\pm$0.2  & 74.65$\pm$0.5  & 76.33$\pm$0.4       & 75.18$\pm$0.2    & 89.40$\pm$0.2  & \underline{53.73$\pm$0.3}       & \underline{66.01$\pm$0.7} & 72.9 \\
AD-GCL    & \underline{75.18$\pm$0.4} & 73.96$\pm$0.5 & \underline{77.91$\pm$0.7} & 75.82$\pm$0.3 & \underline{90.10$\pm$0.2} & 53.49$\pm$0.3 &         65.89$\pm$0.6    & \underline{73.1}   \\
AutoGCL   & 67.81$\pm$1.6 & \underline{75.03$\pm$3.5} & 77.50$\pm$4.4 & \textbf{77.16$\pm$1.5} & 79.20$\pm$3.5 & 49.91$\pm$2.7 & 58.91$\pm$1.5  &69.3    \\
\midrule
\textbf{CI-GCL}      & \textbf{75.86$\pm$0.8} & \textbf{76.28$\pm$0.3} & \textbf{78.01$\pm$0.9} & \underline{77.04$\pm$1.5} & \textbf{90.29$\pm$1.2} & \textbf{54.47$\pm$0.7} & \textbf{66.36$\pm$0.8}  &\textbf{74.0}\\
\bottomrule
\end{tabular}
}
\label{tab:semi_sup-1}
\end{table*}



\begin{table*}[t]
\caption{ROC-AUC (\%) for graph classification under transfer Learning settings.}
\footnotesize
\centering
\setlength{\tabcolsep}{1.46mm}{
\begin{tabular}{l|cccccccc|c|c} 
\toprule
\rowcolor{gray!30}
\textbf{Pre-Train} & 
\multicolumn{8}{c|}{\textbf{ZINC 2M}}  & \textbf{PPI-306K} & \\
\midrule
\rowcolor{gray!30}
\textbf{Fine-Tune} &
{\textbf{BBBP} $\uparrow$} & {\textbf{Tox21} $\uparrow$} & {\textbf{ToxCast} $\uparrow$}  & 
{\textbf{SIDER} $\uparrow$} & {\textbf{ClinTox} $\uparrow$} & {\textbf{MUV} $\uparrow$} & {\textbf{HIV} $\uparrow$} & {\textbf{BACE} $\uparrow$} & {\textbf{PPI} $\uparrow$}  & \textbf{Avg. $\uparrow$} \\
\midrule
\midrule
No Pre-train    & 65.8$\pm$4.5  & 74.0$\pm$0.8 & 63.4$\pm$0.6 & 57.3$\pm$1.6 & 58.0$\pm$4.4 & 71.8$\pm$2.5 & 75.3$\pm$1.9 & 70.1$\pm$5.4 & 64.8$\pm$1.0 & 66.7\\
MVGRL & 69.0$\pm$0.5  & 74.5$\pm$0.6 & 62.6$\pm$0.5 & 62.2$\pm$0.6 & 77.8$\pm$2.2 & 73.3$\pm$1.4 & 77.1$\pm$0.6 & 77.2$\pm$1.0 & 68.7$\pm$0.7 & 71.4 \\
SEGA & 71.9$\pm$1.1 & 76.7$\pm$0.4 & \underline{65.2$\pm$0.9} &  63.7$\pm$0.3 & \textbf{85.0$\pm$0.9} & \underline{76.6$\pm$2.5} & 77.6$\pm$1.4 & 77.1$\pm$0.5 & 68.7$\pm$0.5 &\underline{73.6} \\
GCS\textcolor{blue}{\ding{101}}   & \underline{72.5$\pm$0.5}  & 74.4$\pm$0.4 & 64.4$\pm$0.2 & 61.9$\pm$0.4 & 66.7$\pm$1.9 & \textbf{77.3$\pm$1.7} &  \underline{78.7$\pm$1.4} & \underline{82.3$\pm$0.3} & \underline{70.3$\pm$0.5} & 72.1 \\
GCL-SPAN        & 70.0$\pm$0.7  & \textbf{78.0$\pm$0.5} & 64.2$\pm$0.4 & \underline{64.7$\pm$0.5} & \underline{80.7$\pm$2.1} & 73.8$\pm$0.9 & 77.8$\pm$0.6 & 79.9$\pm$0.7 & 70.0$\pm$0.8 &73.2 \\
AD-GCL\textcolor{blue}{\ding{101}}           & 67.4$\pm$1.0  & 74.3$\pm$0.7 & 63.5$\pm$0.7 & 60.8$\pm$0.9 & 58.6$\pm$3.4 & 75.4$\pm$1.5 & 75.9$\pm$0.9 & 79.0$\pm$0.8 & 64.2$\pm$1.2  &68.7\\
AutoGCL\textcolor{blue}{\ding{101}}  & {72.0$\pm$0.6} & 75.5$\pm$0.3 & 63.4$\pm$0.4 & 62.5$\pm$0.6 & 79.9$\pm$3.3 &75.8$\pm$1.3 &77.4$\pm$0.6 &76.7$\pm$1.1 & 70.1$\pm$0.8 & 72.5 \\
\midrule
CI+AD-GCL & 68.4$\pm$1.1  & 74.5$\pm$0.9 & 64.0$\pm$0.8 & 61.4$\pm$0.9 & 59.8$\pm$3.2 & 76.5$\pm$1.7 & 77.0$\pm$0.9 & 80.0$\pm$0.8 & 65.3$\pm$1.1 & 69.6 \\
CI+AutoGCL & 73.9$\pm$0.7 & 76.4$\pm$0.3 & 63.8$\pm$0.3 & 63.9$\pm$0.6 & 80.9$\pm$3.1 &76.3$\pm$1.3 &78.8$\pm$0.7 &78.8$\pm$1.1 & 70.9$\pm$0.7 & 73.7\\
\midrule
\textbf{CI-GCL}            & \textbf{74.4$\pm$1.9}  & \underline{77.3$\pm$0.9} & \textbf{65.4$\pm$1.5} & \textbf{64.7$\pm$0.3} & 80.5$\pm$1.3 & 76.5$\pm$0.9 & \textbf{80.5$\pm$1.3} & \textbf{84.4$\pm$0.9} & \textbf{72.3$\pm$1.2} & \textbf{75.1} \\ 
\bottomrule
\end{tabular}
}
\label{tab:transfer-1}
\end{table*}

\textbf{Semi-Supervised Learning.} 
Following GraphCL, we employ 10-fold cross validation on each TU datasets using ResGCN~\cite{DBLP:conf/dsaa/PeiHIP21} as the classifier. For each fold, different from the unsupervised learning setting, we only use 10\% as labeled training data and 10\% as labeled testing data for graph classification. As shown in Table \ref{tab:semi_sup-1}, CI-GCL achieves highest averaged accuracy (74.0\%) compared with SOTA baselines SEGA (72.9\%) and AD-GCL (73.1\%).

\textbf{Transfer Learning.} 
To show the generalization, we conduct self-supervised pre-training for baselines on the pre-processed ZINC-2M or PPI-306K dataset~\cite{DBLP:conf/iclr/HuLGZLPL20} for 100 epochs and then fine-tune baselines on different downstream biochemical datasets. Table~\ref{tab:transfer-1} shows that CI-GCL  achieves best results on 6 datasets and comparable performance on the rest datasets with an averaged performance (75.1\%), comparing with SOTA baseline SEGA (73.6\%). 

\textbf{Summary.} From the above experimental results, we obtain the following three conclusions: (1) \underline{\textit{Higher Effectiveness.}} CI-GCL can achieve the best performance in three different experimental settings, attributed to its unified community-invariant constraint for graph augmentation. Compared to GraphCL and MVGRL, with similar contrastive objectives, the gain by CI-GCL mainly comes from the CI constraint and learnable graph augmentation procedure. While compared to AD-GCL and AutoGCL, with similar encoders, CI-GCL, guided by community invariance, is clearly more effective than the widely adopted uniformly random augmentation. (2) \underline{\textit{Better Generalizability.}} By maximizing spectral changes to minimize community changes, CI-GCL can obtain the encoder with better generalizability and transferability. Since the encoder is pre-trained to ignore the impact of community irrelevant information and mitigate the relationship between such information and downstream labels, solving the overfitting issue.
Furthermore, previous studies, such as JOAO and GCL-SPAN, improve generalizability of the GNN encoder on molecule classification by exploring structural information like subgraph.
We suggest that the community could be another way to study chemical and biological molecules.
(3) \underline{\textit{Wider Applicability.}} By combing the CI constraint with AD-GCL and AutoGCL in Table~\ref{tab:unsup_acc-1} and Table~\ref{tab:transfer-1}, we also see significant improvements on almost all datasets, showing that the CI constraint could be a plug-and-play component for any learnable GCL frameworks.

\begin{table}[h]
\footnotesize
\centering
\caption{Ablation study on unsupervised graph classification.
}
\setlength{\tabcolsep}{0.5mm}{ 
\begin{tabular}{l|cccc|c} 
\toprule
\rowcolor{gray!30}
\textbf{Method} &
{\scriptsize{\textbf{NCI1}} $\uparrow$}  & {\scriptsize{\textbf{PROTEIN}} $\uparrow$} & {\scriptsize{\textbf{\,\,DD\,\,}} $\uparrow$} & {\scriptsize{\textbf{MUTAG}} $\uparrow$}    &{\scriptsize{\textbf{Avg.}} $\uparrow$} \\
\midrule
w/o TA   & 78.8$\pm$0.6 & 74.6$\pm$0.6 & 77.8$\pm$0.8 & 86.1$\pm$0.9 & 79.3\\
w/o FA    & 79.2$\pm$0.4 & 75.1$\pm$0.2 & 78.1$\pm$0.3 & 86.3$\pm$0.9 &79.6 \\
\midrule
w/o CI on TA   & 79.9$\pm$0.6  & 75.7$\pm$0.9 & 78.8$\pm$0.3 & 87.6$\pm$0.5   & 80.5 \\ 
w/o CI on FA    & 80.0$\pm$0.3  & 75.9$\pm$1.4 & 78.7$\pm$0.8 & 87.5$\pm$1.4      & 80.5 \\
w/o CI on ALL     & 78.6$\pm$0.9  & 74.8$\pm$0.9 & 78.3$\pm$0.3 & 86.5$\pm$1.8  &79.5 \\ 
\midrule
CI-GCL     & \textbf{80.5$\pm$0.5}  & \textbf{76.5$\pm$0.1} & \textbf{79.6$\pm$0.3} & \textbf{89.6$\pm$0.9}  & \textbf{81.5}  \\
\bottomrule
\end{tabular}
}
\label{tab:ablation}
\end{table}
\subsubsection{Ablation Study}

We conduct an ablation study to evaluate the effectiveness of the proposed CI constraint on topology augmentation (TA) and feature augmentation (FA). We consider the following variants of CI-GCL: 
(1) \textbf{w/o TA}: remove TA on one branch.
(2) \textbf{w/o FA}: remove FA on one branch.
(3) \textbf{w/o CI on TA}: remove CI constraint on TA.
(4) \textbf{w/o CI on FA}: remove CI constraint on FA.
(5) \textbf{w/o CI on ALL}: remove CI constraint on both TA and FA.
Experimental results in Table~\ref{tab:ablation} demonstrate that the removal of either component of the method negatively impacts the ability of the graph representation learning to perform well. These results align with our hypothesis that random topology or feature augmentation without CI constraint corrupt community structure, thereby hindering model's performance in downstream tasks.

\subsection{Qualitative Evaluation}

\subsubsection{Robustness Against Various Noise}

To showcase the robustness of CI-GCL, we conduct experiments in the adversarial setting. Following GraphCL, we conduct \textit{Random} noise attack, with perturbation ratios $\sigma \in \{0.05,0.30\}$, on topology $\mathbf{A}$ and feature $\mathbf{X}$ of the input graph, respectively.
Specifically, for the topology attack, we randomly flip $\sigma \times m$ edges with $m$ as the total number of edges.
For the feature attack, we randomly select $\sigma \times d$ features to add Gaussian noise with $d$ as the total number of features.
Baselines without designed feature augmentation are set with random feature masking.
\begin{figure}[htb]
\centering
\includegraphics[scale=0.29, trim=10 20 0 0]{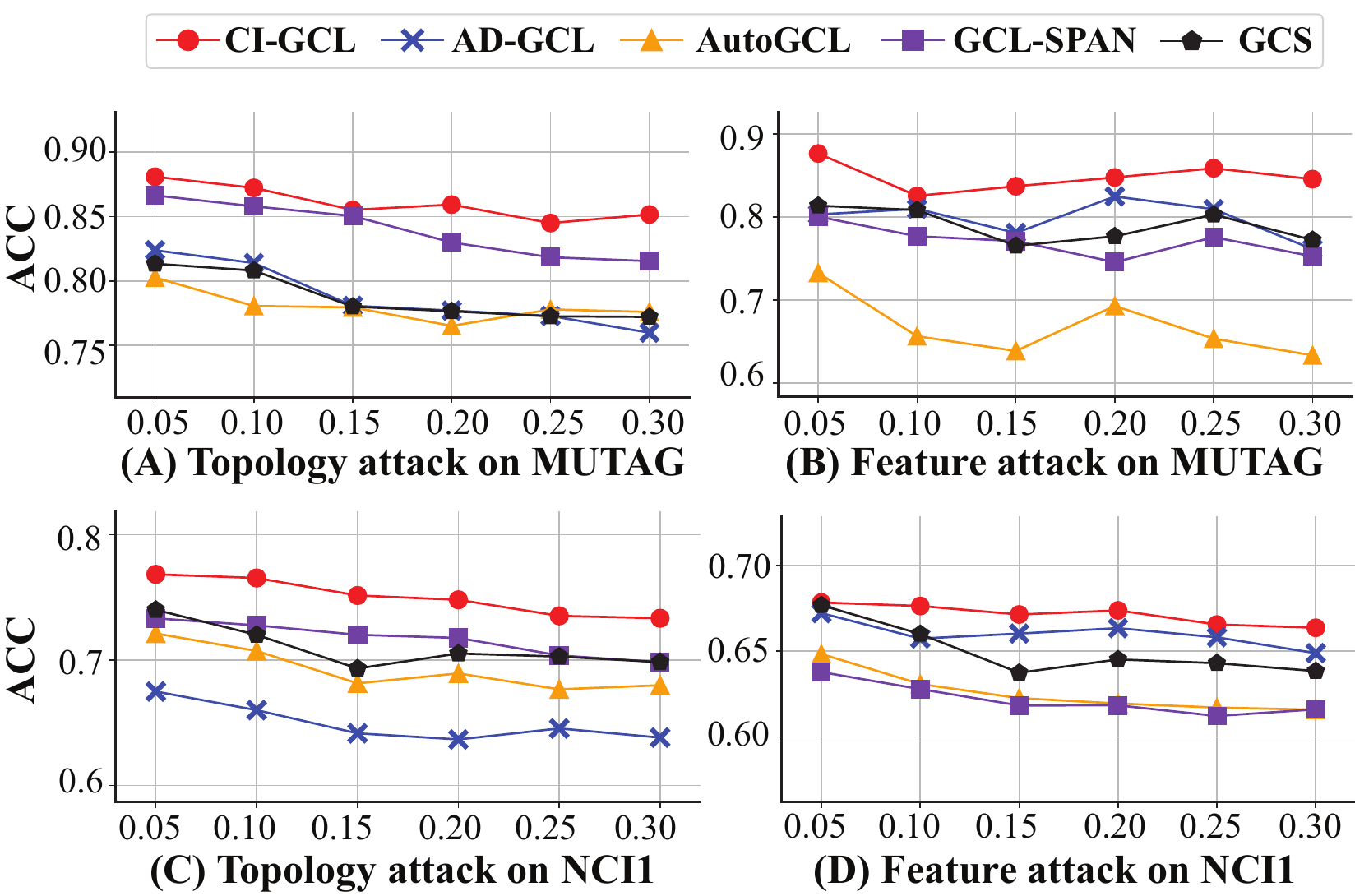}
\caption{Accuracy (\%) under noise attack on two datasets.}
\label{fig:noise_attack_mutag-1}
\end{figure}
Table~\ref{fig:noise_attack_mutag-1} reports the graph classification performance.
From Table~\ref{fig:noise_attack_mutag-1}, we have the following three findings.
(1) CI-GCL outperforms four best-performing GCL methods in both topology and feature attack, demonstrating its strong robustness.
(2)  CI-GCL and GCL-SPAN are more robustness than other baselines in the topology attack, showing that preserving high-level graph structure can improve robustness than random graph augmentation. While CI-GCL can better focus on community invariance to outperform GCL-SPAN. 
(3) CI-GCL is more robust than other baselines in the feature attack since we also apply the uniformed CI constraint to feature augmentation.
%

\begin{figure}[htb]
	\centering
	\includegraphics[scale=0.6, trim=0 20 0 0]{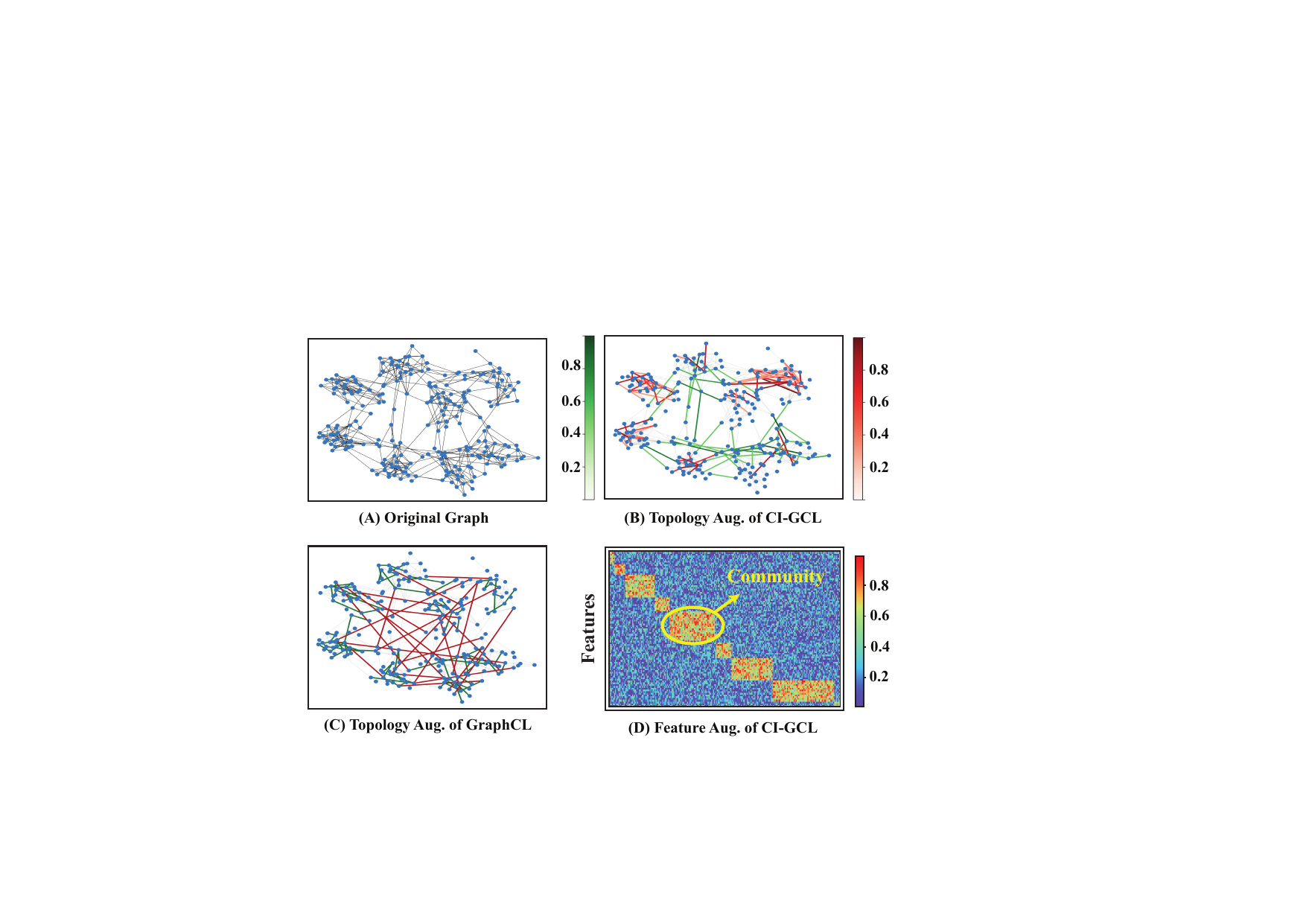}
	\caption{A case study of TA and FA of GraphCL and CI-GCL. (B-C) share the same color map and Green lines are edge dropping and Red lines are edge adding.}
    \label{fig:community}
\end{figure}

\subsubsection{Effectiveness in Community Preservation}

To explore the ability of community preservation, we draw community changes in Table~\ref{tab:community_change}, where community changes are defined as the averaged number of changes of community labels of nodes before and after graph augmentation by spectral clustering. 
We observe that CI-GCL can effectively preserve community structure due to the proposed CI constraint. Furthermore, refer to Table~\ref{tab:unsup_acc-1}, we can find that methods with larger community disruption, such as GraphCL and AutoGCL, underperform others with smaller community disruption. 
We also provide a visualization of CI-GCL on widely used synthetic graphs with 1,000 samples~\cite{DBLP:conf/iclr/KimO21}, that is suited for analysis since it possesses clear community structure~\cite{DBLP:conf/iclr/KimO21}.
We train our models in an unsupervised learning manner on this dataset and randomly select one example for visualization. In Figure~\ref{fig:community}(A-C), 
CI-GCL effectively preserve community structure by removing edges between clusters (Green lines) and add edges within each cluster (Red lines), 
while GraphCL destroys communities by randomly add and remove edges.
In Figure~\ref{fig:community}(D), with x- and y-axis represent nodes and features, respectively, CI-GCL can effectively maintain important features for community invariance.

\begin{table}[t]
\footnotesize
\centering
\caption{Community changes (\%) in unsupervised learning.}
\setlength{\tabcolsep}{1.2mm}{ 
\begin{tabular}{l|cccc|c}
\toprule
\rowcolor{gray!30}
{\textbf{Method}} &
{\scriptsize{\textbf{NCI1}} $\downarrow$} & {\scriptsize{\textbf{PROTEINS}} $\downarrow$} & {\scriptsize{\textbf{MUTAG}} $\downarrow$} &{\scriptsize{\textbf{IMDB-B}} $\downarrow$} &{\scriptsize{\textbf{Avg.}} $\downarrow$}\\
\midrule
 GraphCL    & 30.6 & 25.2 & 29.9 & 29.2 &28.7 \\
 GCS       & 27.6 & 30.9 & 26.5 & 32.5 &29.3\\
 GCL-SPAN  & 15.1 & 11.2 & 14.5 & 19.7 &15.1\\
 AD-GCL    & \underline{7.0} & \underline{4.8} & \underline{5.6} & \underline{18.0} &\underline{8.8}\\
 AutoGCL   & 34.2 & 31.0 & 30.3 & 33.4 &32.2\\
\midrule
 \textbf{CI-GCL} & \textbf{5.9} & \textbf{4.3} & \textbf{3.3} & \textbf{13.8} &\textbf{6.8}\\
\bottomrule
\end{tabular}}
\label{tab:community_change}
\end{table}

\section{Conclusion}

In this work, 
we aimed to propose an unified constraint that can be applied to both topology and feature augmentation, to ensure community invariance and benefit for downstream tasks.
To achieve this goal, we searched for the augmentation scheme that would maximize spectral changes of the input graph's topology and features, which can also minimize community changes.
Our proposed community-invariant constraint can be paired with various GCL frameworks.
We plan to explore more high-level graph information as constraints for learnable graph augmentation and apply our framework to many real-world applications in the future.

\section*{Impact Statements}
In this section, we elaborate on the broader impacts of our work from the following two aspects. 
(1) Learnable Graph Augmentations. With the rapid development of GCL, learnable graph augmentation has become a significant research topic in the machine-learning community. Compared to current learnable graph augmentation methods, our work introduces control over the augmentation scheme in joint learning settings. Considering that CI constraint as a kind of expert knowledge, we can perceive our work as establishing a connection between expert knowledge-guided augmentations and learnable augmentations through the design of specific constraints.
(2) Community and Spectrum: Despite significant advances in GCL, theoretical foundations regarding the relations between community preservation and spectrum design remain lacking. Our work highlights the significant potential of graph spectrum and community preservation in GCL, which may assist others in comprehending the graph spectrum. Moreover, we do not anticipate any direct negative impacts on society from our findings. 



\bibliography{main}

\begin{thebibliography}{67}
\providecommand{\natexlab}[1]{#1}
\providecommand{\url}[1]{\texttt{#1}}
\expandafter\ifx\csname urlstyle\endcsname\relax
  \providecommand{\doi}[1]{doi: #1}\else
  \providecommand{\doi}{doi: \begingroup \urlstyle{rm}\Url}\fi

\bibitem[Adhikari et~al.(2018)Adhikari, Zhang, Ramakrishnan, and Prakash]{DBLP:conf/pakdd/AdhikariZRP18}
Adhikari, B., Zhang, Y., Ramakrishnan, N., and Prakash, B.~A.
\newblock Sub2vec: Feature learning for subgraphs.
\newblock In \emph{Advances in Knowledge Discovery and Data Mining}, volume 10938, pp.\  170--182, 2018.

\bibitem[Bojchevski \& G{\"{u}}nnemann(2019)Bojchevski and G{\"{u}}nnemann]{DBLP:conf/icml/BojchevskiG19}
Bojchevski, A. and G{\"{u}}nnemann, S.
\newblock Adversarial attacks on node embeddings via graph poisoning.
\newblock In \emph{Proceedings of the 36th International Conference on Machine Learning}, volume~97, pp.\  695--704, 2019.

\bibitem[Cao et~al.(2023)Cao, Chai, Li, and Jiang]{DBLP:conf/icml/CaoCLJ23}
Cao, Y., Chai, M., Li, M., and Jiang, C.
\newblock Efficient learning of mesh-based physical simulation with bi-stride multi-scale graph neural network.
\newblock In \emph{Proceedings of the 40th International Conference on Machine Learning}, volume 202, pp.\  3541--3558, 2023.

\bibitem[Chen et~al.(2023{\natexlab{a}})Chen, Zhao, Wang, Tan, and Xiao]{DBLP:conf/www/ChenZ0TX23}
Chen, D., Zhao, X., Wang, W., Tan, Z., and Xiao, W.
\newblock Graph self-supervised learning with augmentation-aware contrastive learning.
\newblock In \emph{Proceedings of the {ACM} Web Conference}, pp.\  154--164, 2023{\natexlab{a}}.

\bibitem[Chen et~al.(2023{\natexlab{b}})Chen, Zhao, Li, Zou, Li, and Zhang]{DBLP:conf/ijcai/ChenZLZ0Z23}
Chen, H., Zhao, Z., Li, Y., Zou, Y., Li, R., and Zhang, R.
\newblock {CSGCL:} community-strength-enhanced graph contrastive learning.
\newblock In \emph{Proceedings of the 32nd International Joint Conference on Artificial Intelligence}, pp.\  2059--2067, 2023{\natexlab{b}}.

\bibitem[Chen et~al.(2020)Chen, Kornblith, Norouzi, and Hinton]{DBLP:conf/icml/ChenK0H20}
Chen, T., Kornblith, S., Norouzi, M., and Hinton, G.~E.
\newblock A simple framework for contrastive learning of visual representations.
\newblock In \emph{Proceedings of the 37th International Conference on Machine Learning}, volume 119, pp.\  1597--1607, 2020.

\bibitem[Chiplunkar et~al.(2018)Chiplunkar, Kapralov, Khanna, Mousavifar, and Peres]{DBLP:conf/focs/ChiplunkarKKMP18}
Chiplunkar, A., Kapralov, M., Khanna, S., Mousavifar, A., and Peres, Y.
\newblock Testing graph clusterability: Algorithms and lower bounds.
\newblock In \emph{{IEEE} Annual Symposium on Foundations of Computer Science}, pp.\  497--508, 2018.

\bibitem[Dhillon(2001)]{DBLP:conf/kdd/Dhillon01}
Dhillon, I.~S.
\newblock Co-clustering documents and words using bipartite spectral graph partitioning.
\newblock In \emph{Proceedings of the 7th International Conference on Knowledge Discovery and Data Mining}, pp.\  269--274, 2001.

\bibitem[Duan et~al.(2022)Duan, Vaezipoor, Paulus, Ruan, and Maddison]{DBLP:conf/icml/DuanVPRM22}
Duan, H., Vaezipoor, P., Paulus, M.~B., Ruan, Y., and Maddison, C.~J.
\newblock Augment with care: Contrastive learning for combinatorial problems.
\newblock In \emph{Proceedings of the 39th International Conference on Machine Learning}, volume 162, pp.\  5627--5642, 2022.

\bibitem[Gene et~al.(2013)Gene, Van~Loan, and et~al.]{golub2013matrix}
Gene, G., Van~Loan, C.~F., and et~al.
\newblock \emph{Matrix computations}.
\newblock JHU press, 2013.

\bibitem[Grover \& Leskovec(2016)Grover and Leskovec]{DBLP:conf/kdd/GroverL16}
Grover, A. and Leskovec, J.
\newblock node2vec: Scalable feature learning for networks.
\newblock In \emph{Proceedings of the 22nd International Conference on Knowledge Discovery and Data Mining}, pp.\  855--864, 2016.

\bibitem[Halko et~al.(2011)Halko, Martinsson, and Tropp]{halko2011finding}
Halko, N., Martinsson, P.-G., and Tropp, J.~A.
\newblock Finding structure with randomness: Probabilistic algorithms for constructing approximate matrix decompositions.
\newblock \emph{SIAM review}, 53\penalty0 (2):\penalty0 217--288, 2011.

\bibitem[Hassani \& Ahmadi(2020)Hassani and Ahmadi]{DBLP:conf/icml/HassaniA20}
Hassani, K. and Ahmadi, A. H.~K.
\newblock Contrastive multi-view representation learning on graphs.
\newblock In \emph{Proceedings of the 37th International Conference on Machine Learning}, volume 119, pp.\  4116--4126, 2020.

\bibitem[Hogben(2013)]{hogben2013handbook}
Hogben, L.
\newblock \emph{Handbook of linear algebra}.
\newblock CRC press, 2013.

\bibitem[Hu et~al.(2020{\natexlab{a}})Hu, Fey, Zitnik, Dong, Ren, Liu, Catasta, and Leskovec]{hu2020open}
Hu, W., Fey, M., Zitnik, M., Dong, Y., Ren, H., Liu, B., Catasta, M., and Leskovec, J.
\newblock Open graph benchmark: Datasets for machine learning on graphs.
\newblock In \emph{Proceedings of the 33th Advances in Neural Information Processing Systems}, 2020{\natexlab{a}}.
\newblock URL \url{https://arxiv.org/abs/2005.00687}.

\bibitem[Hu et~al.(2020{\natexlab{b}})Hu, Liu, Gomes, Zitnik, Liang, Pande, and Leskovec]{DBLP:conf/iclr/HuLGZLPL20}
Hu, W., Liu, B., Gomes, J., Zitnik, M., Liang, P., Pande, V.~S., and Leskovec, J.
\newblock Strategies for pre-training graph neural networks.
\newblock In \emph{Proceedings of the 8th International Conference on Learning Representations}, 2020{\natexlab{b}}.
\newblock URL \url{https://openreview.net/forum?id=HJlWWJSFDH}.

\bibitem[Jang et~al.(2017)Jang, Gu, and Poole]{jang2017categorical}
Jang, E., Gu, S., and Poole, B.
\newblock Categorical reparameterization with gumbel-softmax.
\newblock In \emph{5th International Conference on Learning Representations}, 2017.
\newblock URL \url{https://openreview.net/forum?id=rkE3y85ee}.

\bibitem[Jiang et~al.(2023)Jiang, Huang, and Huang]{DBLP:conf/kdd/Jiang0H23}
Jiang, Y., Huang, C., and Huang, L.
\newblock Adaptive graph contrastive learning for recommendation.
\newblock In \emph{Proceedings of the 29th International Conference on Knowledge Discovery and Data Mining}, pp.\  4252--4261. {ACM}, 2023.

\bibitem[Kim \& Oh(2021)Kim and Oh]{DBLP:conf/iclr/KimO21}
Kim, D. and Oh, A.
\newblock How to find your friendly neighborhood: Graph attention design with self-supervision.
\newblock In \emph{9th International Conference on Learning Representations}, 2021.
\newblock URL \url{https://openreview.net/forum?id=Wi5KUNlqWty}.

\bibitem[Kipf \& Welling(2017)Kipf and Welling]{DBLP:conf/iclr/KipfW17}
Kipf, T.~N. and Welling, M.
\newblock Semi-supervised classification with graph convolutional networks.
\newblock In \emph{5th International Conference on Learning Representations}, 2017.
\newblock URL \url{https://openreview.net/forum?id=SJU4ayYgl}.

\bibitem[Lee et~al.(2014)Lee, Gharan, and Trevisan]{DBLP:journals/jacm/LeeGT14}
Lee, J.~R., Gharan, S.~O., and Trevisan, L.
\newblock Multiway spectral partitioning and higher-order cheeger inequalities.
\newblock \emph{J. {ACM}}, 61\penalty0 (6):\penalty0 37:1--37:30, 2014.

\bibitem[Li et~al.(2022{\natexlab{a}})Li, Jing, and Tong]{DBLP:conf/www/LiJT22}
Li, B., Jing, B., and Tong, H.
\newblock Graph communal contrastive learning.
\newblock In \emph{Proceedings of the ACM Web Conference}, pp.\  1203--1213, 2022{\natexlab{a}}.

\bibitem[Li et~al.(2022{\natexlab{b}})Li, Wang, Zhang, Wu, He, and Chua]{DBLP:conf/icml/LiWZW0C22}
Li, S., Wang, X., Zhang, A., Wu, Y., He, X., and Chua, T.
\newblock Let invariant rationale discovery inspire graph contrastive learning.
\newblock In \emph{Proceedings of 39th International Conference on Machine Learning}, volume 162, pp.\  13052--13065, 2022{\natexlab{b}}.

\bibitem[Li et~al.(2023)Li, Hu, Fu, Chen, Yang, and Zheng]{DBLP:journals/corr/abs-2311-11073}
Li, Y., Hu, Y., Fu, L., Chen, C., Yang, L., and Zheng, Z.
\newblock Community-aware efficient graph contrastive learning via personalized self-training.
\newblock \emph{CoRR}, abs/2311.11073, 2023.
\newblock URL \url{https://arxiv.org/abs/2311.11073}.

\bibitem[Lin et~al.(2022)Lin, Blaser, and Wang]{10.1145/3534678.3539435}
Lin, L., Blaser, E., and Wang, H.
\newblock Graph structural attack by perturbing spectral distance.
\newblock In \emph{Proceedings of the 28th International Conference on Knowledge Discovery and Data Mining}, pp.\  989–998, 2022.

\bibitem[Lin et~al.(2023)Lin, Chen, and Wang]{lin2023spectral}
Lin, L., Chen, J., and Wang, H.
\newblock Spectral augmentation for self-supervised learning on graphs.
\newblock In \emph{The 11th International Conference on Learning Representations}, 2023.
\newblock URL \url{https://openreview.net/pdf?id=DjzBCrMBJ\_p}.

\bibitem[Liu et~al.(2022{\natexlab{a}})Liu, Wang, Bo, Shi, and Pei]{DBLP:conf/nips/Liu0BSP22}
Liu, N., Wang, X., Bo, D., Shi, C., and Pei, J.
\newblock Revisiting graph contrastive learning from the perspective of graph spectrum.
\newblock In \emph{Proceedings of the 35th Advances in Neural Information Processing Systems}, 2022{\natexlab{a}}.
\newblock URL \url{https://openreview.net/forum?id=L0U7TUWRt_X}.

\bibitem[Liu et~al.(2022{\natexlab{b}})Liu, Ying, Dong, Li, Xu, Rong, Zhao, Huang, and Wu]{DBLP:conf/icml/LiuYDLXRZHW22}
Liu, S., Ying, R., Dong, H., Li, L., Xu, T., Rong, Y., Zhao, P., Huang, J., and Wu, D.
\newblock Local augmentation for graph neural networks.
\newblock In \emph{Proceedings of the 39th International Conference on Machine Learning}, volume 162, pp.\  14054--14072, 2022{\natexlab{b}}.

\bibitem[Luo et~al.(2023)Luo, McThrow, Au, Komikado, Uchino, Maruhashi, and Ji]{DBLP:conf/iclr/LuoMAKUMJ23}
Luo, Y., McThrow, M., Au, W.~Y., Komikado, T., Uchino, K., Maruhashi, K., and Ji, S.
\newblock Automated data augmentations for graph classification.
\newblock In \emph{The Proceedings of the 11th International Conference on Learning Representations}, 2023.
\newblock URL \url{https://openreview.net/pdf?id=vTb1JI0Gps\_}.

\bibitem[Mayr et~al.(2018)Mayr, Klambauer, Unterthiner, Steijaert, Wegner, Ceulemans, Clevert, and Hochreiter]{mayr2018large}
Mayr, A., Klambauer, G., Unterthiner, T., Steijaert, M., Wegner, J.~K., Ceulemans, H., Clevert, D.-A., and Hochreiter, S.
\newblock Large-scale comparison of machine learning methods for drug target prediction on chembl.
\newblock \emph{Chemical science}, 9\penalty0 (24):\penalty0 5441--5451, 2018.

\bibitem[Morris et~al.(2020)Morris, Kriege, Bause, Kersting, Mutzel, and Neumann]{Morris+2020}
Morris, C., Kriege, N.~M., Bause, F., Kersting, K., Mutzel, P., and Neumann, M.
\newblock Tudataset: A collection of benchmark datasets for learning with graphs.
\newblock In \emph{ICML 2020 Workshop on Graph Representation Learning and Beyond}, 2020.
\newblock URL \url{https://chrsmrrs.github.io/datasets/}.

\bibitem[Narayanan et~al.(2017)Narayanan, Chandramohan, Venkatesan, Chen, Liu, and Jaiswal]{DBLP:journals/corr/NarayananCVCLJ17}
Narayanan, A., Chandramohan, M., Venkatesan, R., Chen, L., Liu, Y., and Jaiswal, S.
\newblock graph2vec: Learning distributed representations of graphs.
\newblock \emph{CoRR}, 2017.
\newblock URL \url{http://arxiv.org/abs/1707.05005}.

\bibitem[Nie et~al.(2017)Nie, Wang, Deng, and Huang]{DBLP:conf/nips/NieWDH17}
Nie, F., Wang, X., Deng, C., and Huang, H.
\newblock Learning {A} structured optimal bipartite graph for co-clustering.
\newblock In \emph{Proceedings of the 30th Advances in Neural Information Processing Systems}, pp.\  4129--4138, 2017.

\bibitem[Parlett \& Scott(1979)Parlett and Scott]{parlett1979lanczos}
Parlett, B.~N. and Scott, D.~S.
\newblock The lanczos algorithm with selective orthogonalization.
\newblock \emph{Mathematics of computation}, 33\penalty0 (145):\penalty0 217--238, 1979.

\bibitem[Pei et~al.(2021)Pei, Huang, van Ipenburg, and Pechenizkiy]{DBLP:conf/dsaa/PeiHIP21}
Pei, Y., Huang, T., van Ipenburg, W., and Pechenizkiy, M.
\newblock Resgcn: Attention-based deep residual modeling for anomaly detection on attributed networks.
\newblock In \emph{International Conference on Data Science and Advanced Analytics}, pp.\  1--2. {IEEE}, 2021.

\bibitem[Pothen et~al.(1990)Pothen, Simon, and Liou]{pothen1990partitioning}
Pothen, A., Simon, H.~D., and Liou, K.-P.
\newblock Partitioning sparse matrices with eigenvectors of graphs.
\newblock \emph{SIAM journal on matrix analysis and applications}, 11\penalty0 (3):\penalty0 430--452, 1990.

\bibitem[Rogers(1970)]{rogers1970derivatives}
Rogers, L.~C.
\newblock Derivatives of eigenvalues and eigenvectors.
\newblock \emph{AIAA journal}, 8\penalty0 (5):\penalty0 943--944, 1970.

\bibitem[Shen et~al.(2023)Shen, Sun, Pan, Zhou, and Yang]{DBLP:conf/aaai/0001SPZY23}
Shen, X., Sun, D., Pan, S., Zhou, X., and Yang, L.~T.
\newblock Neighbor contrastive learning on learnable graph augmentation.
\newblock In \emph{Proceedings of the 37th {AAAI} Conference on Artificial Intelligence}, pp.\  9782--9791, 2023.

\bibitem[Shervashidze et~al.(2009)Shervashidze, Vishwanathan, Petri, Mehlhorn, and Borgwardt]{DBLP:journals/jmlr/ShervashidzeVPMB09}
Shervashidze, N., Vishwanathan, S. V.~N., Petri, T., Mehlhorn, K., and Borgwardt, K.~M.
\newblock Efficient graphlet kernels for large graph comparison.
\newblock In \emph{Proceedings of the 12th International Conference on Artificial Intelligence and Statistics}, volume~5, pp.\  488--495, 2009.

\bibitem[Shervashidze et~al.(2011)Shervashidze, Schweitzer, van Leeuwen, Mehlhorn, and Borgwardt]{DBLP:journals/jmlr/ShervashidzeSLMB11}
Shervashidze, N., Schweitzer, P., van Leeuwen, E.~J., Mehlhorn, K., and Borgwardt, K.~M.
\newblock Weisfeiler-lehman graph kernels.
\newblock \emph{J. Mach. Learn. Res.}, 12:\penalty0 2539--2561, 2011.

\bibitem[Shi \& Malik(1997)Shi and Malik]{DBLP:conf/cvpr/ShiM97}
Shi, J. and Malik, J.
\newblock Normalized cuts and image segmentation.
\newblock In \emph{Conference on Computer Vision and Pattern Recognition}, pp.\  731--737. {IEEE} Computer Society, 1997.

\bibitem[Spielman(2012)]{spielman2012spectral}
Spielman, D.
\newblock Spectral graph theory.
\newblock \emph{Combinatorial scientific computing}, 18:\penalty0 18, 2012.

\bibitem[Sterling \& Irwin(2015)Sterling and Irwin]{DBLP:journals/jcisd/SterlingI15}
Sterling, T. and Irwin, J.~J.
\newblock {ZINC} 15 - ligand discovery for everyone.
\newblock \emph{J. Chem. Inf. Model.}, 55\penalty0 (11):\penalty0 2324--2337, 2015.

\bibitem[Sun et~al.(2020)Sun, Hoffmann, Verma, and Tang]{DBLP:conf/iclr/SunHV020}
Sun, F., Hoffmann, J., Verma, V., and Tang, J.
\newblock Infograph: Unsupervised and semi-supervised graph-level representation learning via mutual information maximization.
\newblock In \emph{Proceedings of the 8th International Conference on Learning Representations}, 2020.
\newblock URL \url{https://arxiv.org/abs/1908.01000}.

\bibitem[Sun et~al.(2021)Sun, Xing, Wang, Chen, and Zhou]{DBLP:conf/kdd/SunXWCZ21}
Sun, M., Xing, J., Wang, H., Chen, B., and Zhou, J.
\newblock Mocl: Data-driven molecular fingerprint via knowledge-aware contrastive learning from molecular graph.
\newblock In \emph{Proceedings of the 27th International Conference on Knowledge Discovery and Data Mining}, pp.\  3585--3594, 2021.

\bibitem[Suresh et~al.(2021)Suresh, Li, Hao, and Neville]{suresh2021adversarial}
Suresh, S., Li, P., Hao, C., and Neville, J.
\newblock Adversarial graph augmentation to improve graph contrastive learning.
\newblock \emph{Proceedings of the 34th Advances in Neural Information Processing Systems}, 34:\penalty0 15920--15933, 2021.

\bibitem[Tian et~al.(2020)Tian, Sun, Poole, Krishnan, Schmid, and Isola]{DBLP:conf/nips/Tian0PKSI20}
Tian, Y., Sun, C., Poole, B., Krishnan, D., Schmid, C., and Isola, P.
\newblock What makes for good views for contrastive learning?
\newblock In \emph{Proceedings of the 33th Advances in Neural Information Processing Systems}, 2020.
\newblock URL \url{https://openreview.net/forum?id=FRnVp1JFaj}.

\bibitem[Tong et~al.(2021)Tong, Liang, Ding, Dai, Li, and Wang]{DBLP:conf/nips/TongLDDLW21}
Tong, Z., Liang, Y., Ding, H., Dai, Y., Li, X., and Wang, C.
\newblock Directed graph contrastive learning.
\newblock In \emph{Proceedings of the 34th Advances in Neural Information Processing Systems}, pp.\  19580--19593, 2021.

\bibitem[Velickovic et~al.(2019)Velickovic, Fedus, Hamilton, Li{\`{o}}, Bengio, and Hjelm]{DBLP:conf/iclr/VelickovicFHLBH19}
Velickovic, P., Fedus, W., Hamilton, W.~L., Li{\`{o}}, P., Bengio, Y., and Hjelm, R.~D.
\newblock Deep graph infomax.
\newblock In \emph{Proceedings of the 7th International Conference on Learning Representations}, 2019.
\newblock URL \url{https://openreview.net/forum?id=rklz9iAcKQ}.

\bibitem[Wang et~al.(2022)Wang, Zhang, Li, Dong, Yin, Li, and Chen]{DBLP:conf/www/WangZLDY0C22}
Wang, Y., Zhang, J., Li, H., Dong, Y., Yin, H., Li, C., and Chen, H.
\newblock Clusterscl: Cluster-aware supervised contrastive learning on graphs.
\newblock In \emph{Proceedings of the ACM Web Conference}, pp.\  1611--1621, 2022.

\bibitem[Wei et~al.(2022)Wei, Liang, Liu, and Wang]{wei2022contrastive}
Wei, C., Liang, J., Liu, D., and Wang, F.
\newblock Contrastive graph structure learning via information bottleneck for recommendation.
\newblock In \emph{Proceedings of the 35th Advances in Neural Information Processing Systems}, 2022.
\newblock URL \url{https://openreview.net/forum?id=lhl_rYNdiH6}.

\bibitem[Wei et~al.(2023)Wei, Wang, Bai, Ni, Brady, and Fang]{DBLP:conf/icml/WeiWBNBF23}
Wei, C., Wang, Y., Bai, B., Ni, K., Brady, D., and Fang, L.
\newblock Boosting graph contrastive learning via graph contrastive saliency.
\newblock In \emph{Proceedings of the 40th International Conference on Machine Learning}, volume 202, pp.\  36839--36855, 2023.

\bibitem[Wu et~al.(2023{\natexlab{a}})Wu, Wang, Xu, Liu, Zheng, Wang, Song, and Gai]{DBLP:conf/kdd/WuWXLZWSG23}
Wu, C., Wang, C., Xu, J., Liu, Z., Zheng, K., Wang, X., Song, Y., and Gai, K.
\newblock Graph contrastive learning with generative adversarial network.
\newblock In \emph{Proceedings of the 29th International Conference on Knowledge Discovery and Data Mining}, pp.\  2721--2730, 2023{\natexlab{a}}.

\bibitem[Wu et~al.(2023{\natexlab{b}})Wu, Chen, Shi, Li, and Xu]{DBLP:conf/icml/WuCSL023}
Wu, J., Chen, X., Shi, B., Li, S., and Xu, K.
\newblock {SEGA:} structural entropy guided anchor view for graph contrastive learning.
\newblock In \emph{Proceedings of the 37th International Conference on Machine Learning}, volume 202, pp.\  37293--37312, 2023{\natexlab{b}}.

\bibitem[Wu et~al.(2018)Wu, Ramsundar, Feinberg, Gomes, Geniesse, Pappu, Leswing, and Pande]{wu2018moleculenet}
Wu, Z., Ramsundar, B., Feinberg, E.~N., Gomes, J., Geniesse, C., Pappu, A.~S., Leswing, K., and Pande, V.
\newblock Moleculenet: a benchmark for molecular machine learning.
\newblock \emph{Chemical science}, 9\penalty0 (2):\penalty0 513--530, 2018.

\bibitem[Xu et~al.(2019)Xu, Hu, Leskovec, and Jegelka]{xu2018powerful}
Xu, K., Hu, W., Leskovec, J., and Jegelka, S.
\newblock How powerful are graph neural networks?
\newblock In \emph{Proceedings of the 7th International Conference on Learning Representations}, 2019.
\newblock URL \url{https://openreview.net/forum?id=ryGs6iA5Km}.

\bibitem[Yanardag \& Vishwanathan(2015)Yanardag and Vishwanathan]{DBLP:conf/kdd/YanardagV15}
Yanardag, P. and Vishwanathan, S. V.~N.
\newblock Deep graph kernels.
\newblock In \emph{Proceedings of the 21th {ACM} {SIGKDD} International Conference on Knowledge Discovery and Data Mining}, pp.\  1365--1374, 2015.

\bibitem[Yin et~al.(2022)Yin, Wang, Huang, Xiong, and Zhang]{DBLP:conf/aaai/YinWHXZ22}
Yin, Y., Wang, Q., Huang, S., Xiong, H., and Zhang, X.
\newblock Autogcl: Automated graph contrastive learning via learnable view generators.
\newblock In \emph{36th {AAAI} Conference on Artificial Intelligence}, pp.\  8892--8900, 2022.

\bibitem[You et~al.(2020)You, Chen, Sui, Chen, Wang, and Shen]{DBLP:conf/nips/YouCSCWS20}
You, Y., Chen, T., Sui, Y., Chen, T., Wang, Z., and Shen, Y.
\newblock Graph contrastive learning with augmentations.
\newblock In \emph{Advances in Neural Information Processing Systems}, volume~33, pp.\  5812--5823, 2020.

\bibitem[You et~al.(2021)You, Chen, Shen, and Wang]{DBLP:conf/icml/YouCSW21}
You, Y., Chen, T., Shen, Y., and Wang, Z.
\newblock Graph contrastive learning automated.
\newblock In \emph{Proceedings of the 38th International Conference on Machine Learning}, volume 139, pp.\  12121--12132, 2021.

\bibitem[Zhang et~al.(2023)Zhang, Nie, and Li]{DBLP:journals/pami/ZhangNL23}
Zhang, H., Nie, F., and Li, X.
\newblock Large-scale clustering with structured optimal bipartite graph.
\newblock \emph{{IEEE} Trans. Pattern Anal. Mach. Intell.}, 45\penalty0 (8):\penalty0 9950--9963, 2023.

\bibitem[Zhang et~al.(2022)Zhang, Zhu, Song, Koniusz, and King]{DBLP:conf/kdd/ZhangZSKK22}
Zhang, Y., Zhu, H., Song, Z., Koniusz, P., and King, I.
\newblock {COSTA:} covariance-preserving feature augmentation for graph contrastive learning.
\newblock In \emph{The 28th {ACM} {SIGKDD} Conference on Knowledge Discovery and Data Mining}, pp.\  2524--2534, 2022.

\bibitem[Zhang et~al.(2020)Zhang, Zhao, Lin, Zhu, and He]{DBLP:conf/nips/ZhangZLZH20}
Zhang, Z., Zhao, Z., Lin, Z., Zhu, J., and He, X.
\newblock Counterfactual contrastive learning for weakly-supervised vision-language grounding.
\newblock In \emph{Advances in Neural Information Processing Systems}, 2020.
\newblock URL \url{https://openreview.net/forum?id=p7l6QRLKwCX}.

\bibitem[Zhu et~al.(2023)Zhu, Zhao, Yang, and Nie]{DBLP:journals/tfs/ZhuZYN23}
Zhu, J., Zhao, W., Yang, H., and Nie, F.
\newblock Joint learning of anchor graph-based fuzzy spectral embedding and fuzzy k-means.
\newblock \emph{{IEEE} Trans. Fuzzy Syst.}, 31\penalty0 (11):\penalty0 4097--4108, 2023.

\bibitem[Zhu et~al.(2020)Zhu, Xu, Yu, Liu, Wu, and Wang]{zhu2020deep}
Zhu, Y., Xu, Y., Yu, F., Liu, Q., Wu, S., and Wang, L.
\newblock Deep graph contrastive representation learning.
\newblock \emph{CoRR}, abs/2006.04131, 2020.
\newblock URL \url{https://arxiv.org/abs/2006.04131}.

\bibitem[Zhu et~al.(2021)Zhu, Xu, Yu, Liu, Wu, and Wang]{DBLP:conf/www/0001XYLWW21}
Zhu, Y., Xu, Y., Yu, F., Liu, Q., Wu, S., and Wang, L.
\newblock Graph contrastive learning with adaptive augmentation.
\newblock In \emph{The Web Conference}, pp.\  2069--2080, 2021.

\bibitem[Zitnik et~al.(2019)Zitnik, Sosi{\v{c}}, Feldman, and Leskovec]{zitnik2019evolution}
Zitnik, M., Sosi{\v{c}}, R., Feldman, M.~W., and Leskovec, J.
\newblock Evolution of resilience in protein interactomes across the tree of life.
\newblock \emph{Proceedings of the National Academy of Sciences}, 116\penalty0 (10):\penalty0 4426--4433, 2019.

\end{thebibliography}
\bibliographystyle{icml2021}


\newpage
\appendix
\onecolumn

\section{Experiment Setup Details}

\setcounter{figure}{0}
\setcounter{table}{0}
\setcounter{equation}{0}
\renewcommand\theequation{A.\arabic{equation}}
\renewcommand\thefigure{A\arabic{figure}}
\renewcommand\thetable{A\arabic{table}}

\subsection{Hardware Specification and Environment}


We conduct our experiments using a single machine equipped with an Intel i9-10850K processor, Nvidia GeForce RTX 3090Ti (24GB) GPUs for the majority datasets. 
For the COLLAB, RDT-B, and RDT-M5K datasets, we utilized RTX A6000 GPUs (48GB) with batch sizes exceeding 512. 
The code is written in Python 3.10 and we use PyTorch 2.1.0 on CUDA 11.8 to train the model on the GPU. Implementation details can be accessed at the provided link
\begin{center}
\url{https://anonymous.4open.science/r/CI-GCL-E718}    
\end{center}

\subsection{Details on Datasets}

\subsubsection{TU datasets}
TU Datasets~\cite{Morris+2020}~\footnote{\href{https://chrsmrrs.github.io/datasets/}{https://chrsmrrs.github.io/datasets/}} provides a collection of benchmark datasets. We use several biochemical molecules and social networks for graph classification, as summarized in Table~\ref{tab:data_tudataset}. 
The data collection is also accessible in the PyG~\footnote{\href{https://pytorch-geometric.readthedocs.io/en/2.4.0/generated/torch_geometric.datasets.TUDataset.html}{https://pytorch-geometric.io}} library,
which employs a 10-fold evaluation data split. We used these datasets for the evaluation of the graph classification task in unsupervised and semi-supervised learning settings.

\begin{table}[h]
\caption{Statistics of TU Datasets for the unsupervised or semi-supervised graph classification task. \# means Number of.}
\centering
\begin{tabular}{l|c|c|c|c}
\toprule
\text { Dataset } & \text { Category } & \text { \# Graphs } & \text { Avg. \# Nodes } & \text { Avg. \# Edges } \\
\midrule
\midrule
\text { NCI1 } & \text { Biochemical Molecule } & 4,110 & 29.87 & 32.30 \\
\text { PROTEINS } & \text { Biochemical Molecule } & 1,113 & 39.06 & 72.82 \\
\text { DD } & \text { Biochemical Molecule } & 1,178 & 284.32 & 715.66 \\
\text { MUTAG } & \text { Biochemical Molecule } & 188 & 17.93 & 19.79 \\
\text { COLLAB } & \text { Social Network } & 5,000 & 74.49 & 2,457.78 \\
\text { RDT-B } & \text { Social Network } & 2,000 & 429.63 & 497.75\\
\text { RDT-M5K } & \text { Social Network } & 4,999 & 508.52 & 594.87 \\
\text { IMDB-B } & \text { Social Network } & 1,000 & 19.77 & 96.53 \\
\text { Github } & \text { Social Network } & 12,725 & 113.79 & 234.64 \\
\midrule
\end{tabular}
\label{tab:data_tudataset}
\end{table}

\begin{table}[h]
\caption{ Statistics of MoleculeNet dataset for the downstream transfer learning. \# means Number of.}
\centering
\begin{tabular}{l|c|c|c|c|c}
\toprule
\text { Dataset } & \text { Category } & \text { Utilization } & \text { \# Graphs } & \text { Avg. \# Nodes } & \text { Avg. \# Edges } \\
\midrule
\midrule
\text { ZINC-2M } & \text { Biochemical Molecule } & Pre-Traning & 2,000,000 & 26.62 & 57.72 \\
\text { PPI-306K } & \text { Biology Networks } & Pre-Traning & 306,925 & 39.82 & 729.62 \\
\midrule 
\text { BBBP } & \text { Biochemical Molecule } & FineTuning & 2,039 & 24.06 & 51.90 \\
\text { TOX21 } & \text { Biochemical Molecule } & FineTuning & 7,831 & 18.57 & 38.58 \\
\text { TOXCAST } & \text { Biochemical Molecule } & FineTuning & 8,576 & 18.78 & 38.52 \\
\text { SIDER } & \text { Biochemical Molecule } & FineTuning & 1,427 & 33.64 & 70.71 \\
\text { CLINTOX } & \text { Biochemical Molecule } & FineTuning & 1,477 & 26.15 & 55.76 \\
\text { MUV } & \text { Biochemical Molecule } & FineTuning & 93,087 & 24.23 & 52.55 \\
\text { HIV } & \text { Biochemical Molecule } & FineTuning & 41,127 & 25.51 & 54.93 \\
\text { BACE } & \text { Biochemical Molecule } & FineTuning & 1,513 & 34.08 & 73.71 \\
\text { PPI } & \text { Biology Networks } & FineTuning & 88,000 & 49.35 & 890.77 \\
\midrule
\end{tabular}
\label{tab:data_molecule_bio}
\end{table}

\subsubsection{MoleculeNet and PPI datasets}
We follow the same transfer learning setting of \citet{DBLP:conf/iclr/HuLGZLPL20}.
For pre-training, we use 2 million unlabeled molecules sampled from the ZINC15 database~\cite{DBLP:journals/jcisd/SterlingI15} for the chemistry domain, and 395K unlabeled protein ego-networks derived from PPI networks~\cite{mayr2018large} representing 50 species for the biology domain. 
During the fine-tuning stage, we use 8 larger binary classification datasets available in MoleculeNet~\cite{wu2018moleculenet} for the chemistry domain and PPI networks~\cite{zitnik2019evolution} for the biology domain. 
All these datasets are obtained from SNAP \footnote{\href{https://snap.stanford.edu/gnn-pretrain/}{https://snap.stanford.edu/gnn-pretrain/}}. 
Summaries of these datasets are presented in Table \ref{tab:data_molecule_bio}.

\subsubsection{OGB chemical molecular datasets}
Open Graph Benchmark (OGB) chemical molecular datasets~\cite{hu2020open} are employed for both graph classification and regression tasks in an unsupervised learning setting.
OGB~\footnote{\href{https://ogb.stanford.edu/}{https://ogb.stanford.edu/}} hosts datasets designed for chemical
molecular property classification and regression, 
as summarized in Table \ref{tab:data_ogb}. 
These datasets can be accessed through the OGB platform, and are also available within the PyG library~\footnote{\href{https://pytorch-geometric.readthedocs.io/en/2.4.0/modules/datasets.html}{https://pytorch-geometric.io}}.

\begin{table}[htb]
\caption{ Statistics of OGB chemical molecular dataset for both graph classification and regression tasks. }
\centering
\begin{tabular}{c|c|c|c|c|c}
\midrule 
\text { Dataset } & \text { Task Type } & \text { \# Graphs } & \text { Avg. \# Nodes } & \text { Avg. \# Edges } & \text { \# Task } \\
\midrule 
\text { ogbg-molesol } & \text { Regression } & 1,128 &  13.3 &  13.7 & 1 \\
\text { ogbg-molipo } & \text { Regression } &  4,200 &  27.0 & 29.5 & 1 \\
\text { ogbg-molfreesolv } & \text { Regression } &  642 & 8.7 & 8.4 & 1 \\
\midrule 
\text { ogbg-molbace } & \text { Classification } &  1,513 & 34.1 & 36.9 & 1 \\
\text { ogbg-molbbbp } & \text { Classification } & 2,039 & 24.1 & 26.0 & 1 \\
\text { ogbg-molclintox } & \text { Classification } & 1,477 & 26.2 & 27.9 & 2 \\
\text { ogbg-moltox21 } & \text { Classification } & 7,831 & 18.6 & 19.3 & 12 \\
\text { ogbg-molsider } & \text { Classification } &  1,427 & 33.6 & 35.4 & 27 \\
\midrule
\end{tabular}
\label{tab:data_ogb}
\end{table}

\subsection{Detailed Model Configurations, Training and Evaluation Process}

\subsubsection{Pre-Analysis experimental settings}

We outline detailed settings of the pre-analysis experiment for 
replication purposes. Specifically, our experiments are conducted on the IMDB-B dataset. Both the GraphCL and AD-GCL methods utilized a GIN encoder with identical architecture, including a graph-graph level contrastive loss, mean pooling readout, as well as hyperparameters, such as 2 convolutional layers with an embedding dimension of 32). Both strategies underwent 100 training iterations to acquire graph representations, which were subsequently evaluated by using them as features for a downstream SVM classifier.

\subsubsection{Unsupervised learning settings}

For unsupervised learning, we employ a 2-layer GIN~\cite{xu2018powerful} encoder with a 2-layer MLP for projection, followed by a mean polling readout function. The embedding size was set to 256 for both TU and OGB datasets. We conduct training for 100 epochs with a batch size of 256, utilizing the Adam optimizer with a learning rate of $0.01$. We adopt the provided data split for the OGB datasets and employ 10-fold cross-validation for the TU datasets, as the split is not provided. Initially, we pre-train the encoder using unlabeled data with a contrastive loss. Then, we fix the encoder and train with an SVM classifier for unsupervised graph classification or a linear prediction layer for unsupervised graph regression to evaluate the performance.  
Experiments are performed 10 times with mean and standard deviation of Accuracy(\%) scores for TUDataset classification, RMSE scores for OGB datasets regression, and ROC-AUC (\%) for OGB datasets classification.

\subsubsection{Semi-supervised learning settings}
Following GraphCL~\cite{DBLP:conf/nips/YouCSCWS20}, we employ a 10-fold cross-validation on each of the TU datasets using a ResGCN \cite{DBLP:conf/dsaa/PeiHIP21} classifier. For each fold, 10\% of each data is designed as labeled training data and 10\% as labeled testing data. These splits are randomly selected using the StratifiedKFold method~\footnote{\href{https://scikit-learn.org/stable/modules/generated/sklearn.model_selection.StratifiedKFold.html}{https://scikit-learn.org}}. 
During pre-training, we tune the learning rate in the range \{0.01, 0.001, 0.0001\} (using Adam optimizer) and the number of epochs in \{20, 40, 60, 80, 100\} through grid search. 
For fine-tuning, an additional linear graph prediction layer is added on top of the encoder to map the representations to the task labels. Experiments are performed 10 times with mean and standard deviation of Accuracy(\%) scores.

\subsubsection{Transfer learning settings}
In transfer learning, we first conduct self-supervised
pre-training on the pre-processed ZINC-2M or PPI-306K
dataset for 100 epochs. 
Subsequently, we fine-tune the backbone model (GIN encoder used in \cite{DBLP:conf/iclr/HuLGZLPL20}) on binary classification biochemical datasets. 
During the fine-tuning process, the encoder is equipped with an additional linear graph prediction layer on top, facilitating the mapping of representations to the task labels. Experiments are conducted 10 times, and the mean and standard deviation of ROC-AUC(\%) scores are reported.

\subsubsection{Adversarial learning settings}
In robustness experiments, we conduct random noise attacks by introducing edge perturbations to the graph topology, and introducing Gaussian noise to the attributes of the input graph.
Each experiment is performed within the context of unsupervised graph classification across various TU Datasets. Specifically, we first train the model using contrastive loss, following the same approach as in the unsupervised setting.  
During evaluation, we poison the topology and features of the input graph, and then asses its performance using an SVM classifier. We vary the ratio of added noise, denoted as $\sigma$, ranging from 0.05 to 0.3 with a step of 0.05. In the topological robustness experiment, we randomly add and remove $\sigma * m$ edges, where $m$ is the number of edges in a single graph. In the feature robustness experiment, we randomly select $\sigma * n * d$ positions to introduce noises, where $n$ is the number of nodes, and $d$ is the dimension of features.

\subsubsection{Synthetic dataset settings}

For the experiment depicted in Figure \ref{fig:community}, we randomly generate 1,000 synthetic graphs using Random Partition Graph method~\cite{DBLP:conf/iclr/KimO21}~\footnote{\href{https://pytorch-geometric.readthedocs.io/en/2.4.0/generated/torch_geometric.datasets.RandomPartitionGraphDataset.html}{https://pytorch-geometric.io}} and train models in an unsupervised learning manner. 
The parameters for graphs generation are set as follows: $num\_class$=8, $num\_nodes\_per\_class$=30, $node\_homophily\_ratio$=0.96, $average\_degree$=5.
After training, we select one graph to show the augmented edges,  where green lines represent edge dropping and red lines represent edge adding, as shown in Figure \ref{fig:community}(B). We compare it with the random edge perturbation in Figure \ref{fig:community}(C) GraphCL. Furthermore, Figure \ref{fig:community}(D) displays the inverted probability ($1-p$) for feature masking. The feature is obtained through the eigendecomposition of the corresponding adjacency matrix.

\section{More Discussion on Existing GCL Methods}

\setcounter{figure}{0}
\setcounter{table}{0}
\setcounter{equation}{0}
\renewcommand\theequation{B.\arabic{equation}}
\renewcommand\thefigure{B\arabic{figure}}
\renewcommand\thetable{B\arabic{table}}

\subsection{Introduction to Selected Best-performing Baselines}

Here, we briefly introduce some important baselines for graph self-supervised learning.

\begin{itemize}
    \item \textbf{MVGRL}~\cite{DBLP:conf/icml/HassaniA20} MVGRL maximizes the mutual information between the local Laplacian matrix and a global diffusion matrix.
    \item \textbf{InfoGraph}~\cite{DBLP:conf/iclr/SunHV020} InfoGraph maximizes the mutual information between the graph-level representation and the representations of substructures of different scales. 
    \item \textbf{Attribute Masking}~\cite{DBLP:conf/iclr/HuLGZLPL20} Attribute Masking learns the regularities of the node/edge attributes distributed over graph structure to capture inherent domain knowledge.
    \item \textbf{Context Prediction}~\cite{DBLP:conf/iclr/HuLGZLPL20} Context Prediction predicts the surrounding graph structures of subgraphs to pre-train a backbone GNN, so that it maps nodes appearing in similar structural contexts to nearby representations.
    \item \textbf{GraphCL}~\cite{DBLP:conf/nips/YouCSCWS20} GraphCL learns unsupervised representations of graph data through contrastive learning with random graph augmentations.
    \item \textbf{JOAO}~\cite{DBLP:conf/icml/YouCSW21} JOAO leverages GraphCL as the baseline model and automates the selection of augmentations when performing contrastive learning.
    \item \textbf{SEGA}~\cite{DBLP:conf/icml/WuCSL023} SEGA explores an anchor view that maintains the essential information of input graphs for graph contrastive learning, based on the theory of graph information bottleneck. Moreover, based on the structural information theory, we present a practical instantiation to implement this anchor view for graph contrastive learning.
    \item \textbf{AD-GCL}~\cite{suresh2021adversarial} AD-GCL aims to avoid capturing redundant information during the training by optimizing adversarial graph augmentation strategies in GCL and designing a trainable edge-dropping graph augmentation.
    \item \textbf{AutoGCL}~\cite{DBLP:conf/aaai/YinWHXZ22} AutoGCL employs a set of learnable graph view generators with node dropping and attribute masking and adopts a joint training strategy to train the learnable view generators, the graph encoder, and the classifier in an end-to-end manner. 
    We reproduce this algorithm by removing the feature expander part for the unsupervised setting.
    \item \textbf{GCS}~\cite{DBLP:conf/icml/WeiWBNBF23} GCS utilizes a gradient-based approach that employs contrastively trained models to preserve the semantic content of the graph. Subsequently, augmented views are generated by assigning different drop probabilities to nodes and edges.
    \item \textbf{GCL-SPAN}~\cite{lin2023spectral} GCL-SPAN develops spectral augmentation to guide topology augmentations. Two views are generated by maximizing and minimizing spectral changes, respectively. The probability matrices are pre-computed and serve as another input for the GCL framework. 
    We reproduce it by adding $10$-fold cross-validation and report the mean accuracy.
\end{itemize}

\subsection{Summary of the Most Similar Studies}

\begin{itemize}
    \item Compared with spectrum-based methods such as GCL-SPAN and GAME~\cite{DBLP:conf/nips/Liu0BSP22}. GCL-SPAN explores high-level graph structural information by disturbing it, i.e., they design their augmented view by corrupting what they want to retain. In contrast, CI-GCL preserves the core information between two augmented graphs to help the encoder better understand what should be retrained during graph augmentation.
    GAME focuses on the notion that spectrum changes of high-frequency should be larger than spectrum changes of low-frequency part,
    while CI-GCL considers all spectrum changes as a whole and endeavors to maintain community invariance during graph augmentation. 
    Another main difference between CI-GCL and these spectrum-based methods is that CI-GCL incorporates learnable graph augmentation, which makes it more flexible and generalizable for many different datasets. 
    \item Compared with community-based GCL methods such as gCooL~\cite{DBLP:conf/www/LiJT22},  ClusterSCL~\cite{DBLP:conf/www/WangZLDY0C22}, and CSGCL~\cite{DBLP:journals/corr/abs-2311-11073}.
    These methods first attempt to obtain community labels for each node using clustering methods such as K-means. Then they consider these community labels as the supervised signal to minimize the changes of community labels before and after the GCL framework.
    However, since they still employ random graph augmentation, they disrupt community structure during graph augmentation, which causes them to underperform some basic GCL baselines such as GraphCL on graph classification and node classification.
    Different from these methods, the proposed CI constraint can be applied for graph augmentation to improve effectiveness. 
    \item Compared with other GCL methods with learnable graph augmentation, such as AutoGCL and AD-GCL. 
    To the best of our knowledge, we propose the first unified community-invariant constraint for learnable graph augmentation. 
    With the proposed CI constraint, we can better maintain high-level graph information (i.e., community), which can enhance effectiveness, generalizability, and robustness for downstream tasks.
\end{itemize}

\section{Parameter Sensitive Experiments}

\setcounter{figure}{0}
\setcounter{table}{0}
\setcounter{equation}{0}
\renewcommand\theequation{C.\arabic{equation}}
\renewcommand\thefigure{C\arabic{figure}}
\renewcommand\thetable{C\arabic{table}}

\subsection{Hyperparameters Sensitivity in Overall Objective Function}

Before delving into the sensitivity of hyperparameters, we first introduce all the hyperparameters used in this study.
The overall objective function of CI-GCL in this study can be formulated as follows:
\begin{align}
    \min \  \mathcal{L}_{\text{overall}} &= \mathcal{L}_{\text{GCL}}-\boldsymbol{\alpha} \mathcal{L}_{\text{TA}}(\mathbf{\Delta}^{\text{TA}})- \boldsymbol{\beta} \mathcal{L}_{\text{FA}}(\mathbf{\Delta}^{\text{FA}}), \\
    \text{where}\quad \mathcal{L}_{\text{TA}}(\mathbf{\Delta}^{\text{TA}}) &= \mathcal{L}_{\text{ED}}(\mathbf{\Delta}^{\text{ED}}) + 
    \mathcal{L}_{\text{ND}}(\mathbf{\Delta}^{\text{ND}}) - 
    \mathcal{L}_{\text{EA}}(\mathbf{\Delta}^{\text{EA}}) , \\
    \text{and}\quad \mathcal{L}_{\text{FA}}(\mathbf{\Delta}^{\text{FA}}) &= \mathcal{L}_{\text{FM}}(\mathbf{\Delta}^{\text{FM}})
\end{align}
where $\text{TA}$ and $\text{FA}$ represent topology augmentation and feature augmentation, respectively. 
$\boldsymbol{\alpha}$ and $\boldsymbol{\beta}$ are hyperparameters controlling the strength of community-invariant constraints on topology and feature augmentation, respectively. 
Another important hyperparameter $K$, as mentioned in Section~\ref{scalability}, denotes the number of eigenvalues and eigenvectors.

We evaluate parameter sensitivity by varying the target hyperparameter while keeping other hyperparameters fixed.
We select three datasets MUTAG, IMDB-B, and PROTEINS, from the TU datasets to evaluate parameter sensitivity in the unsupervised setting using 10-fold cross-validation.
In our experimental settings, we adopt a 2-layer GIN encoder with a 2-layer MLP for projection, followed by a mean pooling readout function. The embedding size is fixed at 256. 
We conduct training for 100 epochs with a batch size of 256. 
Throughout all experiments, we employ the Adam optimizer with a learning rate of $0.01$.

\paragraph{Effect of $K$} 
Fixing $\boldsymbol{\alpha}=0.8$ and $\boldsymbol{\beta}=1.0$ unchanged, 
we vary $K$ from 2 to 20 with a step of 1 to evaluate its performance. As shown in Figure~\ref{fig:parameter_sens} (the first column), CI-GCL demonstrates significant improvement as the selected number increases from $2$ to $6$. We attribute this improvement to the small number of spectrum that only contain a rough community structure of the graph. Conducting topology or feature augmentations constrained by such an incomplete community structure may further compromise the distinguishability of the generated positive views. 
Therefore, it is important to use a sufficient number of spectrum to obtain a confident augmentation view. Figure \ref{fig:parameter_sens} demonstrates that increasing the number of spectrum substantially enhances performance, with the best results achieved at $6$. 
While MUTAG, IMDB-B, and PROTEINS are relatively small datasets, we recommend using a larger number of spectrum for larger graphs.

\begin{figure}[h]
    \centering
    \includegraphics[scale=0.6]{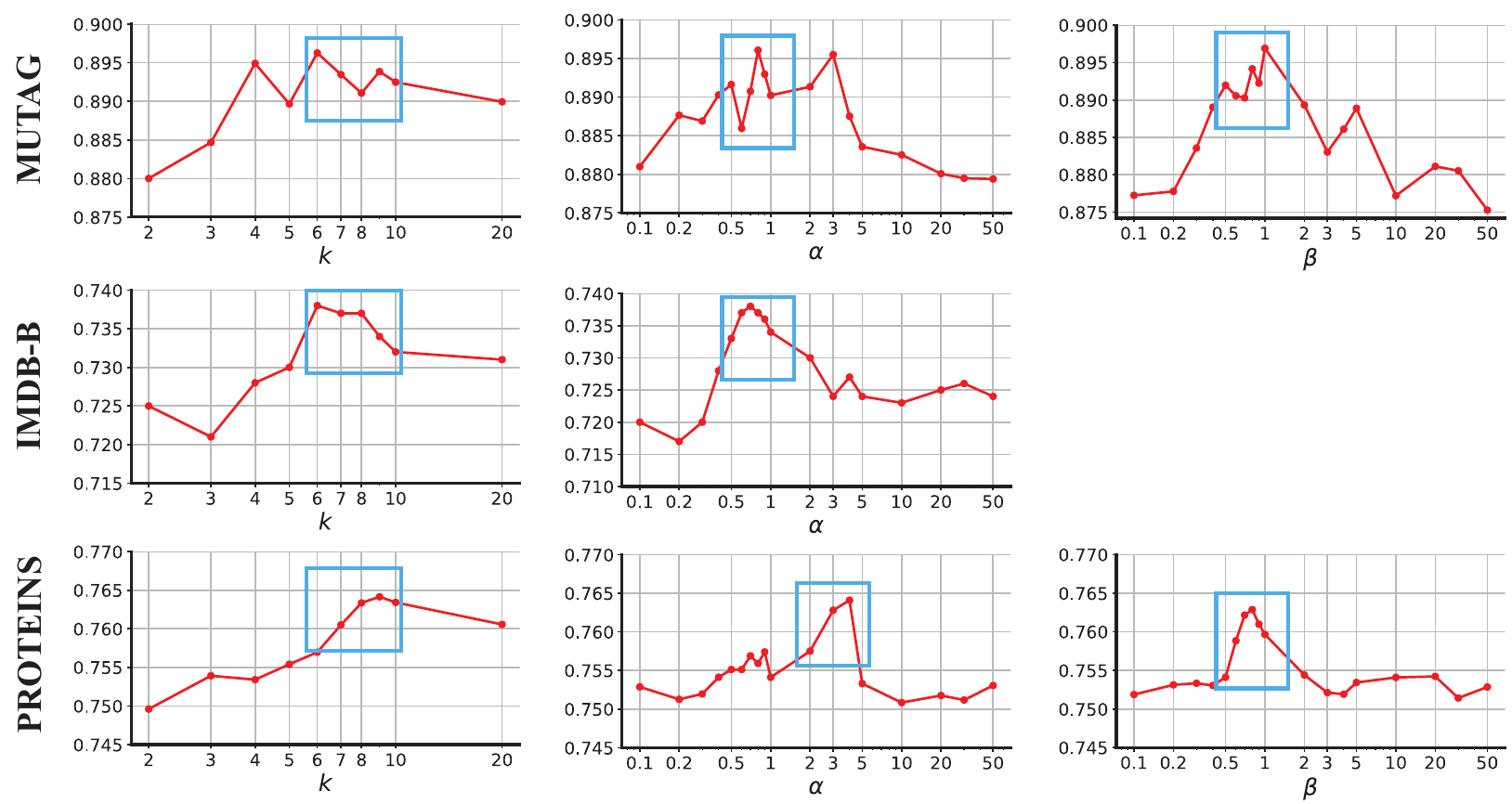}
    \caption{The parameter sensitivity involves the selected number of spectrum $K$, the balance weight of the topological constraint $\boldsymbol{\alpha}$, and the balance weight of the feature constraint $\boldsymbol{\beta}$.}
    \label{fig:parameter_sens}
\end{figure}

\paragraph{Effect of $\boldsymbol{\alpha}$}
Fixing $\boldsymbol{\beta}=1.0$ and $K=6$ unchanged,
We vary $\boldsymbol{\alpha}$ from $0.1$ to $50$ with a step of 0.1 (from $0.1$ to $1$), and 1 (from $1$ to $5$), and 10 (from $10$ to $30$) to evaluate its performance. As shown in Figure~\ref{fig:parameter_sens} (the second column),
CI-GCL demonstrates significant improvement when the weight of the topological constraint increases from $0.1$ to $3$. We attribute this improvement to the weight of $\mathcal{L}_{\text{TA}}$ controlling the rate of community invariance. However, if we further increase the value, we observe a decreasing trend in performance. This may occur because the topological community constraint may dominate the overall loss, leading to overfitting of the topological augmentation view. We recommend setting the best parameter for $\alpha$ within the range of ($0.7 \sim 1.0$). For datasets with an average number of nodes larger than 40, we suggest setting $\alpha$ within the range of ($2 \sim 5$).

\paragraph{Effect of $\boldsymbol{\beta}$}
Fixing $\boldsymbol{\alpha}=0.8$ and $K=6$ unchanged,
we vary $\beta$ from $0.1$ to $50$ using the same step range as $\boldsymbol{\alpha}$. 
As shown in Figure~\ref{fig:parameter_sens} (the third column), 
CI-GCL demonstrates significant improvement when the weight of feature-wise constraint increases from $0.1$ to $1.0$. 
We attribute this improvement to the weight of $\mathcal{L}_{FM}$, which controls the selection of important features. 
However, if we further increase the value, we observe a decreasing trend in performance. This may also occur because the feature community constraint may dominate the overall loss, leading to overfitting of the feature augmentation view. We recommend setting the best parameter for $\beta$ within the range of ($0.5 \sim 1.0$).

\subsection{Analysis of Perturbation Strength}

The value of $\psi$ controls the strength of perturbation when generating augmented graphs. 
A larger $\psi$ value indicates that more edges will be either dropped or added. The number of perturbed edges in each graph is restricted to $\psi = \sigma_e \times m$, where $m$ represents the number of edges in the input graph and $\sigma_e$ denotes the perturbation rate. To analyze the impact of perturbation strength, we examine the influence of the perturbation ratio $\sigma_e$ on augmentations. We vary $\sigma_e$ from 0.05 to 0.5 with a step size of 0.05 across datasets NCI1, PROTEINS, DD, and MUTAG. The performance comparison is illustrated in Figure \ref{fig:pert_ratio_change}, conducted within the context of an unsupervised graph classification task.
We recommend selecting the optimal perturbation strength within the range of $0.1$ to $0.3$.

\begin{figure}[htb]
    \centering
    \includegraphics[scale=0.605]{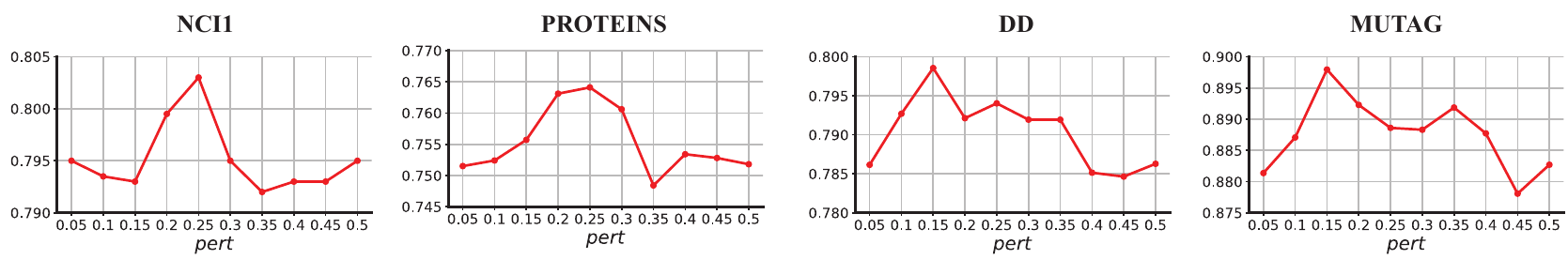}
    \caption{Graph classification performance when tuning perturbation ratio $\sigma_e$.}
    \label{fig:pert_ratio_change}
\end{figure}

\section{More Experimental Results}

\setcounter{figure}{0}
\setcounter{table}{0}
\setcounter{equation}{0}
\renewcommand\theequation{D.\arabic{equation}}
\renewcommand\thefigure{D\arabic{figure}}
\renewcommand\thetable{D\arabic{table}}

\begin{table*}[htb]
\caption{Unsupervised graph classification. We pre-train using the whole dataset to learn graph embeddings and feed them into a downstream SVM classifier with 10-fold cross-validation.}
\setlength{\tabcolsep}{0.52mm}{ 
\begin{tabular}{l|cccccccc|c}
\toprule
\rowcolor{gray!30}
\textbf{Method} &
{\textbf{NCI1}} & {\textbf{PROTEINS}} & {\textbf{DD}} & {\textbf{MUTAG}} & 
{\textbf{COLLAB}} & {\textbf{RDT-B}} & {\textbf{RDT-M5K}} & {\textbf{IMDB-B}} & {\textbf{Avg.}}\\
\specialrule{\arrayrulewidth}{0pt}{\belowrulesep}
\specialrule{\arrayrulewidth}{\abovetopsep}{0pt}
GL   & -  & -                & - & 81.66$\pm$2.1 & - & 77.34$\pm$0.1 & 41.01$\pm$0.1 & 65.87$\pm$0.9 & - \\
WL   & 80.01$\pm$0.5 & 72.92$\pm$0.5 & - & 80.72$\pm$3.0 & - & 68.82$\pm$0.4 & 46.06$\pm$0.2 & 72.30$\pm$3.4 & -  \\
DGK  & 80.31$\pm$0.4 & 73.30$\pm$0.8 & - & 87.44$\pm$2.7 & - & 78.04$\pm$0.4 & 41.27$\pm$0.2 & 66.96$\pm$0.5 & - \\ 
\midrule
node2vec  & 54.89$\pm$1.6 & 57.49$\pm$3.5 & - & 72.63$\pm$10.2 & - & - & - & -  & - \\
sub2vec   & 52.84$\pm$1.5 & 53.03$\pm$5.5 & - & 61.05$\pm$15.8 & - & 71.48$\pm$0.4 & 36.68$\pm$0.42 & 55.26$\pm$1.5 & -  \\
graph2vec & 73.22$\pm$1.8 & 73.30$\pm$2.0 & - & 83.15$\pm$9.2 & - & 75.78$\pm$1.0 & 47.86$\pm$0.26 & 71.10$\pm$0.5 & - \\
\midrule
InfoGraph & 76.20$\pm$1.0 & 74.44$\pm$0.3 & 72.85$\pm$1.7 & 89.01$\pm$1.1 & 70.65$\pm$1.1 & 82.50$\pm$1.4 & 53.46$\pm$1.0 & 73.03$\pm$0.8 & 74.02 \\
GraphCL   & 77.87$\pm$0.4 & 74.39$\pm$0.4 & 78.62$\pm$0.4 & 86.80$\pm$1.3 & 71.36$\pm$1.1 & 89.53$\pm$0.8 & 55.99$\pm$0.3 & 71.14$\pm$0.4 & 75.71 \\
MVGRL     & 76.64$\pm$0.3 & 74.02$\pm$0.3        & 75.20$\pm$0.4 & 75.40$\pm$7.8 & 73.10$\pm$0.6 & 82.00$\pm$1.1 & 51.87$\pm$0.6 & 63.60$\pm$4.2 & 71.48 \\
JOAO      & 78.07$\pm$0.4 & 74.55$\pm$0.4 & 77.32$\pm$0.5 & 87.35$\pm$1.0 & 69.50$\pm$0.3 & 85.29$\pm$1.4 & 55.74$\pm$0.6 &  70.21$\pm$3.0  & 74.75\\
SEGA      &  \underline{79.00$\pm$0.7} & \underline{76.01$\pm$0.4} & 78.76$\pm$0.6 & \textbf{90.21$\pm$0.7} & \underline{74.12$\pm$0.5} &  \underline{90.21$\pm$0.7} & 56.13$\pm$0.3 & \underline{73.58$\pm$0.4} & \underline{77.25} \\
GCS\textcolor{blue}{\ding{101}}       & 77.18$\pm$0.3 & 74.04$\pm$0.4 & 76.28$\pm$0.3 & 88.19$\pm$0.9 & 74.00$\pm$0.4 & 86.50$\pm$0.3 & \underline{56.30$\pm$0.3} & 72.90$\pm$0.5 & 75.64 \\
GCL-SPAN\textcolor{blue}{\ding{101}}  & 75.43$\pm$0.4 & 75.78$\pm$0.4 & \underline{78.78$\pm$0.5} & 85.00$\pm$0.8 & 71.40$\pm$0.5 & 86.50$\pm$0.1 & 54.10$\pm$0.5 & 66.00$\pm$0.7 & 74.12 \\
AD-GCL\textcolor{blue}{\ding{101}}    & 73.38$\pm$0.5 & 73.59$\pm$0.7 & 75.10$\pm$0.4 & 89.70$\pm$1.0 & 72.50$\pm$0.6 & 85.52$\pm$0.8 & 54.91$\pm$0.4 & 71.50$\pm$0.6 & 74.53 \\
AutoGCL\textcolor{blue}{\ding{101}}   & 78.32$\pm$0.5 & 69.73$\pm$0.4 & 75.75$\pm$0.6 & 85.15$\pm$1.1 & 71.40$\pm$0.7 & 86.60$\pm$1.5 & 55.71$\pm$0.2 & 72.00$\pm$0.4 & 74.33 \\
\midrule
\textsc{w/o CI on ALL } & 78.67$\pm$0.9 & 74.89$\pm$0.9 & 78.33$\pm$0.3 & 86.53$\pm$1.8 & -    & -        & -         & - & - \\
\textsc{w/o CI on TA} & 79.98$\pm$0.6 & 75.74$\pm$0.9 & 78.84$\pm$0.3 & 87.65$\pm$0.5 & 73.49$\pm$0.9 & 88.93$\pm$0.5 & 55.36$\pm$0.3 & 70.34$\pm$0.8 & 76.29 \\
\textsc{w/o CI on FA} & 80.02$\pm$0.3 & 75.99$\pm$1.4 & 78.75$\pm$0.8 & 87.56$\pm$1.4 & -      & -         & -         & - & - \\
\textsc{CI-GCL} & \textbf{80.50$\pm$0.5} & \textbf{76.50$\pm$0.1} & \textbf{79.63$\pm$0.3} & 89.67$\pm$0.9 & \textbf{74.40$\pm$0.6} & \textbf{90.80$\pm$0.5} & \textbf{56.57$\pm$0.3} & \textbf{73.85$\pm$0.8} & \textbf{77.74}\\

\specialrule{\arrayrulewidth}{0pt}{\belowbottomsep}
\end{tabular}}
\label{tab:full_results_unsup_acc}
\end{table*}

\subsection{Complete Results for Unsupervised Learning}

\textbf{Baselines.} We compare CI-GCL with kernel-based methods, such as Graphlet Kernel (GL)~\cite{DBLP:journals/jmlr/ShervashidzeVPMB09}, Weisfeiler-Lehman sub-tree kernel (WL)~\cite{DBLP:journals/jmlr/ShervashidzeSLMB11}, and Deep Graph Kernel (DGK)~\cite{DBLP:conf/kdd/YanardagV15}. 
Unsupervised graph learning methods like node2vec~\cite{DBLP:conf/kdd/GroverL16}, sub2vec~\cite{DBLP:conf/pakdd/AdhikariZRP18} and graph2vec~\cite{DBLP:journals/corr/NarayananCVCLJ17}. 
Classic GCL methods, such as  MVGRL~\cite{DBLP:conf/icml/HassaniA20}, InfoGraph~\cite{DBLP:conf/iclr/SunHV020}, GraphCL~\cite{DBLP:conf/nips/YouCSCWS20}, and JOAO~\cite{DBLP:conf/icml/YouCSW21},
SEGA~\cite{DBLP:conf/icml/WuCSL023}.
And learnable GCL methods, such as   
GCS~\cite{DBLP:conf/icml/WeiWBNBF23},
GCL-SPAN~\cite{lin2023spectral}, 
AD-GCL~\cite{suresh2021adversarial}, AutoGCL~\cite{DBLP:conf/aaai/YinWHXZ22}.
The complete results for unsupervised graph classification results are shown in Table \ref{tab:full_results_unsup_acc}.
We also report the complete results on OGB dataset in Table \ref{tab:full_results_unsup_rmse_auc}. 

\begin{table*}[htbp]
\caption{Unsupervised graph regression (measured by RMSE) and classification (measured by ROC-AUC) results on OGB datasets. We pre-train using the whole dataset to learn graph embeddings and feed them into a downstream classifier with scaffold split. Each experiment is performed 10 times and take the average accuracy as a result.
}
\setlength{\tabcolsep}{1.0mm}{ 
\begin{tabular}{l|ccc|ccccc}
\specialrule{0.1em}{\abovetopsep}{0pt}
\rowcolor{gray!30}

{\textbf{Task}}& 
\multicolumn{3}{c|}{\textbf{Regression (RMSE)}}  & \multicolumn{5}{c}{\textbf{Classification (AUC)}} \\
\specialrule{0.1em}{\abovetopsep}{0pt}
\rowcolor{gray!30} 

{\textbf{Method}} &
{\textbf{molesol}} & {\textbf{mollipo}} & {\textbf{molfreesolv}} & {\textbf{molbace}} & 
{\textbf{molbbbp}} & {\textbf{molclintox}} & {\textbf{moltox21}} & {\textbf{molsider}} \\
\specialrule{\arrayrulewidth}{0pt}{\belowrulesep}
\specialrule{\arrayrulewidth}{\abovetopsep}{0pt}
InfoGraph & 1.344$\pm$ 0.18 & 1.005$\pm$0.02 & 10.005$\pm$4.82 & 74.74$\pm$3.60 & 66.33$\pm$2.79 & 64.50$\pm$5.32 & 69.74$\pm$0.57 & 60.54$\pm$0.90 \\
GraphCL   & 1.272$\pm$0.09 & 0.910$\pm$0.02 & 7.679$\pm$2.75 & 74.32$\pm$2.70 & 68.22$\pm$1.89 & 74.92$\pm$4.42 & 72.40$\pm$1.01 & 61.76$\pm$1.11 \\
MVGRL     & 1.433$\pm$0.15 & 0.962$\pm$0.04 & 9.024$\pm$1.98 & 74.20$\pm$2.31 & 67.24$\pm$1.39 & 73.84$\pm$4.25 & 70.48$\pm$0.83 & 61.94$\pm$0.94 \\
JOAO      & 1.285$\pm$0.12 & 0.865$\pm$0.03 & 5.131$\pm$0.72 & 74.43$\pm$1.94 & 67.62$\pm$1.29 & 78.21$\pm$4.12 & 71.83$\pm$0.92 & 62.73$\pm$0.92 \\
GCL-SPAN  & 1.218$\pm$0.05 & \textbf{0.802$\pm$0.02} & \underline{4.531$\pm$0.46} & \underline{76.74$\pm$2.02} & \underline{69.59$\pm$1.34} & \underline{80.28$\pm$2.42} & \underline{72.83$\pm$0.62} & \underline{64.87$\pm$0.88}  \\
AD-GCL    & \underline{1.217$\pm$0.09} & 0.842$\pm$0.03 & 5.150$\pm$0.62 & 76.37$\pm$2.03 & 68.24$\pm$1.47 & \textbf{80.77$\pm$3.92} & 71.42$\pm$0.73 & 63.19$\pm$0.95 \\
\midrule
\textsc{CI-GCL} & \textbf{1.130$\pm$0.13} & \underline{0.816$\pm$0.03} & \textbf{2.873$\pm$0.32} & \textbf{77.26$\pm$1.51} & \textbf{70.70$\pm$0.67} & 78.90$\pm$2.59 & \textbf{73.59$\pm$0.66} & \textbf{65.91$\pm$0.82} \\
\specialrule{\arrayrulewidth}{0pt}{\belowbottomsep}
\end{tabular}}
\label{tab:full_results_unsup_rmse_auc}
\end{table*}

\subsection{Transfer learning setting}

The full results for transfer learning are shown in Table \ref{tab:full_results_transfer}.

\begin{table*}[htb]
\centering
\caption{Transfer Learning is conducted with various manually designed pre-training schemes, 
with different manually designed pre-training schemes, following the pre-training setting and dataset splitting as outlined in \cite{DBLP:conf/iclr/HuLGZLPL20}. Each experiment is repeated 10 times, with mean and standard deviation calculated for the ROC-AUC scores.
}
\renewcommand{\arraystretch}{1.02} 
\setlength{\tabcolsep}{1.1mm}{ 
\begin{tabular}{l|cccccccc|c|c} 
\specialrule{0.1em}{\abovetopsep}{0pt}
\rowcolor{gray!30}
\textbf{Pre-Train} & 
\multicolumn{8}{c|}{\textbf{ZINC 2M}}  & \textbf{PPI-306K} & \\
\specialrule{0.1em}{\abovetopsep}{0pt}
\rowcolor{gray!30}

\textbf{Fine-Tune} &
{\textbf{BBBP}} & {\textbf{Tox21}} & {\textbf{ToxCast}}  & 
{\textbf{SIDER}} & {\textbf{ClinTox}} & {\textbf{MUV}} & {\textbf{HIV}} & {\textbf{BACE}} & {\textbf{PPI}} & \textbf{Avg.} \\

\specialrule{\arrayrulewidth}{0pt}{\belowrulesep}
\specialrule{\arrayrulewidth}{\abovetopsep}{0pt}

No Pre-train    & 65.8$\pm$4.5  & 74.0$\pm$0.8 & 63.4$\pm$0.6 & 57.3$\pm$1.6 & 58.0$\pm$4.4 & 71.8$\pm$2.5 & 75.3$\pm$1.9 & 70.1$\pm$5.4 & 64.8$\pm$1.0 & 66.72 \\
Infomax         & 68.8$\pm$0.8  & 75.3$\pm$0.5 & 62.7$\pm$0.4 & 58.4$\pm$0.8 & 69.9$\pm$3.0 & 75.3$\pm$2.5 & 76.0$\pm$0.7 & 75.9$\pm$1.6 & 64.1$\pm$1.5 & 69.60 \\
EdgePred        & 67.3$\pm$2.4  & 76.0$\pm$0.6 & 64.1$\pm$0.6 & 60.4$\pm$0.7 & 64.1$\pm$3.7 & 74.1$\pm$2.1 &  76.3$\pm$1.0 & 79.9$\pm$0.9 & 65.7$\pm$1.3 & 69.76 \\ 
AttrMasking     & 64.3$\pm$2.8  & 76.7$\pm$0.4 & 64.2$\pm$0.5 & 61.0$\pm$0.7 & 71.8$\pm$4.1 & 74.7$\pm$1.4 & 77.2$\pm$1.1 & 79.3$\pm$1.6 & 65.2$\pm$1.6 & 70.48\\ 
ContextPred     & 68.0$\pm$2.0  & 75.7$\pm$0.7 & 63.9$\pm$0.6 & 60.9$\pm$0.6 & 65.9$\pm$3.8 & 75.8$\pm$1.7 & 77.3$\pm$1.0 & 79.6$\pm$1.2 & 64.4$\pm$1.3 & 70.16\\ 
GraphCL         & 69.7$\pm$0.7  & 73.9$\pm$0.7 & 62.4$\pm$0.6 & 60.5$\pm$0.9 & 76.0$\pm$2.7 & 69.8$\pm$2.7 & 78.5$\pm$1.2 & 75.4$\pm$1.4 & 67.9$\pm$0.9 & 70.45\\ 
MVGRL & 69.0$\pm$0.5  & 74.5$\pm$0.6 & 62.6$\pm$0.5 & 62.2$\pm$0.6 & 77.8$\pm$2.2 & 73.3$\pm$1.4 & 77.1$\pm$0.6 & 77.2$\pm$1.0 & 68.7$\pm$0.7 & 71.37 \\
JOAO            & 70.2$\pm$1.0  & 75.0$\pm$0.3 & 62.9$\pm$0.5 & 60.0$\pm$0.8 & 81.3$\pm$2.5 & 71.7$\pm$1.4 & 76.7$\pm$1.2 & 77.3$\pm$0.5 & 64.4$\pm$1.4 & 71.05\\ 
SEGA & 71.9$\pm$1.1 & 76.7$\pm$0.4 & \underline{65.2$\pm$0.9} &  63.7$\pm$0.3 & \textbf{85.0$\pm$0.9} & \underline{76.6$\pm$2.5} & 77.6$\pm$1.4 & 77.1$\pm$0.5 & 68.7$\pm$0.5 &\underline{73.61} \\
GCS\textcolor{blue}{\ding{101}}   & \underline{72.5$\pm$0.5}  & 74.4$\pm$0.4 & 64.4$\pm$0.2 & 61.9$\pm$0.4 & 66.7$\pm$1.9 & \textbf{77.3$\pm$1.7} &  \underline{78.7$\pm$1.4} & \underline{82.3$\pm$0.3} & \underline{70.3$\pm$0.5} & 72.05 \\
GCL-SPAN        & 70.0$\pm$0.7  & \textbf{78.0$\pm$0.5} & 64.2$\pm$0.4 & \underline{64.7$\pm$0.5} & \underline{80.7$\pm$2.1} & 73.8$\pm$0.9 & 77.8$\pm$0.6 & 79.9$\pm$0.7 & 70.0$\pm$0.8 &73.23 \\
AD-GCL\textcolor{blue}{\ding{101}}           & 67.4$\pm$1.0  & 74.3$\pm$0.7 & 63.5$\pm$0.7 & 60.8$\pm$0.9 & 58.6$\pm$3.4 & 75.4$\pm$1.5 & 75.9$\pm$0.9 & 79.0$\pm$0.8 & 64.2$\pm$1.2  &68.79\\
AutoGCL\textcolor{blue}{\ding{101}}  & {72.0$\pm$0.6} & 75.5$\pm$0.3 & 63.4$\pm$0.4 & 62.5$\pm$0.6 & 79.9$\pm$3.3 &75.8$\pm$1.3 &77.4$\pm$0.6 &76.7$\pm$1.1 & 70.1$\pm$0.8 & 72.59 \\
\midrule
\textbf{CI-GCL}            & \textbf{74.4$\pm$1.9}  & \underline{77.3$\pm$0.9} & \textbf{65.4$\pm$1.5} & \textbf{64.7$\pm$0.3} & 80.5$\pm$1.3 & 76.5$\pm$0.9 & \textbf{80.5$\pm$1.3} & \textbf{84.4$\pm$0.9} & \textbf{72.3$\pm$1.2} & \textbf{75.11} \\ 
\specialrule{\arrayrulewidth}{0pt}{\belowbottomsep}
\end{tabular}
}
\label{tab:full_results_transfer}
\end{table*}

\newpage 

\subsection{Adversarial Attacks Experiment}

The full results are shown in Figure \ref{fig:noise_attack_topo} and Figure \ref{fig:noise_attack_fea}. As the results show, CI-GCL exhibits greater robustness than other baselines against both topology-wise and feature-wise attacks.

\begin{figure}[htb]
	\centering
	\includegraphics[scale=0.6, trim=10 10 10 0]{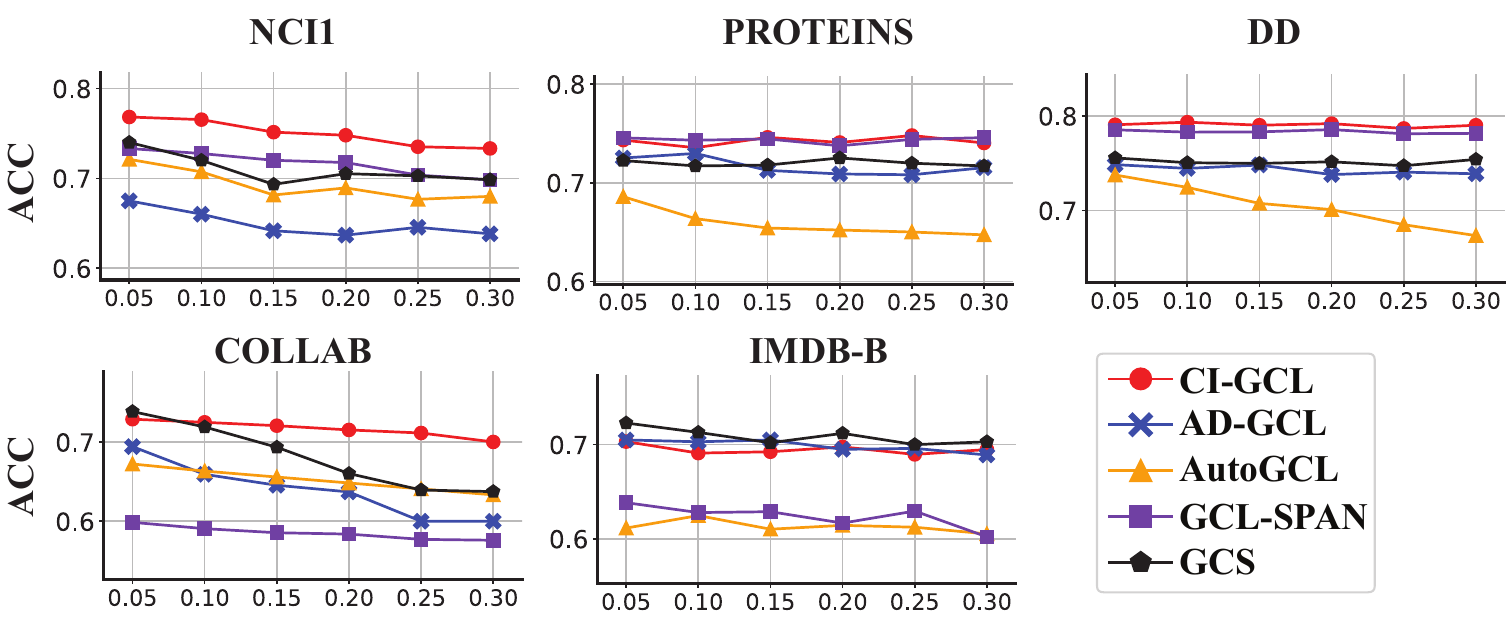}
    \caption{Accuracy under noise attack on graph's topology.}
    \label{fig:noise_attack_topo}
\end{figure}

\begin{figure}[htb]
	\centering
	\includegraphics[scale=0.6, trim=20 0 20 0]{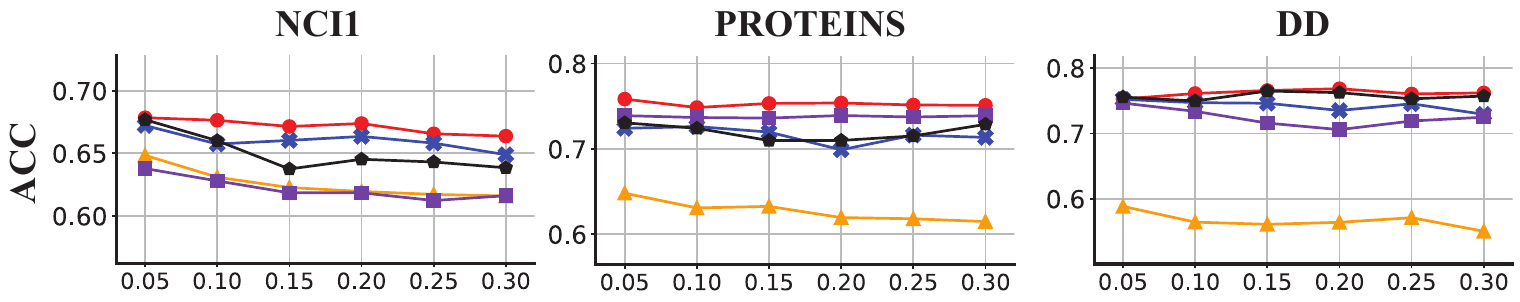}
    \caption{Accuracy under noise attack on graph's attributes.}
    \label{fig:noise_attack_fea}
\end{figure}

\begin{figure}[htb]
    \centering
    \includegraphics[scale=0.58]{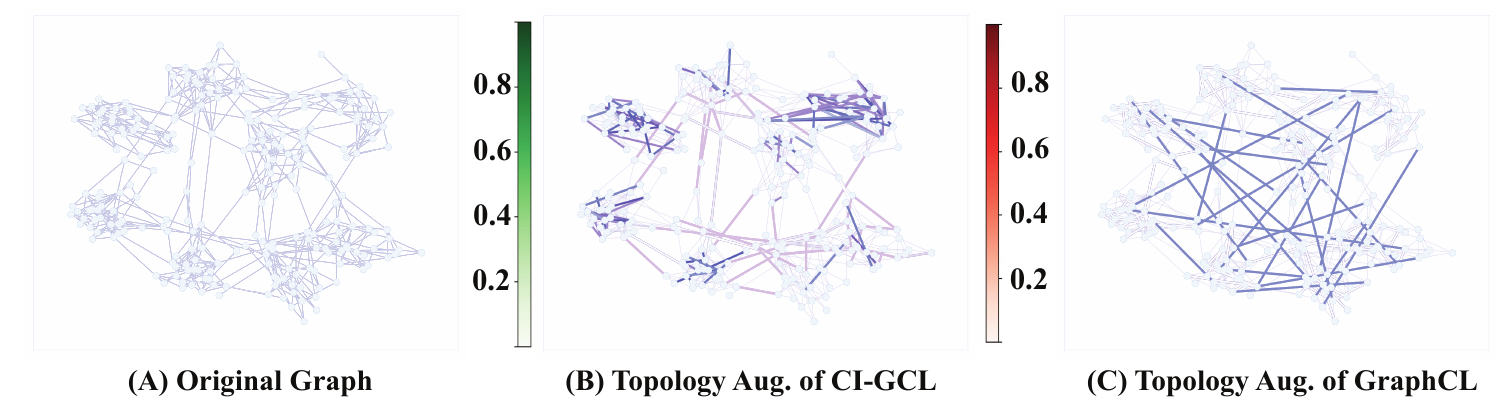}
    \caption{An case study of the community-invariant augmentation on a Random Partition Graph. In (B-C),  green lines represent edge dropping operation and red lines represent edge adding operation.}
    \label{fig:appendix_community}
\end{figure}

\subsection{SPATIAL BEHAVIOR OF SPECTRAL AUGMENTATION SPAN}
The Case Study on Random Partition Graph. 
To intuitively demonstrate the spatial change caused by spectral augmentation, we present a case study on a random geometric graph in Figure \ref{fig:appendix_community}. 
Specifically, Figure \ref{fig:appendix_community}(A) depicts the original graph, while Figure \ref{fig:appendix_community}(B) illustrates the perturbation probability with the community-invariant constraint. 
Figure \ref{fig:appendix_community}(C) demonstrates the perturbation edges of uniform augmentations. 
We observe that with our community-invariant constraint, the cluster structure is preserved after augmentation, which assigns a higher probability to removing edges between clusters and adding edges within clusters.
On the other hand, 
random topological augmentation uniformly removes and adds edges, causing the cluster effect to become blurred.

\section{Algorithms}

\setcounter{figure}{0}
\setcounter{table}{0}
\setcounter{equation}{0}
\renewcommand\theequation{E.\arabic{equation}}
\renewcommand\thefigure{E\arabic{figure}}
\renewcommand\thetable{E\arabic{table}}

Algorithm~\ref{algo_disjdecomp} illustrates the detailed steps of developing CI-ACL. 

\IncMargin{0.8em}
\begin{algorithm}[htp]
\caption{Learnable Graph Contrastive Learning with Community-Invariant Constraint}\label{algo_disjdecomp}
\SetKwData{Left}{left}\SetKwData{This}{this}\SetKwData{Up}{up}\SetKwData{Hyperparameter}{hyperparameter}
\SetKwFunction{Union}{Union}\SetKwFunction{FindCompress}{FindCompress}
\SetKwInOut{Input}{Input}\SetKwInOut{Output}{Output}
\SetKwInOut{Hyperparameters}{Hyperparameters}

\Indm
\Input{Datasets $\{G: G\in \mathcal{G}\}$, initial encoder $f_{\theta}(\cdot)$, readout function $r_{\phi}(\cdot)$, iteration number T.}
\Hyperparameters{Topological constraint coefficient $\boldsymbol{\alpha}$, feature constraint coefficient $\boldsymbol{\beta}$, number of eigenvector $K$.}
\Output{Optimized encoder $f_{\theta}(\cdot)$.}
\BlankLine
\Indp

\For{t $\leftarrow$ 1 to T}{
    \For{Sampled a minibatch of graphs $\{G_n=(\mathbf{A}_n, \mathbf{X}_n): n=1,\dots,N\}$}{
        \For{$n \leftarrow 1$ to $N$}{
            \tcp{\textcolor{blue}{Spectral Decomposition.}}
            $\mathbf{U}$ $\leftarrow$ spectral decomposition($\mathbf{A}_n$)
            
            $\mathbf{F}$ $\leftarrow$ SVD($\mathbf{X}_n$)
            
            \tcp{\textcolor{blue}{Graph Augmentation.}} 
            \For{any one edge $A_{ij}$, the $i$-th node,  and feature $X_{il}$ in $G_n$}{
            $\mathbf{\Delta}^{\text{EP}}_{ij}$ = Gumble-Softmax(MLPs($\text{Concat}\left(\mathbf{U}_{i\cdot}, \mathbf{U}_{j\cdot}\right)$))
            
            $\mathbf{\Delta}^{\text{ND}}_{ij}$ = Gumble-Softmax(MLPs($\mathbf{U}_{i\cdot}$))
            
            $\mathbf{\Delta}^{\text{FM}}_{il}$ = Gumble-Softmax(MLPs($\left[\mathbf{F}_{i\cdot}, \mathbf{F}_{l\cdot}\right]$))
            }
 \tcp{\textcolor{blue}{Sampling Two Augmented Views.}}
$t_{1}^{\text{TA}}(G_{n}) = \mathbf{A} + \mathbf{C} \circ \mathbf{\Delta^{\text{}}},\,\,$ \tcp{\textcolor{blue}{Topology Augmentation.}}

$    t_{2}^{\text{FA}}(G_{n}) = \mathbf{X} + (-\mathbf{X}) \circ \mathbf{\Delta}^{\text{FM}}$ \tcp{\textcolor{blue}{Feature Augmentation.}}

\tcp{\textcolor{blue}{GNNs Encoding Layers.}}

            Encode Topology View:  $\mathbf{z}_{n}^{(1)}=r_{\phi}(f_{\theta}(t_{1}^{\text{TA}}(G_n))$
    
            Encode Feature View: $\mathbf{z}_{n}^{(2)}=r_{\phi}(f_{\theta}(t_{2}^{\text{FA}}(G_n)))$
    
            \tcp{\textcolor{blue}{Compute Community-Invariant Constraint.}}
            Compute $\mathcal{L}_{\text{TA}}(\mathbf{\Delta}^{\text{TA}})$ to measure the spectral change as Eqs. (\ref{over-2},\ref{over-3})
            
            Compute $\mathcal{L}_{\text{FA}}(\mathbf{\Delta}^{\text{FA}})$ to measure the spectral change as Eq. (\ref{over-4})
        }
        \tcp{\textcolor{blue}{Graph Contrastive loss for the minibatch}}
        Compute $\mathcal{L}_{\text{GCL}}=-\frac{1}{N}\sum_{n=1}^{N} \left( 
        \log \frac{\exp (\text{sim}(\mathbf{z}_{n}^{(1)},\mathbf{z}_{n}^{(2)}/\tau_{2} ))}{\sum_{n'=1,n'\neq n}^{N}\exp (\text{sim}(\mathbf{z}_{n}^{(1)},\mathbf{z}_{n'}^{(2)})/\tau_{2})}
        \right)$ 

        \tcp{\textcolor{blue}{Jointly Optimize the Overall Objective Function}}
        Jointly optimize by minimizing $\mathcal{L}_{\text{overall}} = \mathcal{L}_{\text{GCL}}-\boldsymbol{\alpha} \mathcal{L}_{\text{TA}}(\mathbf{\Delta}^{\text{TA}})-\boldsymbol{\beta} \mathcal{L}_{\text{FA}}(\mathbf{\Delta}^{\text{FA}})$.
    }
}

\end{algorithm}\DecMargin{0.8em}

\newpage

\section{Proofs}

\setcounter{figure}{0}
\setcounter{table}{0}
\setcounter{equation}{0}
\setcounter{definition}{0}
\setcounter{lemma}{0}
\setcounter{theorem}{0}
\renewcommand\theequation{F.\arabic{equation}}
\renewcommand\thefigure{F\arabic{figure}}
\renewcommand\thetable{F\arabic{table}}
\renewcommand\thedefinition{F\arabic{definition}}
\renewcommand\thelemma{F\arabic{lemma}}
\renewcommand\thetheorem{F\arabic{theorem}}

\subsection{The Proof of Theorem \ref{the:abslute} in the Draft}

\begin{definition}
    \textit{(Eigenvalue Perturbation)} Assume matrix $\mathbf{A}'$,  the altered portion is represented by $\Delta\mathbf{A}=\mathbf{A}'-\mathbf{A}$, and the changed degree is denoted as $\Delta\mathbf{D}$. According to matrix perturbation theory \cite{hogben2013handbook}, the change in amplitude for the $y$-th eigenvalue can be represented as:
    \begin{equation}
        \label{eq:eigenvalue_perturbation}
        \Delta \lambda_{y}=\lambda_{y}^{\prime}-\lambda_{y}=\mathbf{u}_{y}^{\top} \Delta \mathbf{A} \mathbf{u}_{y}-\lambda_{y} \mathbf{u}_{y}^{\top} \Delta \mathbf{D} \mathbf{u}_{y}+\mathcal{O}(\|\Delta \mathbf{A}\|).
    \end{equation}
    
\end{definition}

\begin{lemma}
    If we only flip one edge $(i,j)$ on adjacency matrix $\mathbf{A}$, the change of $y$-th eigenvalue can be write as 
    \begin{equation}
        \Delta \lambda_{y}=\Delta w_{i j}\left(2 u_{y i} \cdot u_{y j}-\lambda_{y}\left(u_{y i}^{2}+u_{y j}^{2}\right)\right),  
        \label{eq:eigen_value_change_with_one_edge}
    \end{equation}
    where $u_{y i}$ is the $i$-th entry of $y$-th eigenvector $\mathbf{u}_{y}$, and $\Delta w_{i j}=   \left(1-2 A_{i j}\right)$ indicates the edge flip, i.e $\pm 1$. 
    \label{lemma:edgenvalue_pert_one_edge}
\end{lemma}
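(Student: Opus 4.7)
The plan is to specialize the general perturbation formula from Definition~\ref{eigenvalue} to the case where only a single off-diagonal entry $A_{ij}$ (and its symmetric twin $A_{ji}$) is flipped, and then collect terms. The key observation is that for a single edge flip at position $(i,j)$, both the change $\Delta \mathbf{A}$ in the adjacency matrix and the induced change $\Delta \mathbf{D}$ in the degree matrix have extremely sparse support, so the quadratic forms $\mathbf{u}_y^{\top}\Delta \mathbf{A}\mathbf{u}_y$ and $\mathbf{u}_y^{\top}\Delta \mathbf{D}\mathbf{u}_y$ collapse to sums over just a handful of indices.

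Concretely, I would first record the sparsity pattern: because $\mathbf{A}$ is symmetric, flipping edge $(i,j)$ produces $\Delta \mathbf{A}$ whose only non-zero entries are $(\Delta\mathbf{A})_{ij} = (\Delta\mathbf{A})_{ji} = \Delta w_{ij}$, where $\Delta w_{ij} = 1 - 2A_{ij} \in \{+1,-1\}$ encodes whether the edge is added or removed. Correspondingly, the degrees of nodes $i$ and $j$ each shift by $\Delta w_{ij}$, so $\Delta \mathbf{D}$ is diagonal with $(\Delta \mathbf{D})_{ii} = (\Delta \mathbf{D})_{jj} = \Delta w_{ij}$ and all other entries zero.

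Next I would evaluate the two quadratic forms directly. Expanding $\mathbf{u}_y^{\top}\Delta \mathbf{A}\mathbf{u}_y = \sum_{p,q} u_{yp}\,(\Delta\mathbf{A})_{pq}\,u_{yq}$ and keeping only the two surviving terms gives $\Delta w_{ij}(u_{yi} u_{yj} + u_{yj} u_{yi}) = 2\,\Delta w_{ij}\, u_{yi} u_{yj}$. Similarly, $\mathbf{u}_y^{\top}\Delta \mathbf{D}\mathbf{u}_y = \Delta w_{ij}(u_{yi}^{2} + u_{yj}^{2})$. Plugging both into Eq.~\eqref{eq:eigenvalue_perturbation} and dropping the higher-order term $\mathcal{O}(\|\Delta \mathbf{A}\|)$ (valid at first order, since a single edge flip is a rank-two perturbation of unit magnitude) yields precisely
\begin{equation*}
\Delta \lambda_y = 2\,\Delta w_{ij}\, u_{yi} u_{yj} - \lambda_y\, \Delta w_{ij}\,(u_{yi}^{2} + u_{yj}^{2}) = \Delta w_{ij}\bigl(2\, u_{yi} u_{yj} - \lambda_y(u_{yi}^{2} + u_{yj}^{2})\bigr),
\end{equation*}
which is the claimed identity.

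The main subtlety is not the algebra but the justification of the first-order truncation: the generic formula in Definition~\ref{eigenvalue} contains an $\mathcal{O}(\|\Delta \mathbf{A}\|)$ remainder, and one must be explicit that the lemma reports the leading-order approximation. A second, smaller obstacle is matching conventions, namely that the perturbation identity is being applied to the generalized eigenproblem underlying the normalized Laplacian (so $\Delta \mathbf{D}$ enters weighted by $\lambda_y$), which is exactly how Definition~\ref{eigenvalue} is formulated; once this is acknowledged, the derivation is a clean substitution.
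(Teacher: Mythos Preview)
Your proposal is correct and follows essentially the same approach as the paper: identify the sparse support of $\Delta\mathbf{A}$ and $\Delta\mathbf{D}$ induced by a single edge flip, substitute into the first-order perturbation formula~\eqref{eq:eigenvalue_perturbation}, and drop the higher-order remainder. The paper phrases the sparsity via outer products $\Delta\mathbf{A}=\Delta w_{ij}(\mathbf{e}_i\mathbf{e}_j^{\top}+\mathbf{e}_j\mathbf{e}_i^{\top})$ and $\Delta\mathbf{D}=\Delta w_{ij}(\mathbf{e}_i\mathbf{e}_i^{\top}+\mathbf{e}_j\mathbf{e}_j^{\top})$ rather than entrywise, but the computation is otherwise identical.
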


\begin{proof}
    Let $\Delta \mathbf{A}$ be a matrix with only 2 non-zero elements, namely $\Delta A_{i j}=\Delta A_{j i}=1-2 A_{i j}$ corresponding to a single edge flip $(i, j)$, and $\Delta \mathbf{D}$ the respective change in the degree matrix, i.e. $\mathbf{A}' = \mathbf{A} + \Delta \mathbf{A}$ and $\mathbf{D}' = \mathbf{D} + \Delta \mathbf{D}$. 

    Denote with $\mathbf{e}_i$ the vector of all zeros and a single one at position $i$. Then, we have $\Delta\mathbf{A}=\Delta w_{ij}(\mathbf{e}_i\mathbf{e}_j^{\top}+\mathbf{e}_j\mathbf{e}_i^{\top})$ and $\Delta \mathbf{D}=\Delta w_{ij}(\mathbf{e}_i\mathbf{e}_i^{\top}+\mathbf{e}_j\mathbf{e}_j^{\top})$. 

    Based on eigenvalue perturbation formula (\ref{eq:eigenvalue_perturbation}) by removing the high-order term $\mathcal{O}(\|\Delta \mathbf{A}\|)$, we have: 
    \begin{equation}
        \Delta \lambda_{y} \approx \mathbf{u}_{y}^{\top} (\Delta \mathbf{A} -\lambda_{y} \Delta \mathbf{D}) \mathbf{u}_{y}
        \label{eq:edge_perturbation_omit_high-order}
    \end{equation}

    Substituting $\Delta \mathbf{A}$ and $\Delta \mathbf{D}$, we conclude Eq. \ref{eq:eigen_value_change_with_one_edge}. 
\end{proof}

\begin{theorem}
    The constraint on the lowest $k$ eigenvalues of the normalized Laplacian matrix $\mathbf{L}_{\text{norm}}$ ensures the preservation of the community structure of nodes. 
    \label{theorem:community_preserve_and_eigenvalue}
\end{theorem}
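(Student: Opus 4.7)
The plan is to establish a two-step chain: first, that the bottom-$k$ spectrum of $\mathbf{L}_{\text{norm}}$ encodes the community partition via the classical spectral-clustering relaxation; and second, that controlling the change in those eigenvalues controls the change in the induced partition. I would begin by recalling the Shi--Malik / Ng--Jordan--Weiss formulation: by the Courant--Fischer principle, the $k$-dimensional subspace spanned by the eigenvectors $\{\mathbf{u}_1,\dots,\mathbf{u}_k\}$ associated with the $k$ smallest eigenvalues of $\mathbf{L}_{\text{norm}}$ minimizes the Rayleigh quotient $\operatorname{tr}(\mathbf{H}^{\top}\mathbf{L}_{\text{norm}}\mathbf{H})$ over orthonormal $\mathbf{H}\in\mathbb{R}^{n\times k}$, which is the continuous relaxation of the normalized cut. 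Thus the community assignment produced by spectral clustering is a measurable function of the bottom-$k$ eigenpairs.

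Next I would transfer an eigenvalue bound into a subspace bound. Using Lemma~\ref{lemma:edgenvalue_pert_one_edge} together with Weyl's inequality, any admissible edit $\Delta\mathbf{A}$ induces $|\lambda_y' - \lambda_y|\le \|\Delta\mathbf{L}_{\text{norm}}\|_2$ for each $y\le k$, so constraining the change in the lowest $k$ eigenvalues is equivalent (up to constants) to constraining $\|\Delta\mathbf{L}_{\text{norm}}\|_2$ restricted to the bottom-$k$ block. I would then invoke the Davis--Kahan $\sin\Theta$ theorem: whenever a spectral gap $\gamma=\lambda_{k+1}-\lambda_k>0$ is present, the principal angles between the original bottom-$k$ eigenspace $\mathbf{U}_{1:k}$ and the perturbed one $\mathbf{U}'_{1:k}$ satisfy $\|\sin\Theta(\mathbf{U}_{1:k},\mathbf{U}'_{1:k})\|_F \le \|\Delta\mathbf{L}_{\text{norm}}\|_F/\gamma$. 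A standard $k$-means stability argument (e.g., Rohe--Chatterjee--Yu) then converts this subspace perturbation into a bound on the number of mislabeled nodes, giving the desired community-preservation statement.

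The main obstacle is the dependence on the spectral gap $\gamma$: when $\gamma$ is small, Davis--Kahan degrades and the guarantee becomes vacuous. I would address this in two complementary ways. First, I would restrict attention to the well-clustered regime where a higher-order Cheeger inequality~\cite{DBLP:journals/jacm/LeeGT14} lower-bounds $\gamma$ by a function of the $k$-way conductance, so that graphs with meaningful communities automatically enjoy a usable gap. Second, I would argue that the CI constraint itself preserves this gap: dropping inter-cluster edges (large $\|\mathbf{U}_{i\cdot}-\mathbf{U}_{j\cdot}\|_2$) changes the cross-cluster conductance while leaving within-cluster structure, and therefore $\lambda_{k+1}$, nearly intact; symmetric reasoning applies to intra-cluster edge additions. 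Assembling these ingredients yields the theorem. The delicate piece will be making the $k$-means stability step rigorous without introducing unrealistic separation assumptions, which I would handle by stating the conclusion up to a permutation of cluster indices and a mislabeling error term that vanishes with the perturbation budget.
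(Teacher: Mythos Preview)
Your proposal rests on a misreading of what ``the constraint'' means here. You interpret it as \emph{bounding} the change in the lowest $k$ eigenvalues and then run a Weyl/Davis--Kahan/$k$-means-stability pipeline to conclude that small eigenvalue perturbations yield small community perturbations. But the paper's CI constraint is not a smallness constraint: for edge \emph{dropping} it \emph{maximizes} $\sum_y|\Delta\lambda_y|$, and only for edge \emph{adding} does it minimize this quantity (Eq.~\ref{over-2}). Your argument therefore points in the wrong direction for the edge-dropping branch: under the paper's scheme the eigenvalue change is deliberately made large, so a Davis--Kahan bound of the form $\|\sin\Theta\|\le\|\Delta\mathbf{L}_{\text{norm}}\|/\gamma$ gives nothing useful, and $k$-means stability cannot rescue it.

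The paper's proof takes a completely different, edge-local route. Using Lemma~\ref{lemma:edgenvalue_pert_one_edge} it writes $\sum_y|\Delta\lambda_y|$ for flipping a single edge $(i,j)$ and then bounds it above and below by $\|\mathbf{U}_{i\cdot}-\mathbf{U}_{j\cdot}\|^2 \pm \sum_y|\lambda_y-1|$. The key observation is that this bound is governed by the spectral-embedding distance between the two endpoints. Hence \emph{maximizing} spectral change under dropping selects edges with large $\|\mathbf{U}_{i\cdot}-\mathbf{U}_{j\cdot}\|$, i.e.\ inter-cluster edges, and \emph{minimizing} spectral change under adding selects edges with small $\|\mathbf{U}_{i\cdot}-\mathbf{U}_{j\cdot}\|$, i.e.\ intra-cluster edges; both operations preserve communities by the Shi--Malik interpretation (Theorem~\ref{theorem:relation_of_spectral_community}). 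The argument never invokes Davis--Kahan or a spectral gap; the whole mechanism is the $\|\mathbf{U}_{i\cdot}-\mathbf{U}_{j\cdot}\|^{2}$ term appearing in the eigenvalue-perturbation formula. A secondary technical issue in your plan is the claimed equivalence ``constraining the change in the lowest $k$ eigenvalues is equivalent (up to constants) to constraining $\|\Delta\mathbf{L}_{\text{norm}}\|_2$'': Weyl gives only one direction, and small $|\lambda_y'-\lambda_y|$ does not imply small $\|\Delta\mathbf{L}_{\text{norm}}\|$ in general, so even for the edge-adding branch your chain has a gap.
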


\begin{proof}
    Firstly, we separate  $ \Delta \mathbf{A}=\Delta_{\mathbf{A}+}-\Delta_{\mathbf{A}-} $ , where  $ \Delta_{\mathbf{A}+} $  and  $ \Delta_{\mathbf{A}-} $  indicate which edge is added and deleted, respectively. 
    To analyze the change of eigenvalues in spectral space corresponding to the perturbation of edges in spatial space, we first consider the situation that only augments one edge for both edge-dropping ($\Delta w_{i j} = -1$) and edge-adding ($\Delta w_{i j} = 1$):

    
    \textcircled{1}  In the case of edge dropping ($\Delta w_{i j} = -1$ in Lemma \ref{lemma:edgenvalue_pert_one_edge}), we have 
    \begin{align}
    \Delta \lambda_{y}  & = -2 u_{y i} \cdot u_{y j}+\lambda_{y}\left(u_{y i}^{2}+u_{y j}^{2}\right)  \\
     & =\left(u_{y i}-u_{y j}\right)^{2} +(\lambda_{y}-1)\left(u_{y i}^{2}+u_{y j}^{2}\right)
    \end{align}

    If we only drop the edge $(i,j)$ that makes a large change in the eigenvalues. We have the objective function as
    \begin{align}
        \mathop{\arg\max}\limits_{\{i,j|\Delta w_{i j} = -1 \}}
       \sum_{y=1}^{n}  |\Delta \lambda_{y}| &  = | \left(u_{y i}-u_{y j}\right)^{2} +(\lambda_{y}-1)\left(u_{y i}^{2}+u_{y j}^{2}\right)|  \\
       & \leq  \sum_{y=1}^{n}  | \left(u_{y i}-u_{y j}\right)^{2} | + \sum_{y=1}^{n} |\lambda_{y}-1|\left(u_{y i}^{2}+u_{y j}^{2}\right) \\
       &  =  \|\mathbf{U}_{ i\cdot}-\mathbf{U}_{ j\cdot}\|^{2}  + \sum_{y=1}^{n} |\lambda_{y}-1|\left(u_{y i}^{2}+u_{y j}^{2}\right) \\
      & \leq  \|\mathbf{U}_{i\cdot}-\mathbf{U}_{j\cdot }\|^{2} + \sum_{y=1}^{n} |\lambda_{y}-1|\left(u_{y 1}^{2}+u_{y 2}^{2} + \cdots + u_{y n}^{2}\right) \\
      & =   \|\mathbf{U}_{i\cdot}-\mathbf{U}_{ j\cdot}\|^{2} + \sum_{y=1}^{n} |\lambda_{y}-1|
      \label{eq:eigenvalue_change_of_edge_drop}
    \end{align}
    Notice that $\mathbf{u}_y$ is the $y$-th column of $\mathbf{U}$, so $u_{yi}=U_{iy}$ and $\mathbf{U}_{i,\cdot}=[u_{0i}, u_{1i}, \dots, u_{ni}]$. From the first item in Eq. \ref{eq:eigenvalue_change_of_edge_drop}, we prefer to select the nodes with larger distances in the eigenvector spaces, i.e. two nodes belonging to different communities (The relationship between eigenvector and community structure is proved in Theorem \ref{theorem:relation_of_spectral_community}).

    \textcircled{2}  In the case of edge adding ($\Delta w_{i j} = 1$ in Lemma \ref{lemma:edgenvalue_pert_one_edge}), we have 
    \begin{align}
        \Delta \lambda_{y}  & = 2 u_{y i} \cdot u_{y j} - \lambda_{y}\left(u_{y i}^{2}+u_{y j}^{2}\right)  \\
        & =-\left(u_{y i}-u_{y j}\right)^{2} -(\lambda_{y}-1)\left(u_{y i}^{2}+u_{y j}^{2}\right)
    \end{align}

    If we only add the edge $(i, j)$ that makes a small change in the eigenvalues. We have the objective function as
    \begin{align}
        \mathop{\arg\min}\limits_{\{i,j|\Delta w_{i j} =+1 \}}
        \sum_{y=1}^{n}  |\Delta \lambda_{y}| &  = | \left(u_{y i}-u_{y j}\right)^{2} +(\lambda_{y}-1)\left(u_{y i}^{2}+u_{y j}^{2}\right)|  \\
        & \geq  \sum_{y=1}^{n}  | \left(u_{y i}-u_{y j}\right)^{2} | - \sum_{y=1}^{n} |1-(\lambda_{y})|\left(u_{y i}^{2}+u_{y j}^{2}\right) \\
        &  \geq  \|\mathbf{U}_{i\cdot}-\mathbf{U}_{ j\cdot}\|^{2}  - \sum_{y=1}^{n} |(1-\lambda_{y})|\left(u_{y 1}^{2}+u_{y 2}^{2} + \cdots + u_{yn}^{2}\right)\\
        & = \|\mathbf{U}_{ i\cdot}-\mathbf{U}_{ j\cdot}\|^{2}  - \sum_{y=1}^{n} |(1-\lambda_{y})|
        \label{eq:eigenvalue_change_of_edge_adding}
    \end{align}
    From the first item in Eq. \ref{eq:eigenvalue_change_of_edge_adding}, we prefer to select nodes with smaller distances in the eigenvector spaces, i.e. nodes belonging to one community (Theorem \ref{theorem:relation_of_spectral_community}). 

    Previously, we have proven that the constraint on the lowest $k$ eigenvalues of $\mathbf{L}$ ensures the preservation of community structure when we only augment one edge. Next, we will demonstrate that the perturbation of more than one edge still aligns with this theory. 

    Suppose we augment $m$ edges, similar to Lemma \ref{eq:eigen_value_change_with_one_edge}, we replace $\mathbf{\Delta}\mathbf{A}=\sum\limits_{(i,j)\in \{m\text{ edges}\}}\Delta w_{ij}(\mathbf{e}_i\mathbf{e}_j^{\top}+\mathbf{e}_j\mathbf{e}_i^{\top})$, and $\mathbf{\Delta}\mathbf{D}=\sum\limits_{(i,j)\in\{m \text{ edges}\}}\Delta w_{ij}(\mathbf{e}_i\mathbf{e}_i^{\top}+\mathbf{e}_j\mathbf{e}_j^{\top})$, we Substituting $\Delta \mathbf{A}$ and $\Delta \mathbf{D}$ of Eq. (\ref{eq:edge_perturbation_omit_high-order}), we get:
    \begin{equation}
        \Delta \lambda_{y}=\sum\limits_{(i,j)\in\{m \text{ edges}\}}\Delta w_{i j}\left(2 u_{y i} \cdot u_{y j}-\lambda_{y}\left(u_{y i}^{2}+u_{y j}^{2}\right)\right),  
    \end{equation}
    By replacing $\Delta w_{i j}\left(2 u_{y i} \cdot u_{y j}-\lambda_{y}\left(u_{y i}^{2}+u_{y j}^{2}\right)\right)$ with Eqs. (\ref{eq:eigenvalue_change_of_edge_drop}, \ref{eq:eigenvalue_change_of_edge_adding}), we could easily see that the community preserving theory is satisfied. 
\end{proof}

\subsection{Proof of Theorem \ref{the:2}}
\begin{theorem}
    Given a bipartite matrix $\mathbf{B}$, we have the adjacency matrix of the bipartite graph as:
    \begin{equation}
    \mathbf{M} = 
    \begin{bmatrix}
        \mathbf{0} & \mathbf{B} \\
        \mathbf{B}^{T} & \mathbf{0} \\
    \end{bmatrix},
    \end{equation}
    The eigenvalue of matrix $\mathbf{M}$ can be represented as $\lambda = 1-\sigma$, where $\sigma$ is the singular value of $\mathbf{B}$. And the eigenvectors of matrix $\mathbf{M}$ can be represented as the concatenation of singular vectors of $\mathbf{B}$ and $\mathbf{B}^{\top}$ \cite{DBLP:conf/kdd/Dhillon01}.
    \label{theorem:relation_svd_in_bipartite}
\end{theorem}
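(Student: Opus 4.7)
The plan is to exploit the $2\times 2$ block structure of $\mathbf{M}$ to reduce its spectral problem to the singular value decomposition of $\mathbf{B}$. First, I would search for eigenvectors in partitioned form $\mathbf{f} = [\mathbf{u};\mathbf{v}]$, where the blocks are sized compatibly with the two off-diagonal blocks $\mathbf{B}$ and $\mathbf{B}^{\top}$. Substituting into $\mathbf{M}\mathbf{f} = \mu \mathbf{f}$ decouples the eigenvalue equation into the coupled pair $\mathbf{B}\mathbf{v} = \mu \mathbf{u}$ and $\mathbf{B}^{\top}\mathbf{u} = \mu \mathbf{v}$, which are, up to the sign of $\mu$, exactly the defining relations of a singular triplet of $\mathbf{B}$. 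Eliminating $\mathbf{v}$ yields $\mathbf{B}\mathbf{B}^{\top}\mathbf{u} = \mu^{2}\mathbf{u}$, so $|\mu|$ equals a singular value $\sigma$ of $\mathbf{B}$ and $\mathbf{u}$ is the corresponding left singular vector; plugging back into the first equation recovers $\mathbf{v}$ as the matched right singular vector (up to a global sign). This already proves the claimed concatenation structure of the eigenvectors.

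Next, I would lift this correspondence to the normalized-Laplacian framing that the statement anticipates through its role in Theorem~\ref{the:2}. Because the degree matrix of the bipartite lift $\widetilde{\mathbf{X}}$ is block-diagonal with blocks $\mathbf{D}_{u}$ and $\mathbf{D}_{v}$, the symmetric normalization $\mathbf{D}^{-1/2}\widetilde{\mathbf{X}}\mathbf{D}^{-1/2}$ preserves the antidiagonal block form but with $\mathbf{B}$ replaced by $\mathbf{D}_{u}^{-1/2}\mathbf{X}\mathbf{D}_{v}^{-1/2}$. Applying the first step to this normalized block matrix yields eigenvalues $\pm\sigma$ for $\sigma$ ranging over the singular values of $\mathbf{D}_{u}^{-1/2}\mathbf{X}\mathbf{D}_{v}^{-1/2}$, and therefore normalized-Laplacian eigenvalues of the form $1 - (\pm\sigma)$. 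The relation $\lambda = 1 - \sigma$ in the statement then corresponds to selecting the branch that produces the smallest Laplacian eigenvalues, whose eigenvectors are precisely the block concatenations $[\mathbf{U}_{\cdot k};\mathbf{V}_{\cdot k}]$ of matched singular vectors.

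The main obstacle I anticipate is not the linear algebra itself but the bookkeeping around signs and ordering. I need to verify that the map $\sigma \mapsto 1 - \sigma$ reverses the spectrum cleanly so that the $k$-th largest singular value contributes the $k$-th smallest Laplacian eigenvalue, and that the $\pm$ branch in $\mathbf{B}\mathbf{v} = \pm\sigma \mathbf{u}$ is chosen consistently so that the concatenation pairs a left singular vector with its genuinely matched right partner rather than with an unrelated one. I would handle this by explicitly writing the full Laplacian spectrum as the multiset $\{1 - \sigma_i,\, 1+\sigma_i\}$ with $\sigma_i$ arranged in decreasing order, and then reading off the first $\min(n,d)$ eigenpairs to match the indexing convention used in Theorem~\ref{the:2}.
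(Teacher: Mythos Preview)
Your proposal is correct and follows essentially the same route as the paper. The paper works directly with the generalized eigenproblem $\mathbf{L}\mathbf{f}=\lambda\mathbf{D}\mathbf{f}$ in $2\times 2$ block form and, after the substitution $\mathbf{u}=\mathbf{D}_1^{1/2}\mathbf{x}$, $\mathbf{v}=\mathbf{D}_2^{1/2}\mathbf{y}$, reads off the SVD relations for $\mathbf{D}_1^{-1/2}\mathbf{B}\mathbf{D}_2^{-1/2}$ with singular value $1-\lambda$; you instead first derive the $\pm\sigma$ spectrum of the raw antidiagonal block matrix and then push it through the symmetric normalization, but the underlying linear algebra and the resulting identification $\lambda_k=1-\sigma_k$, $\mathbf{f}_k=[\mathbf{u}_k;\mathbf{v}_k]$ are identical.
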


\begin{proof}
    The Laplacian matrix and Degree matrix of $\mathbf{M}$ can be represented as $\mathbf{L}$ and $\mathbf{D}$. We can split these into:
    \begin{equation}
        \mathbf{L} = 
        \begin{bmatrix}
            \mathbf{D}_1 & -\mathbf{B} \\
            -\mathbf{B}^{T} & \mathbf{D}_2 \\
        \end{bmatrix}, \text{ and \ }
        \mathbf{D} = 
        \begin{bmatrix}
            \mathbf{D}_1 & \mathbf{0} \\
            \mathbf{0} & \mathbf{D}_2 \\
        \end{bmatrix},
    \end{equation}
    where $\mathbf{D}_1$, $\mathbf{D}_2$ are diagonal matrices such that $D_1(i,i)=\sum_{j}B_{i,j}$ and $D_2(j,j)=\sum_{i}B_{i,j}$. Thus the generalized eigen problem $\mathbf{L}\mathbf{f}=\lambda\mathbf{D}\mathbf{f}$ may be written as:
    \begin{equation}
        \begin{bmatrix}
            \mathbf{D}_1 & -\mathbf{B} \\
            -\mathbf{B}^{T} & \mathbf{D}_2 \\
        \end{bmatrix}
        \begin{bmatrix}
            \mathbf{x} \\
            \mathbf{y} \\
        \end{bmatrix} = 
        \lambda
        \begin{bmatrix}
            \mathbf{D}_1 & \mathbf{0} \\
            \mathbf{0} & \mathbf{D}_2 \\
        \end{bmatrix}
        \begin{bmatrix}
            \mathbf{x} \\
            \mathbf{y} \\
        \end{bmatrix}
    \end{equation}
    Assuming that both $\mathbf{D}_1$ and $\mathbf{D}_2$ are nonsingular, we can rewrite the above equations as
    \begin{equation}
    \begin{aligned}
        \mathbf{D}_1^{1/2}\mathbf{x} - \mathbf{D}_1^{-1/2}\mathbf{B}\mathbf{y} &= \lambda \mathbf{D}_1^{1/2}\mathbf{x} \\
        -\mathbf{D}_2^{-1/2}\mathbf{B}^{\top}\mathbf{x} + \mathbf{D}_2^{1/2}\mathbf{y} &= \lambda \mathbf{D}_2^{1/2}\mathbf{y}
    \end{aligned}
    \end{equation}
    Letting $\mathbf{u}=\mathbf{D}_1^{1/2}\mathbf{x}$ and $\mathbf{v}=\mathbf{D}_2^{1/2}\mathbf{y}$, we can get 
    \begin{equation}
    \begin{aligned}
    \mathbf{D}_1^{-1/2}\mathbf{B}\mathbf{D}_2^{-1/2}\mathbf{v} &= (1-\lambda)\mathbf{u} \\
    \mathbf{D}_2^{-1/2}\mathbf{B}^{\top}\mathbf{D}_1^{-1/2}\mathbf{u} &= (1-\lambda)\mathbf{v}
    \end{aligned}
    \end{equation}
    These are precisely the equations that define the singular value decomposition (SVD) of the normalized matrix $\mathbf{D}_1^{-1/2}\mathbf{B}\mathbf{D}_2^{-1/2}$. 
    In particular, $\mathbf{u}$ and $\mathbf{v}$ are the left and right singular vectors of $\mathbf{D}_1^{-1/2}\mathbf{B}\mathbf{D}_2^{-1/2}$ respectively, while $(1-\lambda)$ is the corresponding singular value. Thus the $k$-th smallest eigenvalue of $\mathbf{L}_\text{norm}$ equals to $k$-th largest singular value of $\mathbf{D}_1^{-1/2}\mathbf{B}\mathbf{D}_2^{-1/2}$, $\lambda_k = 1-\sigma_k$. 
    Furthermore, the corresponding eigenvector of $\mathbf{L}_{\text{norm}}=\mathbf{D}^{-1/2}\mathbf{L}\mathbf{D}^{-1/2}$ is given by
    \begin{equation}
        \mathbf{f}_k = 
        \begin{bmatrix}
            \mathbf{u}_k \\
            \mathbf{v}_k \\
        \end{bmatrix}
    \end{equation}

    There is a computational benefit on $\mathbf{B}$ that $\mathbf{B}$ is of size $n\times d$, and $\mathbf{L}$ is of size $(n+d)\times (n+d)$. 
    
\end{proof}

\subsection{Proof of Theorem \ref{the:3}}

\begin{theorem}
    The constraint on the highest $k$ singular values of the feature matrix $\mathbf{X}$ ensures the preservation of the bipartite community structure of both nodes and features.
\end{theorem}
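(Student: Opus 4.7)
The plan is to chain together the two preceding theorems in the appendix. Since the statement concerns the feature matrix $\mathbf{X}$ but the community notion is defined on the bipartite adjacency $\widetilde{\mathbf{X}}$, I will transport the problem to $\widetilde{\mathbf{X}}$, apply the spectral community-preservation result there, and then pull everything back to $\mathbf{X}$ via the SVD/eigen correspondence.

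First, instantiate Theorem F2 with $\mathbf{B}=\mathbf{X}$, so that the eigen-pairs of $\mathrm{Lap}(\widetilde{\mathbf{X}})$ are identified with the singular pairs of the normalized feature matrix $\mathbf{D}_u^{-1/2}\mathbf{X}\mathbf{D}_v^{-1/2}$ through the correspondence
\[
\lambda_k \;=\; 1-\sigma_k, \qquad \mathbf{f}_k \;=\; [\mathbf{u}_k;\mathbf{v}_k].
\]
The key consequence is an order-reversing bijection: the $k$ \emph{smallest} eigenvalues of the bipartite Laplacian correspond exactly to the $k$ \emph{largest} singular values of $\mathbf{X}$, with eigenvectors obtained by stacking the left and right singular vectors. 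Thus any statement about preserving the bottom spectrum of $\mathrm{Lap}(\widetilde{\mathbf{X}})$ is equivalent to a statement about preserving the top singular components of $\mathbf{X}$.

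Second, apply Theorem F1 (the community-preservation result) to the bipartite graph associated with $\widetilde{\mathbf{X}}$. Since $\widetilde{\mathbf{X}}$ is a genuine symmetric adjacency matrix on $n+d$ nodes, the argument developed in Theorem F1 --- that controlling the lowest-$k$ eigenvalues of the normalized Laplacian keeps nodes with small spectral distance in the same cluster (preferring to add intra-cluster edges and drop inter-cluster edges) --- applies verbatim. This yields community invariance for the enlarged node set, which in the bipartite setting consists of the original $n$ data nodes plus the $d$ feature nodes.

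Third, translate this bipartite community preservation back to the original sides. Here I would invoke the co-clustering interpretation (Dhillon, 2001; and the Nie et al.\ result cited in the paper): because the $k$-th bipartite eigenvector splits as $\mathbf{f}_k=[\mathbf{u}_k;\mathbf{v}_k]$, a clustering of the rows of $\mathbf{F}$ simultaneously induces a clustering of data nodes (via $\mathbf{U}$) and of feature nodes (via $\mathbf{V}$). Hence preserving the bipartite community structure preserves both the node communities and the associated feature groupings, giving the claim. The main obstacle I anticipate is making the last step rigorous: Theorem F1 is about perturbations of a general symmetric adjacency, but the admissible perturbations of $\widetilde{\mathbf{X}}$ must respect its block off-diagonal structure (one may only mask entries of $\mathbf{X}$, not create spurious node-node or feature-feature edges). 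I would handle this by noting that masking $X_{ij}\mapsto 0$ induces a symmetric perturbation on $\widetilde{\mathbf{X}}$ of precisely the single-edge form used in Lemma F1, so the bound on $\sum_k|\Delta\lambda_k|$ from Theorem 3 in the main text applies, and the community-invariance conclusion transfers without modification.
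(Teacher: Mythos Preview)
Your proposal is correct and follows essentially the same route as the paper: lift to the symmetric bipartite adjacency $\widetilde{\mathbf{X}}$, invoke the edge-dropping case of the topology community-preservation result (Theorem~F1) there, and then use the SVD/eigen correspondence $\lambda_k=1-\sigma_k$ from Theorem~F2 to convert the bottom-$k$ eigenvalue constraint into a top-$k$ singular value constraint on $\mathbf{X}$. The paper additionally writes out the explicit lower bound $\|\mathbf{F}_{i\cdot}-\mathbf{F}_{j\cdot}\|^2-\sum_y|1-\lambda_y|$ for the feature-masking case rather than citing Theorem~F1 as a black box, but this is the same computation you are implicitly invoking; your remark that feature masking induces exactly the single-edge symmetric perturbation of $\widetilde{\mathbf{X}}$ used in Lemma~F1 is a point the paper leaves tacit.
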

\begin{proof}
    Consider we have the feature matrix $\mathbf{X} \in \mathbb{R}^{n \times d}$, intuitively, the common features, containing the community structure, should be preserved as core components and the features that are irrelevant to community structure should be randomly masking. As \cite{DBLP:conf/nips/NieWDH17} studied, we could apply spectral clustering on matrix $\widetilde{\mathbf{X}}$ to obtain the co-clustering of both nodes and features. 
    \begin{equation}
    \widetilde{\mathbf{X}}=\begin{bmatrix}
        0 & \mathbf{X} \\
        \mathbf{X}^{\top} & 0
    \end{bmatrix}   
    \end{equation}
    where $\widetilde{\mathbf{X}} \in \mathbb{R}^{(n+d)\times(n+d)}$. 
    When we perform the eigen decomposition on $\widetilde{\mathbf{X}}$, the first $n$ rows of obtained eigenvectors represent the node embedding, and the rest $d$ rows are representations of features. 
    Similar to the edge-dropping situation in Theorem \ref{theorem:community_preserve_and_eigenvalue}, we could preserve the community structure by maximizing the spectral change of $\widetilde{\mathbf{X}}$. 
    The objective function could be written as
    \begin{align}
     \mathop{\arg\max}\limits_{\{i \in[1,\cdots,n],j \in [n+1,\cdots, n+d]|\Delta P_{i j} = 0 \}}
       \sum_{y=1}^{n+d}  |\Delta \lambda_{y}|  &= | \left(f_{y i}-f_{y j)}\right)^{2} +(\lambda_{y}-1)\left(f_{y i}^{2}+f_{y j}^{2}\right)|,\\
    & \geq  \sum_{y=1}^{n+d}  | \left(f_{y i}-f_{y j}\right)^{2} | - \sum_{y=1}^{n+d} |1-\lambda_{y}|\left(f_{y i}^{2}+f_{y j}^{2}\right) \\
    &  \geq  \|\mathbf{F}_{i \cdot}-\mathbf{F}_{j \cdot}\|^{2}  - \sum_{y=1}^{n+d} |1-\lambda_{y}|\left(f_{y 1}^{2}+f_{y 2}^{2} + \cdots + f_{yn}^{2}\right)\\
    & = \|\mathbf{F}_{i \cdot}-\mathbf{F}_{j \cdot}\|^{2}  - \sum_{y=1}^{n} |1-\lambda_{y}|
    \end{align}
    where $\mathbf{f}_y$ and $\lambda_y$ are the $y$-th eigenvector and eigenvalue of $\widetilde{\mathbf{X}}$'s normalized Laplacian matrix. 
    Similar to edge-dropping, we prefer to mask the feature $j$ for node $i$ when they do not belong to the same community, which means that the feature is irrelevant to the sample. 
    
    According to the relationship between eigenvectors and community structures (proved in Theorem \ref{theorem:relation_of_spectral_community}), we only need to constrain the change of several lowest eigenvalues to preserve community structures. And then we prove that $\lambda_y=1-\sigma_y$ on Theorem \ref{theorem:relation_svd_in_bipartite}, where $\sigma_y$ is the singular value of matrix $\mathbf{D}_1^{-1/2}\mathbf{X}\mathbf{D}_2^{-1/2}$. Therefore, the constraint on the highest $k$ singular values of $\mathbf{D}_1^{-1/2}\mathbf{X}\mathbf{D}_2^{-1/2}$ ensures the preservation of the bipartite community structure of both nodes and features. 
    
\end{proof}

\subsection{The relation between community and spectrum}

\begin{theorem}
    The spectral decomposition can indicate a relaxed solution of graph vertex partition. Given a graph $G$ with a Laplacian matrix $\mathbf{L}$, and the spectral decomposition is indicated as $\mathbf{D}^{-\frac{1}{2}}\mathbf{L}\mathbf{D}^{-\frac{1}{2}} = \mathbf{U}\mathbf{\Lambda} \mathbf{U}^\top$, where $\mathbf{D}$ is the degree matrix and $\mathbf{U}$, $\mathbf{\Lambda}$ are eigenvectors, eigenvalues correspondingly. The second smallest eigenvalue and its corresponding eigenvector indicate a bipartition of the graph.
    \label{theorem:relation_of_spectral_community}
\end{theorem}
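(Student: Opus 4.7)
My plan is to cast graph bipartition as a discrete optimization problem, relax it to a continuous one whose solution is characterized by the Courant--Fischer variational principle, and then identify the resulting optimum with the second smallest eigenpair of the normalized Laplacian. I will follow the classical Shi--Malik normalized cut derivation.

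\paragraph{Step 1: set up the combinatorial objective.}
First I would fix the cost function for a partition $V = A \sqcup B$. The natural balanced criterion is the normalized cut
\begin{equation}
\mathrm{Ncut}(A,B) = \frac{\mathrm{cut}(A,B)}{\mathrm{vol}(A)} + \frac{\mathrm{cut}(A,B)}{\mathrm{vol}(B)},
\end{equation}
where $\mathrm{cut}(A,B) = \sum_{i \in A, j \in B} A_{ij}$ and $\mathrm{vol}(A) = \sum_{i \in A} D_{ii}$. Then I would encode the partition by an indicator vector $\mathbf{y} \in \mathbb{R}^n$ taking only two values, $y_i = 1$ if $i \in A$ and $y_i = -b$ if $i \in B$, with $b = \mathrm{vol}(A)/\mathrm{vol}(B)$. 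This specific two-value encoding is chosen precisely so that $\mathbf{y}^\top \mathbf{D} \mathbf{1} = 0$, i.e.\ the vector is $\mathbf{D}$-orthogonal to the constant vector.

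\paragraph{Step 2: rewrite Ncut as a generalized Rayleigh quotient.}
A direct algebraic expansion, using $\mathbf{L} = \mathbf{D} - \mathbf{A}$ and $\mathbf{y}^\top \mathbf{L} \mathbf{y} = \tfrac{1}{2}\sum_{i,j} A_{ij}(y_i - y_j)^2$, gives
\begin{equation}
\mathrm{Ncut}(A,B) \;=\; \frac{\mathbf{y}^\top \mathbf{L}\,\mathbf{y}}{\mathbf{y}^\top \mathbf{D}\,\mathbf{y}},
\end{equation}
subject to $\mathbf{y}^\top \mathbf{D} \mathbf{1} = 0$ and $y_i \in \{1, -b\}$. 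This is a discrete problem, and it is NP-hard. The relaxation step is to drop the two-value constraint and allow $\mathbf{y} \in \mathbb{R}^n \setminus \{\mathbf{0}\}$, keeping only the $\mathbf{D}$-orthogonality constraint.

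\paragraph{Step 3: reduce to the normalized Laplacian and apply Courant--Fischer.}
Next I would perform the substitution $\mathbf{z} = \mathbf{D}^{1/2}\mathbf{y}$. The relaxed problem becomes
\begin{equation}
\min_{\mathbf{z} \,\perp\, \mathbf{D}^{1/2}\mathbf{1},\; \mathbf{z}\neq\mathbf{0}} \; \frac{\mathbf{z}^\top \mathbf{D}^{-1/2}\mathbf{L}\mathbf{D}^{-1/2}\mathbf{z}}{\mathbf{z}^\top \mathbf{z}}.
\end{equation}
Observe that $\mathbf{D}^{1/2}\mathbf{1}$ is itself an eigenvector of $\mathbf{D}^{-1/2}\mathbf{L}\mathbf{D}^{-1/2}$ with eigenvalue $0$ (since $\mathbf{L}\mathbf{1} = \mathbf{0}$), and this is the smallest eigenvalue because the normalized Laplacian is positive semidefinite. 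By the Courant--Fischer min-max theorem, the minimum of the Rayleigh quotient over the subspace orthogonal to the first eigenvector is exactly the second smallest eigenvalue $\lambda_2$, attained at the corresponding eigenvector $\mathbf{u}_2$. Mapping back via $\mathbf{y} = \mathbf{D}^{-1/2}\mathbf{u}_2$ yields the continuous optimal ``partition vector'', and thresholding its entries produces the desired bipartition.

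\paragraph{Main obstacle.}
The routine algebra in Step~2 is straightforward; the genuine content is Step~3, specifically justifying that $\mathbf{D}^{1/2}\mathbf{1}$ is the unique minimizer (up to scaling) for eigenvalue $0$ on a connected graph, so that $\mathbf{u}_2$ really does give the next variational minimum. I would invoke the standard fact that the multiplicity of $0$ as an eigenvalue of $\mathbf{L}_{\text{norm}}$ equals the number of connected components. The subtler interpretive point is that this only proves the relaxed minimum is $\lambda_2$; the rounding gap between the continuous eigenvector and an actual $\{1,-b\}$-valued partition is not tight in general, but the theorem only claims that the second eigenpair \emph{indicates} a bipartition, which is precisely what the relaxation argument establishes.
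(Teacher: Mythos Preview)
Your proposal is correct and follows essentially the same route as the paper's proof: both recast the normalized cut as the generalized Rayleigh quotient $\mathbf{y}^\top\mathbf{L}\mathbf{y}/\mathbf{y}^\top\mathbf{D}\mathbf{y}$ under the constraint $\mathbf{y}^\top\mathbf{D}\mathbf{1}=0$, substitute $\mathbf{z}=\mathbf{D}^{1/2}\mathbf{y}$ to pass to the normalized Laplacian, observe that $\mathbf{D}^{1/2}\mathbf{1}$ is the eigenvector for eigenvalue $0$, and invoke the Rayleigh--quotient / Courant--Fischer characterization to identify the relaxed optimum with the second smallest eigenpair. The only cosmetic difference is that the paper starts from a $\pm 1$ indicator $\mathbf{x}$ and then defines $\mathbf{y}=(1+\mathbf{x})-b(1-\mathbf{x})$, whereas you introduce the $\{1,-b\}$-valued $\mathbf{y}$ directly; your $b=\mathrm{vol}(A)/\mathrm{vol}(B)$ agrees with the paper's $b=k/(1-k)$.
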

\begin{proof}
    Given a partition of nodes of a graph (split $V$ into two disjoint sets $S_A$ and $S_B$), let $\mathbf{x}$ be an indicator vector for the partition, $x_i=1$ if node $i$ is in $S_A$, and $-1$, otherwise. Let $d_i$ be the degree of node $i$, and $w_{ij}$ is the weight of edge $(i,j)$. 
    Based on \cite{DBLP:conf/cvpr/ShiM97}, a normalized cut could be write as 
    \begin{equation}
        Ncut(S_A,S_B) = \frac{\sum_{x_i>0, x_j<0}-w_{ij}x_ix_j}{\sum_{x_i>0}d_i} + \frac{\sum_{x_i<0, x_j>0}-w_{ij}x_ix_j}{\sum_{x_i>0}d_i}. 
    \end{equation}
    
    The optimal partition is computed by minimizing the normalized cut: $\min_{\mathbf{x}}{Ncut(\mathbf{x})}$. 
    By setting $\mathbf{y}=(1+\mathbf{x})-b(1-\mathbf{x})$, $k=\frac{\sum_{x_i>0}d_i}{\sum_i d_i}$ and $b=\frac{k}{1-k}$, we could rewrite it as 
    \begin{equation}
        \min_{\mathbf{x}}Ncut(\mathbf{x})=\min_{\mathbf{y}}\frac{\mathbf{y}^{\top}\mathbf{L}\mathbf{y}}{\mathbf{y}^{\top}\mathbf{D}\mathbf{y}}.
    \label{eq:Ncut_minimization}
    \end{equation}
    with the condition $y_i\in{1,-b}$, and $\mathbf{y}^{\top}\mathbf{D}\mathbf{1}=0$.

    According to the Rayleigh quotient~\cite{golub2013matrix}, we can minimize Eq.~\ref{eq:Ncut_minimization} by solving the generalized eigenvalue system, if $\mathbf{y}$ is relaxed to take on real values. 
    \begin{equation}
        \mathbf{L}\mathbf{y}=\lambda\mathbf{D}\mathbf{y}
        \label{eq:generalized_eigenvalue}
    \end{equation}
    By writing $\mathbf{z}=\mathbf{D}^{\frac{1}{2}}\mathbf{y}$, we could transform the Eq.~\ref{eq:generalized_eigenvalue} to a standard eigensystem:
    \begin{equation}
        \mathbf{D}^{-\frac{1}{2}}\mathbf{L}\mathbf{D}^{-\frac{1}{2}}\mathbf{z} = \lambda\mathbf{z}
        \label{eq:laplacian_eigen_system}
    \end{equation}
    Because the Laplacian matrix is positive semidefinite~\cite{pothen1990partitioning}, we can easily verify that $\mathbf{z}_0=\mathbf{D}^{\frac{1}{2}}\mathbf{1}$ is the smallest eigenvector of Eq.~\ref{eq:laplacian_eigen_system} with eigenvalue $0$. And correspondingly, $\mathbf{y}_0=\mathbf{1}$ is the smallest eigenvector with an eigenvalue of $0$ in the general eigensystem \ref{eq:generalized_eigenvalue}, but do not satisfy the condition $\mathbf{y}^{\top}\mathbf{D}\mathbf{1}=0$.
    According to the Lemma~\ref{lemma:rayleigh_quotient_fact}, we could know that the second smallest eigenvector $\mathbf{z}_1$ is the solution of Eq.~\ref{eq:Ncut_minimization}, because $\mathbf{z}_1^{\top}\mathbf{z}_0=\mathbf{y}_1^{\top}\mathbf{D}\mathbf{1}=0$ satisfy the condition in Eq.~\ref{eq:Ncut_minimization}.

    Therefore, the second smallest eigenvalue and its corresponding eigenvector indicate a bipartition of the graph.
    While the next cut must be perpendicular to others, which is the third smallest eigenvalue corresponding eigenvector $\mathbf{z}_2^{\top}{\mathbf{z}_1}=\mathbf{z}_2^{\top}\mathbf{z}_0$. Recursively, the $k$-dimensional eigenvectors can represent the community structure of a graph to some extent.
\end{proof}

\begin{lemma}
    A simple fact about the Rayleigh quotient \cite{golub2013matrix}:
    Let $\mathbf{A}$ be a real symmetric matrix. Under the constraint that $\mathbf{x}$ is orthogonal to the $j-1$ smallest eigenvectors $\mathbf{x}_1, \mathbf{x}_2, ..., \mathbf{x}_{j-1}$, the quotient $\frac{\mathbf{x}^{\top}\mathbf{A}\mathbf{x}}{\mathbf{x}^{\top}\mathbf{x}}$ is minimized by the next smallest eigenvector $\mathbf{x}_j$ and its minimum value is the corresponding eigenvalue $\lambda_j$. 
    \label{lemma:rayleigh_quotient_fact}
\end{lemma}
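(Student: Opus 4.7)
The plan is to prove this by diagonalizing $\mathbf{A}$ in its own eigenbasis and bounding the Rayleigh quotient coordinate-by-coordinate. Since $\mathbf{A}$ is real symmetric, the spectral theorem supplies a full orthonormal basis of eigenvectors $\mathbf{x}_1, \ldots, \mathbf{x}_n$ with real eigenvalues $\lambda_1 \le \lambda_2 \le \cdots \le \lambda_n$. My first step is to expand an arbitrary nonzero test vector as $\mathbf{x} = \sum_{i=1}^n c_i \mathbf{x}_i$ with coefficients $c_i = \mathbf{x}_i^\top \mathbf{x}$, which is possible because the eigenvectors form an orthonormal basis of $\mathbb{R}^n$.

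Next, using $\mathbf{A}\mathbf{x}_i = \lambda_i \mathbf{x}_i$ and orthonormality $\mathbf{x}_i^\top \mathbf{x}_k = \delta_{ik}$, I would rewrite both the numerator and denominator of the Rayleigh quotient in the eigenbasis as $\mathbf{x}^\top \mathbf{x} = \sum_{i=1}^n c_i^2$ and $\mathbf{x}^\top \mathbf{A} \mathbf{x} = \sum_{i=1}^n \lambda_i c_i^2$. The orthogonality constraint $\mathbf{x}_i^\top \mathbf{x} = 0$ for $i = 1, \ldots, j-1$ forces $c_1 = c_2 = \cdots = c_{j-1} = 0$, so the Rayleigh quotient reduces to $\sum_{i \ge j} \lambda_i c_i^2 \,/\, \sum_{i \ge j} c_i^2$, which is a convex combination of the eigenvalues $\{\lambda_j, \lambda_{j+1}, \ldots, \lambda_n\}$ with weights $c_i^2 / \sum_{k \ge j} c_k^2$.

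Finally, since $\lambda_i \ge \lambda_j$ for every $i \ge j$, this convex combination is bounded below by $\lambda_j$, giving $\mathbf{x}^\top \mathbf{A} \mathbf{x} / \mathbf{x}^\top \mathbf{x} \ge \lambda_j$ under the stated constraint. Equality is attained when all of the $c_i$-mass is concentrated on indices with $\lambda_i = \lambda_j$; in particular, $\mathbf{x} = \mathbf{x}_j$ achieves the bound, identifying $\mathbf{x}_j$ as a minimizer with minimum value $\lambda_j$. There is essentially no hard step in this argument — the only mild subtlety is that when $\lambda_j$ has multiplicity greater than one, the minimizer is not unique (any nonzero vector in the $\lambda_j$-eigenspace works), but the minimum value is still $\lambda_j$ as claimed, so the statement of the lemma is unaffected.
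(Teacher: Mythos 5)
Your proof is correct. The paper does not actually prove this lemma---it simply states it as a standard fact and cites Golub and Van Loan's \emph{Matrix Computations}---so there is no in-paper argument to compare against. What you have written is precisely the standard spectral-theorem proof found in that reference: expand in the orthonormal eigenbasis, observe that the orthogonality constraint kills the first $j-1$ coefficients, and bound the resulting convex combination of eigenvalues below by $\lambda_j$. Your closing remark about non-uniqueness of the minimizer when $\lambda_j$ is degenerate is an accurate and welcome refinement that does not affect the stated conclusion.
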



\end{document}